\def \E {\mathbb{E}}
\def \x {\mathbf{x}}
\def \g {\mathbf{g}}
\def \z {\mathbf{z}}
\def \u {\mathbf{u}}
\def \H {\mathcal{H}}
\def \w {\mathbf{w}}
\def \O {\mathcal{O}}
\def \R {\mathbb{R}}
\def \DReg {\mathop{\bm{\mathrm{Reg}}}\nolimits_{T}^{\bm{\mathsf{d}}}}
\def \Y {\mathcal{Y}}
\def \P {\mathcal{P}}
\def \W {\mathcal{W}}
\def \N {\mathcal{N}}
\def \v {\mathbf{v}}
\def \d {\mathbf{d}}
\def \a {\mathbf{a}}
\def \B {\mathbf{B}}
\def \C {\mathcal{C}}
\def \Z {\mathcal{Z}}
\def \e {\mathbf{e}}
\def \B {\mathcal{B}}
\def \u {\mathbf{u}}
\def \X {\mathcal{X}}
\def \mix {\textsf{mix}}
\def \sfexp {\textsf{exp}}
\def \ls {\textsf{ls}}
\def \sq {\textsf{sq}}
\def \lr {\textsf{lr}}
\def \insf{\textsf{in}}
\def \osf{\textsf{out}}
\def \flh{\textsf{flh}}
\def \fs {\textsf{fs}}
\DeclareMathOperator*{\argmin}{arg\,min}
\renewcommand{\tilde}{\widetilde}
\let\norm\undefined 
\DeclarePairedDelimiter\norm{\lVert}{\rVert}
\newcommand\inner[2]{\langle #1, #2 \rangle}
\newtheorem{myThm}{Theorem}
\newtheorem{myLemma}{Lemma}
\newtheorem{myProp}{Proposition}
\theoremstyle{definition}
\newtheorem{myAssum}{Assumption}
\newtheorem{myCor}{Corollary}
\newtheorem{myDef}{Definition}
\newtheorem{myRemark}{Remark}
\newtheorem{myExample}{Example}
\renewcommand{\tilde}{\widetilde}
\def \E {\mathbb{E}}
\def \H {\mathcal{H}}
\def \R {\mathbb{R}}
\def \S {\mathcal{S}}
\def \W {\mathcal{W}}
\def \X {\mathcal{X}}
\def \Y {\mathcal{Y}}
\def \x {\mathbf{x}}
\def \epsilon {\varepsilon}
\definecolor{DSgray}{cmyk}{0,1,0,0}
\newcommand{\savehyperref}[2]{\texorpdfstring{\hyperref[#1]{#2}}{#2}}
\newenvironment{proofsketch}[1][Proof Sketch]{%
  \proof[\textit{#1}]%
}{%
  \endproof
}
\def \DReg {\textsc{D-Reg}_T}
\def \Pt {\tilde{P}}
\newcommand\term[1]{\textsc{term}~(\textsc{#1})}
\newcommand*\circled[1]{\tikz[baseline=(char.base)]{
            \node[shape=circle,draw,inner sep=.2pt] (char) {#1};}}
\icmltitlerunning{Non-stationary Online Learning for Curved Loss: Improved Dyanmic Regret via Mixability}
\begin{document}

\twocolumn[
\icmltitle{Non-stationary Online Learning for Curved Losses:\\ Improved Dynamic Regret via Mixability}
\icmlsetsymbol{equal}{*}

\begin{icmlauthorlist}
\icmlauthor{Yu-Jie Zhang}{riken}
\icmlauthor{Peng Zhao}{nju}
\icmlauthor{Masashi Sugiyama}{riken,utokyo}
\end{icmlauthorlist}

\icmlaffiliation{riken}{RIKEN AIP, Japan}
\icmlaffiliation{nju}{National Key Laboratory for Novel Software Technology, Nanjing University, China}
\icmlaffiliation{utokyo}{The University of Tokyo, Japan}

\icmlcorrespondingauthor{Yu-Jie Zhang}{yu-jie.zhang@riken.jp}
\icmlcorrespondingauthor{Peng Zhao}{zhaop@lamda.nju.edu.cn}

\icmlkeywords{Online Learning, Dynamic Regret Minimization, Mixability, Machine Learning, ICML}

\vskip 0.3in
]

\printAffiliationsAndNotice 

\begin{abstract}
Non-stationary online learning has drawn much attention in recent years. Despite considerable progress, dynamic regret minimization has primarily focused on convex functions, leaving the functions with stronger curvature (e.g., squared or logistic loss) underexplored. In this work, we address this gap by showing that the regret can be substantially improved by leveraging the concept of \emph{mixability}, a property that generalizes exp-concavity to effectively capture loss curvature. Let $d$ denote the dimensionality and $P_T$ the path length of comparators that reflects the environmental non-stationarity. We demonstrate that an exponential-weight method with fixed-share updates achieves an $\mathcal{O}(d T^{1/3} P_T^{2/3} \log T)$ dynamic regret for mixable losses, improving upon the best-known $\mathcal{O}(d^{10/3} T^{1/3} P_T^{2/3} \log T)$ result~\citep{COLT'21:baby-strong-convex} in $d$. More importantly, this improvement arises from a simple yet powerful analytical framework that exploits the mixability, which avoids the Karush-Kuhn-Tucker-based analysis required by existing work. 
\end{abstract}


\section{Introduction}
\label{sec:introduction}
Non-stationary online learning, which investigates how to learn from sequences of data in dynamic environments, has attracted considerable attention 
recently~\citep{OR'15:dynamic-function-VT,NIPS'18:Zhang-Ader,ICML'20:Ashok,COLT'21:baby-strong-convex,COLT'21:black-box,NeurIPS'22:efficient,NeurIPS'24:dynamicMDP,JMLR'24:Sword++}. A standard formulation for online learning is the online convex optimization (OCO)~\citep{book'16:Hazan-OCO}, which involves a $T$-round process between the learner and the environment. At each round $t \in [T]:=\{1,\dots,T\}$, the learner submits a prediction $\w_t \in \W$ from a convex set $\W \subseteq \R^d$, while the environment simultaneously generates a convex loss function $f_t: \W \to \R$. Then, the learner incurs a loss of $f_t(\w_t)$ and observes the function $f_t$ for model update. In non-stationary environments, the dynamic regret~\citep{ICML'03:zinkvich}, which compares the learner's prediction with a time-varying benchmark $\{\u_t\}_{t=1}^T$, 
\begin{align} 
    \DReg(\{\u_t\}_{t=1}^T) = \sum_{t=1}^T f_t(\w_t) - \sum_{t=1}^T f_t(\u_t), \label{eq:dynamic-regret} 
\end{align}
is commonly used to evaluate algorithms and guide their design. The dynamic regret~\eqref{eq:dynamic-regret} is usually called the ``universal'' dynamic regret as it holds for all comparator sequences. The universal dynamic regret not only recovers static regret by $\u_t = \u^* \in \argmin_{\w \in \W} \sum_{t=1}^T f_t(\w)$ for all $t \in [T]$, but also provides the flexibility to compare against any time-varying benchmark suited to the environment.

\begin{table*}[!t]
    \centering
    \caption{Comparison on dynamic regret and other issues on typical curved loss functions. * denotes that the method is proper learning when $\W$ is an $L_{\infty}$ ball. $\dag$ indicates that the time complexity can be improved to $\O(\log T)$ with additional logarithmic factors in the regret.}
    \label{table:compare}
    \resizebox{\textwidth}{!}{
     \renewcommand{\arraystretch}{1}   
    \begin{tabular}{>{\centering\arraybackslash}p{3.5cm} 
                    >{\centering\arraybackslash}p{6cm} 
                    >{\raggedright\arraybackslash}p{4cm}   
                    >{\centering\arraybackslash}p{2.6cm}    
                    >{\centering\arraybackslash}p{2.8cm}}    
    \toprule
        \multicolumn{1}{c}{Losses}            & \multicolumn{1}{c}{Method}                                                         & \multicolumn{1}{l}{Regret Bound}                                             & \multicolumn{1}{c}{Proper Learning}   & \multicolumn{1}{c}{Time Complexity}        \\ \midrule
                                              & \citet[Theorem 1]{COLT'21:baby-strong-convex}                                      & $\tilde{\O}(1 + T^{1/3}P_T^{2/3})$                           & Yes             & $\O(T)^\dag$                                           \\ 
        \multirow{-2}{*}{\begin{tabular}[c]{@{}c@{}}1-dim squared loss\\ $ (z-y)^2$\end{tabular}}   & \cellcolor{gray!20} Corollary~\ref{cor:squared-loss}           & \cellcolor{gray!20} $\tilde{\O}(1 + T^{1/3}P_T^{2/3})$       & \cellcolor{gray!20} Yes           & \cellcolor{gray!20} $\O(T)$                         \\ \midrule
                                              & \citet[Theorem 5]{NeurIPS'22:Baby-LQR}         &$\tilde{\O}(d + d^{10/3} T^{1/3}P_T^{2/3})$   & Yes     & $\O(T)^\dag$\\
        \multirow{-2}{*}{\begin{tabular}[c]{@{}c@{}}Least-squares regression\\ $ (\w^\top\x-y)^2$\end{tabular}}  & \cellcolor{gray!20} Corollary~\ref{cor:least-squares-loss}     & \cellcolor{gray!20} $\tilde{\O}(d + d T^{1/3}P_T^{2/3})$     & \cellcolor{gray!20} No                  &   \cellcolor{gray!20} $\O(T)$                \\ \midrule                                
                                              & \citet[Theorem 3.1]{TMLR'23:Baby-contextual-pricing}             & $\tilde{\O}(d + d^{10/3} T^{1/3}P_T^{2/3})$                       & Yes           &$\O(T)$   \\                                            
        \multirow{-2}{*}{\begin{tabular}[c]{@{}c@{}}Logistic regression \\ $\log(1+e^{-y\w^\top \x})$\end{tabular}} & \cellcolor{gray!20} Corollary~\ref{cor:logistic-loss}                & \cellcolor{gray!20} $\tilde{\O}(d + d T^{1/3}P_T^{2/3})$     & \cellcolor{gray!20} No             &       \cellcolor{gray!20} $\mbox{poly}(T)$                  \\ \midrule
                                              & \citet[Theorem 10]{AISTATS'22:sc-proper}                                       & $\tilde{\O}(d + d^{{10/3}}T^{1/3}P_T^{2/3})$                           & No (in general)$^*$             & $\O(T)^\dag$                                           \\ 
        \multirow{-2}{*}{\begin{tabular}[c]{@{}c@{}}General \\ exp-concave loss \end{tabular}}   & \cellcolor{gray!20} Theorem~\ref{thm:main-general-oco}           & \cellcolor{gray!20} $\tilde{\O}(d + dT^{1/3}P_T^{2/3})$       & \cellcolor{gray!20} Yes           & \cellcolor{gray!20} ---                         \\ 
        \bottomrule
    \end{tabular}}
\end{table*}

Over the years, dynamic regret minimization has been extensively studied for convex losses.~\citet{ICML'03:zinkvich} showed that online gradient descent (OGD) with step size $\eta > 0$ achieves a dynamic regret bound of $\O((1+P_T)/\eta + \eta T)$, which adapts to the path length
\begin{align*}
    P_T(\u_1,\dots,\u_T) = \sum_{t=2}^T \norm{\u_t - \u_{t-1}}_2,
\end{align*}
reflecting the non-stationarity of the environments. Once the path length is known, the step size can be optimally tuned to $\eta_* = \Theta(\sqrt{(1+P_T)/T})$. This choice yields the minimax optimal dynamic regret bound of $\O(\sqrt{T(1+P_T)})$. However, given the universality of the comparator sequence, the path length is typically unknown, making optimal tuning infeasible. This highlights a fundamental challenge in dynamic regret minimization: how to handle the uncertainty of non-stationary environments.~\citet{NIPS'18:Zhang-Ader} addressed this issue by proposing a two-layer method that performs a grid search over possible path lengths, achieving the minimax optimal bound without knowledge of $P_T$. Beyond the minimax rate, the path-length based dynamic regret bound has attracted lots of interest for achieving problem-dependent bounds that adapts to the inherent hardness of the learning problem~\citep{NIPS'20:sword,JMLR'24:Sword++}, being parameter-free~\citep{ICML'20:Ashok}, and for unconstrained cases~\citep{ICML'23:Jacobsen-Unbounded,NIPS'23:sparse-coding}.

Despite considerable progress with convex functions, learning with \emph{curved losses} for dynamic regret minimization remains underexplored. In online learning, curvature is often captured by exp-concavity or strong convexity~\citep{MLJ'07:ONS}, which imposes 
specific conditions on the lower bound of the Hessian matrix $\nabla^2 f_t(\w)$ of the loss. Key examples of exp-concave losses include:
\begin{compactitem}
    \item squared loss; $f_t(\w) = (\w^\top\x_t - y_t)^2$; 
    \item logistic loss: $f_t(\w) = \log(1 + \exp(-y_t\w^\top\x_t))$;
\end{compactitem}
Here, $\x_t \in \mathbb{R}^d$ is the feature, and $y_t$ is the label for the classification or regression tasks. For static regret minimization, where the comparator is fixed over the time, the online Newton step (ONS) achieves a logarithmic regret bound of $\mathcal{O}(\frac{d}{\eta} \log T)$ for $\eta$-exp-concave functions, while OGD with step size $\eta = \Theta(1/t)$ ensures $\mathcal{O}(\frac{1}{\lambda}\log T)$ regret for $\alpha$-strongly convex functions~\citep{MLJ'07:ONS}.

However, when it comes to the dynamic regret minimization problem, the task becomes extremely challenging. To our knowledge, even if $P_T$ were known, it would remain unclear how to tune the step size of ONS or OGD  to achieve a faster dynamic regret rate than the convex case. A recent breakthrough was achieved by~\citet{COLT'21:baby-strong-convex} and further extended in~\citet{AISTATS'22:sc-proper,NeurIPS'22:Baby-LQR,TMLR'23:Baby-contextual-pricing}. The key is to introduce an offline optimal reference sequence, the behavior of which can be further characterized through a careful analysis of the KKT condition. They then demonstrate that a strongly adaptive algorithm—one that guarantees low regret over any interval—can effectively track this offline sequence, leading to a tight dynamic regret bound. Specifically, their method achieves a dynamic regret of $\tilde{\O}\big(\max\{d^{\frac{10}{3}}T^{\frac{1}{3}}P_T^{\frac{2}{3}},d\}\big)$ for exp-concave losses.\footnote{\citet{COLT'21:baby-strong-convex,AISTATS'22:sc-proper} defined the path length with the $L_1$ norm. Here, we express their results in terms of the $L_2$ norm by $\norm{\x}_1 \leq \sqrt{d}\norm{\x}_2$ for better comparison.} Besides, a lower bound of $\Omega\big(\max\{d^{\frac{1}{3}}T^{\frac{1}{3}}P_T^{\frac{2}{3}},d \log T\}\big)$ is established for the $d$-dimensional squared loss, which is strongly convex and therefore exp-concave. While the upper bound are nearly optimal in terms of $T$ (up to logarithmic factors), the $O(d^{\frac{10}{3}})$ dependence on dimensionality for exp-concave losses exhibits a significant gap compared with the lower bound. More importantly, the intricacy of the KKT-based analysis makes it unclear how to extend their framework to achieve improved dependence on the dimensionality $d$ or to extend it for general results, such as obtaining problem-dependent bounds as the convex cases.

\textbf{Our Results.} This work offers a new perspective on achieving fast rates in dynamic regret minimization with curved losses by leveraging \emph{mixability}~\citep{02:vovk-mixability,NeurIPS'21:mixabiilty-statistical-learning} of the loss function, a concept closely related to exp-concavity and fundamental in fast-rate static regret minimization. Our method is free from the involved KKT-based analysis and enjoys improved regret on several curved loss functions with better dependence on the dimensionality $d$.

In Section~\ref{sec:generic_framework} and Section~\ref{sec:applications}, we begin with the online prediction problem, which is a specific kind of OCO problem with the form $f_t(\w) = \ell(h(\w_t,\x_t),y_t)$, where $h$ is a predictive function and $\ell$ is a certain loss. When the loss $\ell$ is mixable (see Definition~\ref{def:mixability}) and improper learning is allowed, we show that a continuous variant of Vovk’s aggregating algorithm~\citep{02:vovk-mixability} with fixed-share updates attains a nearly optimal dynamic regret bound of $\tilde{\O}\big(\max\{dT^{\frac{1}{3}}P_T^{\frac{2}{3}}, d\}\big)$. 

As shown in Table~\ref{table:compare}, our approach covers several important curved losses. For the 1-dimensional squared loss, it matches the dynamic regret bound of~\citet{COLT'21:baby-strong-convex} under \emph{proper learning}, where the learner's predictions always lie within the feasible decision set. It also extends to the least-squares and logistic regression, two key examples of exp-concave online learning. Since both losses are also mixable, our method improves upon the best-known results~\citep{NeurIPS'22:Baby-LQR,TMLR'23:Baby-contextual-pricing} by achieving a more favorable dependence on the dimensionality~$d$. Importantly, the dynamic regret bound is derived via a relatively simple yet effective decomposition based on the mix loss, which circumvents the need for a technically involved analysis of the KKT conditions. Nevertheless, while our method offers improvements in regret, it relies on improper learning, in contrast to the proper learning methods of~\citet{NeurIPS'22:Baby-LQR} for the least-square regression and~\citet{TMLR'23:Baby-contextual-pricing}, which deals with the generalized linear models.

In Section~\ref{sec:general-OCO}, we further demonstrate the flexibility of our approach by extending it to the general OCO with exp-concave losses. We show that by incorporating an additional projection step, one can achieve the nearly optimal dynamic regret of $\tilde{\O}\big(\max\{dT^{\frac{1}{3}}P_T^{\frac{2}{3}}, d\}\big)$ under proper learning. While the projection introduces non-trivial computational challenges, our result shows that one can attain a nearly optimal bound via proper learning over arbitrary convex and bounded domain for general exp-concave losses, a guarantee previously known only for specific losses such as the squared and logistic losses~\citep{NeurIPS'22:Baby-LQR,TMLR'23:Baby-contextual-pricing}.

Finally, we note that our method is based on the continuous exponential-weight (EW) update, which offers sharper regret guarantees than gradient-based methods but are often computationally more demanding. For the squared losses and least-squares regression, we have narrowed this gap by showing that a practical implementation attains the same $\O(T)$ per-round cost as the follow-the-leading-history (FLH) procedure in \citet[Fig. 2]{COLT'21:baby-strong-convex}. While FLH can be further accelerated to $\mathcal{O}(\log T)$ per round via a geometric covering over the base-learners~\citep{ICML'09:Hazan-adaptive}, extending this speed-up to the EW framework remains open. More generally, EW methods are sometimes surprisingly effective in many challenging online learning problems. However, improving their computational efficiency remains non-trivial, as in the case of bandit convex optimization~\citep{AISTATS'20:BCO-EW,JACM'21:kernel-bandit}.


\section{Preliminaries}
\label{sec:preliminaries}
This section introduces mixability~\citep{02:vovk-mixability}, a property of losses that ensures fast rates in both statistical and online learning. The concept was initially studied in the prediction with expert advice setting and was shown to yield a constant static regret bound~\citep{Vovk'98:mixabiilty-PEA}. Later, it has also proved useful for static regret minimization in online optimization~\citep{02:vovk-mixability,COLT'18:EW-many-faces} and for excess risk minimization in statistical learning~\citep{NeurIPS'21:mixabiilty-statistical-learning}. We introduce the basic concepts here. Readers are referred to~\citet[Chapter 3]{book/Cambridge/cesa2006prediction} and~\citet{JMLR'15:fast-rate} for more details.

\subsection{Mixability and Exp-concavity}
Before introducing mixability, we review the related concept of exp-concavity~\citep{MLJ'07:ONS}.
\begin{myDef}[Exp-concavity] 
\label{thm:exp-concavity} 
Let $\eta>0$. A function $f: \W \to \R$ is $\eta$-exp-concave over $\W$ if the function $e^{-\eta f(\w)}$ is concave for any $\w \in \W$. The condition is equivalent to  
\begin{align} 
    f\left(\w_\sfexp\right) \leq -\frac{1}{\eta} \ln\left(\E_{\w \sim P}\big[e^{-\eta f(\w)}\big]\right), \label{eq:exp-concavity}
\end{align} 
for any distribution $P$ over $\W$ and $\w_\sfexp = \E_{\w \sim P}[\w]$.
\end{myDef}

The mixability~\citep{02:vovk-mixability} is weaker than exp-concavity that only requires the existence of a model $\w_\mix$ that satisfies the inequality~\eqref{eq:exp-concavity} instead of the specific formulation of $\w_\sfexp$.
\begin{myDef}[Mixability]
    \label{def:mixability}
    Let $\eta>0$. A loss function $ f:\W\rightarrow\R$ is $\eta$-mixable if for any distribution $P$ over the $\W$, there \emph{always exists} a mapping $h:P\mapsto \w_\mix$ such that 
    \begin{equation}
      \label{eq:mixability}
      f(\w_\mix) \leq -\frac{1}{\eta}\ln\left(\E_{\w\sim P}\left[e^{-\eta f(\w)}\right]\right). 
    \end{equation}
  \end{myDef}
The coefficient $\eta$ in exp-concavity and mixability essentially characterizes the curvature of the loss function. A larger $\eta$ corresponds to a stronger curvature, which in turn leads to better regret guarantees. Moreover, any $\eta$-mixable loss remains $\eta'$-mixable for any $0 < \eta' \leq \eta$.

A comparison of~\eqref{eq:exp-concavity} and~\eqref{eq:mixability} shows that the key distinction lies in how the prediction is constructed from the distribution $P$. Under exp-concavity, the prediction is explicitly defined as $\w_\sfexp = \E_{\w \sim P}[\w]$, whereas mixability merely requires the existence of some model $\w_\mix$. Therefore, exp-concavity is a special case of mixability; any $\eta$-exp-concave function over $\W$ is at least $\eta$-mixable over $\W$ because the mixability condition~\eqref{eq:mixability} holds with $\w_\mix = \w_\sfexp$. 

\subsection{Examples of Mixable Loss}
\label{sec:example-mixable-loss}
Here are two important examples of mixable loss.

\begin{myExample}[Squared Loss]
\label{example:squared-loss}
For any $y_t\in[-B,B]$, $$f_{\sq}(z,y_t) = (z-y_t)^2$$ is ${1}/(2B^2)$-mixable over $z\in\R$ while it is $1/(2(B+D)^2)$-exp-concave over $z\in[-D,D]$.
\end{myExample}
\begin{myExample}[Logistic Loss]
\label{example:logistic-loss}
For any $y_t\in\{-1,1\}$, $$f_{\lr}(z,y_t) = \log\big(1+\exp(-zy_t)\big)$$ is $1$-mixable over $z\in\R$ while it is $\exp(-D)$-exp-concave over $z\in[-D,D]$.
\end{myExample}

These two losses are two important examples arising in online prediction where the learner sequentially predicts a label $y_t$ based on observed input data~\citep{ICML'08:Vovk-Brier}. To highlight the core ideas of our approach, we initially focus on this supervised setting in Section~\ref{sec:generic_framework} and Section~\ref{sec:applications}. We then extend our method to the general online convex optimization (OCO) framework in Section~\ref{sec:general-OCO}.


\section{Proposed Method for Online Prediction}
\label{sec:generic_framework}
This section focuses on the online prediction problem. We begin by a generic framework for learning with mixable losses in the setting, followed by its theoretical analysis and an equivalent formulation for implementation.

\begin{algorithm}[!t]
  \caption{Fixed-share For Continuous Space}
  \label{alg:hedge-oco}
  \begin{algorithmic}[1]
  \REQUIRE mixability coefficient $\eta$ and parameter $\mu\in[0,1]$.
  \STATE Initialize $\w_0\in\W$ as any point in the domain $\W$ and $\tilde{P}_1 = P_1 = \mathcal{N}(\w_0,I_d)$ as a Gaussian distribution.
  \FOR {$t=1,2,\ldots,T$} 
  \STATE The learner submits the prediction $z_{t}$ that satisfies 
  \begin{equation}
    \label{eq:mix-rule}
   \ell(z_t,y_t) \leq -\frac{1}{\eta}\ln\left( \E_{\u\sim P_{t}}\big[e^{-\eta  f_{t}(\u)}\big]\right),
  \end{equation}
  without having access to the loss function $ f_{t}$.
  \STATE The learner observes the loss function $ f_t$ for time $t$.
  \STATE The learner updates the distributions by
  \begin{align}
    {}&\tilde{P}_{t+1}(\u) \propto P_t(\u)\exp(-\eta f_t(\u)), \forall \u\in\R^d;~\label{eq:step1-EW}\\
    {}& P_{t+1}(\u) = (1-\mu)\tilde{P}_{t+1}(\u) + \mu N_0(\u),~\label{eq:step2-FS}
  \end{align}
 where $N_0 = \mathcal{N}(\w_0,I_d)$ is a Gaussian distribution.
  \ENDFOR
  \end{algorithmic}
  \end{algorithm}

\subsection{Problem Setup}
\label{sec:online-SL}
Online prediction is a special case of OCO, where the learner proceeds with data pairs $(\x_t, y_t)$, with $\x_t \in \mathcal{X}\subset \R^d$ denoting the feature and $y_t \in \R$ the label. At each round $t$, the learner first observes $\x_t$ and then produces a prediction $z_t\in\R$. After that, th learner incurs a loss $\ell(z_t, y_t)$ and subsequently observes the label $y_t$ to update the prediction. 

As a natural extension of prior work on static regret minimization~\citep{JMLR'15:RakhlinST,COLT'22:mixbaility-parameter-free}, our goal is to minimize dynamic regret with respect to a sequence of time-varying models in a given hypothesis space $\H = \{\x \mapsto h(\w, \x) \mid \w \in \W\}$, where $h: \W \times \X \to \Y$ is a fixed predictive model and $\W \subset \mathbb{R}^d$ denotes the parameter space. For instance, in the case of linear prediction, we have $h(\w, \x) = \w^\top \x$. The dynamic regret is then defined as
\begin{align*}
\DReg(\{\u_t\}_{t=1}^T) = \sum_{t=1}^T \ell(z_t,y_t) - \sum_{t=1}^T \ell(h(\u_t,\x_t),y_t),
\end{align*}
where $\u_1,\dots,\u_T\in\W$ are time-varying model parameters. Online prediction be cast as standard OCO by setting $f_t(\u) = \ell(h(\u, \x_t), y_t)$, though we allow improper learning, where the prediction $z_t$ may fall outside the hypothesis space $\H$~\citep[Remark 3.2]{book'14:shalev-understanding-ml}. Besides, we have the following assumptions.
\begin{myAssum}[Bounded Domain]
\label{assum:bounded-domain}
The parameter space $\mathcal{W}\subseteq\mathbb{R}^d$ is convex and compact with diameter at most $D$, i.e., $\lVert\mathbf{w}-\mathbf{w}'\rVert_2\le D$ for all $\mathbf{w},\mathbf{w}'\in\mathcal{W}$.
\end{myAssum}

\begin{myAssum}[Smoothness]
\label{assum:smooth}
The function $f_t(\w) = \ell(h(\w,\x_t),y_t)$ is $\beta$-smooth for any $\w\in\R^d$ and $t\in[T]$. 
\end{myAssum}

\begin{myAssum}[Mixability]
  \label{assum:mixablity}
  The loss function $\ell(z,y)$ is $\eta$-mixable over $z\in\R^d$ for any $y\in\Y$.
\end{myAssum}

\subsection{Generic Framework}
Our method is an exponential-weight method with the fixed-share update over the continuous space. Instead of maintaining a model parameter $\w_t\in\W$ and predict $z_t = h(\w_t,\x_t)$, we maintain a distribution $P_t$ of the model parameter over $\R^d$. The prediction $z_t$ is made based on $P_t$ that satisfies the mixability condition~\eqref{eq:mix-rule}. Under Assumption~\ref{assum:mixablity} and according to the definition of mixability, a predictor satisfying~\eqref{eq:mix-rule} always exists. For the online prediction problem,~\citet[Eq. (11) and (12)]{02:vovk-mixability} or~\citet[Proposition 3.3]{book/Cambridge/cesa2006prediction} provided a general optimization framework for constructing such predictors. In the cases of squared loss and logistic loss, closed-form expressions for the predictor $z_t$ are available. Further details are provided in Section~\ref{sec:applications}.

The update procedure of the distribution $P_t$ is summarized in Algorithm~\ref{alg:hedge-oco}, where we initialize it with a Gaussian distribution $P_1 = \N(\w_0,I_d)$ at the first iteration. At each iteration $t$, the distribution $P_{t+1}$ is obtained by the exponential-weight update~\eqref{eq:step1-EW}, followed by the fixed-share step~\eqref{eq:step2-FS}. Algorithm~\ref{alg:hedge-oco} attains the following dynamic regret bound. Its proof sketch is given in Section~\ref{sec:method-analysis}.

\begin{myThm}
  \label{thm:main}
  Under Assumption~\ref{assum:bounded-domain},~\ref{assum:smooth} and~\ref{assum:mixablity}, for any $\u_t\in\W$, Algorithm~\ref{alg:hedge-oco} with $\mu = 1/T$ ensures
\begin{align*}
 \DReg(\{\u_t\}^T_{t=1}) \leq{} \O\Big(d \log T \cdot(1+ T^{\frac{1}{3}}P_T^{\frac{2}{3}}) \Big),
\end{align*}
where $P_T = \sum_{t=2}^T \Vert \u_t - \u_{t-1}\Vert_2$ is the path length.
\end{myThm}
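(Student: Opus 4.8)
The plan is to follow the classical two-component decomposition of dynamic regret into a ``meta'' term that measures how well the aggregated distribution $P_t$ tracks a reference comparator, and a ``base'' term that measures the discretization/localization error, but to run this entirely through the \emph{mix loss} $m_t(P) := -\frac{1}{\eta}\ln \E_{\u \sim P}[e^{-\eta f_t(\u)}]$ rather than through $f_t$ itself. First I would invoke the mixability condition~\eqref{eq:mix-rule} to write $\sum_t \ell(z_t,y_t) \le \sum_t m_t(P_t)$, so it suffices to bound $\sum_t m_t(P_t) - \sum_t f_t(\u_t)$. The exponential-weight step~\eqref{eq:step1-EW} combined with the fixed-share step~\eqref{eq:step2-FS} admits the standard ``mix-loss telescoping'': for any sequence of reference distributions $\{Q_t\}$, one gets $\sum_t m_t(P_t) - \sum_t \E_{\u \sim Q_t}[f_t(\u)] \le \frac{1}{\eta}\big(\KL{Q_1}{P_1} + \sum_{t=2}^T \KL{Q_t}{(1-\mu)\tilde P_t + \mu N_0} - \text{(entropy terms)}\big)$, which after bounding each KL increment yields a term of the shape $\frac{1}{\eta}\big(\KL{Q_1}{P_1} + S \ln\frac{1}{\mu} + T\mu \cdot O(1)\big)$ where $S$ counts the number of ``switches'' in the reference sequence.

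Next I would choose the reference distributions $Q_t$ to be Gaussians $\N(\v_t, \sigma^2 I_d)$ centered at a \emph{piecewise-constant} coarsening of the comparator path $\u_1,\dots,\u_T$: partition $[T]$ into $K$ blocks so that within each block the comparators move by at most $P_T/K$ in total, and let $\v_t$ be constant on each block. Then $S = K$, each $\KL{Q_t}{N_0}$-type term is $O(d)$ (two isotropic Gaussians with bounded mean separation, controlled by Assumption~\ref{assum:bounded-domain}), and the first-round term $\KL{Q_1}{P_1}$ is likewise $O(d + d\log(1/\sigma^2))$. The localization error — replacing $\E_{\u \sim Q_t}[f_t(\u)]$ by $f_t(\u_t)$ — splits into (i) a within-block drift term bounded via $\beta$-smoothness (Assumption~\ref{assum:smooth}) by $\beta \cdot (\text{diam of block})^2$ summed over the block, i.e.\ $O(\beta P_T^2/K)$ overall after a convexity/telescoping argument, and (ii) a Gaussian-smoothing bias $\E_{\u \sim \N(\v_t,\sigma^2 I)}[f_t(\u)] - f_t(\v_t) \le \frac{\beta \sigma^2 d}{2}$, again by smoothness, contributing $O(\beta \sigma^2 d T)$. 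Collecting everything gives a bound of the form $O\!\big(\frac{d}{\eta}\log\frac{1}{\sigma^2} + \frac{dK}{\eta}\log T + \frac{T\mu}{\eta} + \beta\sigma^2 d T + \beta P_T^2/K\big)$ (after substituting $\ln(1/\mu) = O(\log T)$ from $\mu = 1/T$).

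Finally I would optimize the free parameters. Setting $\sigma^2 = \Theta(1/(\beta T))$ kills the $\beta\sigma^2 dT$ term down to $O(d)$ while contributing only an $O(d\log T)$ factor through $\log(1/\sigma^2)$; then balancing $\frac{dK}{\eta}\log T$ against $\beta P_T^2/K$ gives $K = \Theta\big((\eta\beta/d)^{1/3} P_T^{2/3} T^{1/3}/(\log T)^{1/3}\big)$ — more simply, $K \asymp T^{1/3}P_T^{2/3}$ up to constants depending on $\eta,\beta,d$ — and yields the claimed $O(d\log T \cdot (1 + T^{1/3}P_T^{2/3}))$. One subtlety is that $K$ must be a positive integer and $P_T$ is unknown, so the block partition cannot literally be constructed by the algorithm; this is fine because the partition is used only in the \emph{analysis} (the algorithm is oblivious, depending only on $\mu$), and the bound is then taken as a minimum over all valid $K$. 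The step I expect to be the main obstacle is making the mix-loss telescoping rigorous across the fixed-share step: one must carefully track that $m_t(P_t) = m_t(\tilde P_{t+1}\text{-normalizer ratio})$ and that the KL-divergence ``bonus'' from the fixed-share mixing correctly absorbs the switching cost $K\log(1/\mu)$ — this is where the continuous state space (as opposed to the finite-expert case) requires care that $\KL{Q_t}{(1-\mu)\tilde P_{t+1}+\mu N_0} \le \KL{Q_t}{\mu N_0} = \KL{Q_t}{N_0} + \log(1/\mu)$ when $\v_t = \v_{t-1}$ is replaced by the sharper ``$\le \KL{Q_t}{\tilde P_{t+1}} + \log\frac{1}{1-\mu}$'' to get the non-switching rounds essentially for free.
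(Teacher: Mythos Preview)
Your proposal has a genuine gap in the treatment of the within-block drift. You claim that $\sum_t\big(f_t(\v_t)-f_t(\u_t)\big)$, where $\v_t$ is a piecewise-constant coarsening of $\u_t$, is $O(\beta P_T^2/K)$ ``via $\beta$-smoothness \dots after a convexity/telescoping argument.'' But smoothness only gives
\[
f_t(\v_t)-f_t(\u_t)\;\le\;\langle\nabla f_t(\u_t),\v_t-\u_t\rangle+\tfrac{\beta}{2}\|\v_t-\u_t\|_2^2,
\]
and for \emph{universal} dynamic regret the comparators $\u_t$ are arbitrary points in $\W$, so $\nabla f_t(\u_t)\neq 0$ in general and the linear term neither telescopes nor vanishes. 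There is no bounded-gradient assumption in Theorem~\ref{thm:main}, so this term is not even controlled; and if you imported a Lipschitz bound $G$, you would get $\sum_t G\|\v_t-\u_t\|_2=O(GTP_T/K)$, which after balancing against $\tfrac{dK}{\eta}\log T$ yields only $O(\sqrt{TP_T})$, not $T^{1/3}P_T^{2/3}$. (The second-order bound you have in mind is valid when $\u_t=\arg\min f_t$, i.e.\ in the \emph{restricted} dynamic-regret setting, but that is not what the theorem asserts.)

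The paper's proof avoids this by choosing $Q_t=\mathcal N(\u_t,\sigma^2 I_d)$ centered at the \emph{actual} comparator, not a coarsening. Then the comparator gap is purely second order, $\E_{Q_t}[f_t(\u)]-f_t(\u_t)\le \tfrac{\beta d\sigma^2}{2}$, because $\E_{Q_t}[\u]=\u_t$ annihilates the linear term. The path length instead enters the \emph{mixability-regret} term, through
\[
\sum_{t\ge 2}\int_{\R^d}\big(Q_t(\u)-Q_{t-1}(\u)\big)\ln\tfrac{1}{P_t(\u)}\,\mathrm d\u,
\]
which is controlled by the total-variation distance $\|Q_t-Q_{t-1}\|_1\le\|\u_t-\u_{t-1}\|_2/\sigma$ (Pinsker) multiplied by a bound on $|\ln P_t(\u)|$. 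The latter requires two dedicated lemmas: an upper bound $\ln P_t(\u)\le\tfrac{d}{2}\ln(\eta\beta t/2\pi)$ (from the FLH representation of $P_t$), and a lower bound via the fixed-share floor $P_t\ge\mu N_0$ combined with a Gaussian-integral estimate over the region where $P_t<1$. The outcome is a term of order $\tfrac{1}{\eta}\,d\log T\cdot P_T/\sigma$, and balancing this against $\beta d\sigma^2 T$ with the \emph{virtual} variance $\sigma\asymp(P_T/T)^{1/3}$ gives the $T^{1/3}P_T^{2/3}$ rate. So the free parameter to optimize in the analysis is the Gaussian scale $\sigma$, not a block count $K$; this is precisely what lets the comparator gap stay second order while the path length appears with the right exponent.
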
 

For mixable losses, Theorem~\ref{thm:main} shows an $\tilde{O}(d + dT^{\frac{1}{3}}P_T^{\frac{2}{3}})$ dynamic regret bound, which improves the $\tilde{\O}\big( d + d^{\frac{10}{3}} T^{\frac{1}{3}}P_T^{\frac{2}{3}}\big)$ dynamic regret bound established based on the notion of exp-concavity~\citep{AISTATS'22:sc-proper}, by reducing the dependence on dimensionality from $\O(d^{\frac{10}{3}})$ to $\O(d)$. As will be clear in Section~\ref{sec:applications}, the improvement holds for loss functions for the least-squares regression and the logistic regression, which are both mixable and exp-concave. 

Notably, Algorithm~\ref{alg:hedge-oco} does not require prior knowledge of the path length $P_T$, which quantifies the environmental non-stationarity. At first glance, this might seem surprising since, in the convex setting, existing approaches typically require a two-layer online ensemble to handle the uncertainty of the unknown $P_T$~\citep{JMLR'24:Sword++}. The key distinction lies in the fact that our algorithm maintains a \emph{distribution} $P_t$ rather than a single prediction $\mathbf{w}_t$. This enables the use of a Gaussian comparator $Q_t$ in our analysis, which automatically adapts to non-stationary environments by \emph{virtually} adjusting its variance as shown in Section~\ref{sec:method-analysis}. From an algorithmic perspective, we further show in Section~\ref{sec:equal-FLH} that our fixed-share method can be interpreted as, and is actually algorithmically equivalent to, a two-layer online ensemble.

\subsection{Analysis of Algorithm~\ref{alg:hedge-oco}}
\label{sec:method-analysis}
This part provides a proof sketch for Theorem~\ref{thm:main}, analyzing the dynamic regret of Algorithm~\ref{alg:hedge-oco} by leveraging the notion of mixability. Our analysis avoids the reliance on the technically involved KKT conditions as used in~\citet{COLT'21:baby-strong-convex} and the subsequential works~\citep{AISTATS'22:sc-proper}. An exception is~\citet{NeurIPS'23:CovShift}, who studied the online covariate shift adaptation problem using the logistic loss and analyzed its dynamic regret without requiring KKT conditions. However, their analysis is restricted to a specific kind of comparator, defined as the minimizer of an unknown risk function, and additionally assumes the realizability of the comparators, which is not required by our method.

\vspace{-1mm}
\begin{proofsketch}
Let $P\in\P$ be a probability distribution and $\P$ is a set of all measurable distributions. The core concept of our analysis is the mix loss $m_t:\P\rightarrow \R$
\begin{align*}
  m_t(P) = -\frac{1}{\eta} \ln \left(\E_{\u\sim P}\left[\exp\big(-\eta f_t(\u)\big)\right]\right).
\end{align*}
Let $z_t$ be the final prediction at round $t$. The dynamic regret can be decomposed into three terms:
\begin{align}
\DReg \leq{}& \underbrace{\sum_{t=1}^T  \ell(z_t,y_t) - \sum_{t=1}^T m_t(P_{t})}_{\mathtt{(A)~ mixability~gap}} \label{eq:decomposition}\\
{}&+ \underbrace{\sum_{t=1}^T m_t(P_{t}) - \sum_{t=1}^T \E_{\u\sim Q_t}[ f_t(\u)]}_{\mathtt{(B)~mixability~regret}}\notag\\
{}& + \underbrace{\sum_{t=1}^T \E_{\u\sim Q_t}[ f_t(\u)] - \sum_{t=1}^T  f_t(\u_t)}_{\mathtt{(C)~comparator~gap}}\notag.
\end{align}
 The first term is the mixability gap, which measures the gap between the original loss and the mix loss. The second term is the ``mixability regret'', which is the regret in terms of the mix loss $m_t(P_t)$ and the expected loss over the sequence $\{Q_t\}_{t=1}^T$. The third term is called the ``comparator gap'' as it is the gap between the expected loss and the loss of $\u_t$. 

We note that the distribution $Q_t$ only appears in the analysis and can be arbitrarily chosen to make the bound as tight as possible. Specifically, we choose $Q_t= \N(\u_t,\sigma^2 I_d)$ as the Gaussian distribution with the mean $\u_t$ and covariance $\sigma^2 I_d$, where $\sigma>0$ is a parameter to be specified later.

\textbf{\protect\circled{1} Bounding mixability gap.} By~\eqref{eq:mix-rule} in Algorithm~\ref{alg:hedge-oco}, the mixability gap is non-positive. Therefore, we can upper bound it by $0$ and proceed with the second and third terms.

\textbf{\protect\circled{2} Bounding mixability regret.} We establish the following lemma for the mixability regret when comparing against the time-varying distribution $\{Q_t\}_{t=1}^T$.
\begin{myLemma}
\label{lem:mixability-regret}
Let $Q_t = \N(\u_t,\sigma^2 I_d)$ be a Gaussian distribution. The update rule~\eqref{eq:step1-EW} and~\eqref{eq:step2-FS} with $\mu = 1/T$ ensures
\begin{align*}
 {}& \sum_{t=1}^T m_t(P_{t}) - \sum_{t=1}^T \E_{\u\sim Q_t}[ f_t(\u)]\leq \frac{1}{\eta}{\big(2+\mathrm{KL}(Q_1\Vert P_1) \big)}\\
  {}& \qquad + \frac{1}{\eta} \sum_{t=2}^T \int_{\u\in\R^d} (Q_t(\u) - Q_{t-1}(\u)) \ln \left(\frac{1}{P_t(\u)}\right) \mathrm{d}\u,
\end{align*}
where $\mathrm{KL}(P \Vert Q) = \E_{\u\sim P}[\ln\left(P(\u)/Q(\u)\right)]$.
\end{myLemma}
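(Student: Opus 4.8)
The plan is to analyze the mix loss $m_t(P_t)$ through a standard potential/telescoping argument, treating the fixed-share update as an instance of the Hedge-style aggregation over the continuous space, and then to compare against the arbitrarily chosen Gaussian comparator sequence $\{Q_t\}_{t=1}^T$. First I would establish the one-step inequality relating the mix loss to the change in a log-partition potential. Concretely, for the pure exponential-weight step~\eqref{eq:step1-EW} one has the exact identity
\begin{align*}
  m_t(P_t) = -\frac{1}{\eta}\ln\!\left(\int_{\u} P_t(\u)e^{-\eta f_t(\u)}\diff\u\right),
\end{align*}
so that for any fixed reference distribution $Q$ with density supported where $P_t>0$, a change-of-measure (Donsker--Varadhan / Gibbs variational) bound gives $m_t(P_t) \le \E_{\u\sim Q}[f_t(\u)] + \frac{1}{\eta}\big(\mathrm{KL}(Q\Vert P_t) - \mathrm{KL}(Q\Vert \tilde P_{t+1})\big)$. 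This is the key per-round estimate; summing over $t$ with $Q = Q_t$ produces a telescoping-like sum in which the ``jump'' terms $\mathrm{KL}(Q_t\Vert P_t) - \mathrm{KL}(Q_{t-1}\Vert \tilde P_t)$ appear.

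Next I would control the mismatch introduced by the fixed-share mixing step~\eqref{eq:step2-FS}, namely the passage from $\tilde P_{t+1}$ to $P_{t+1}$. Since $P_{t+1} = (1-\mu)\tilde P_{t+1} + \mu N_0 \ge (1-\mu)\tilde P_{t+1}$ pointwise, we get $\mathrm{KL}(Q_{t+1}\Vert P_{t+1}) \le \mathrm{KL}(Q_{t+1}\Vert (1-\mu)\tilde P_{t+1}) = \mathrm{KL}(Q_{t+1}\Vert \tilde P_{t+1}) + \ln\frac{1}{1-\mu}$, and with $\mu = 1/T$ the accumulated penalty $T\ln\frac{1}{1-1/T}$ is bounded by a constant (absorbed into the ``$2$'' in the statement). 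Assembling, the sum becomes $\sum_t \big(m_t(P_t) - \E_{Q_t}[f_t]\big) \le \frac{1}{\eta}\mathrm{KL}(Q_1\Vert P_1) + \frac{2}{\eta} + \frac{1}{\eta}\sum_{t=2}^T\big(\mathrm{KL}(Q_t\Vert P_t) - \mathrm{KL}(Q_{t-1}\Vert P_t)\big)$, where I have re-indexed so that the residual cross terms pair $Q_t$ against $P_t$. Finally, expanding $\mathrm{KL}(Q_t\Vert P_t) - \mathrm{KL}(Q_{t-1}\Vert P_t) = \int (Q_t(\u) - Q_{t-1}(\u))\ln\frac{1}{P_t(\u)}\diff\u + \big(-H(Q_t) + H(Q_{t-1})\big)$, where $H$ denotes differential entropy; since $Q_t = \N(\u_t,\sigma^2 I_d)$ all have the \emph{same} variance, $H(Q_t) = H(Q_{t-1})$ and the entropy terms vanish, leaving exactly the claimed expression.

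I expect the main obstacle to be rigor around the continuous/measure-theoretic manipulations: justifying that all the KL divergences and integrals are finite (this is where Assumptions~\ref{assum:bounded-domain} and~\ref{assum:smooth} enter, guaranteeing $f_t$ grows at most quadratically so that $\int \tilde P_{t+1}$ is well-defined and $\mathrm{KL}(Q_t\Vert P_t)<\infty$), that the Gibbs variational inequality applies in the continuous setting, and that the telescoping is valid despite $P_t$ not being normalized by a simple constant. A secondary subtlety is that $m_t$ is defined with the exact mix loss of $P_t$ (post-mixing), not $\tilde P_t$, so one must be careful to open up the potential at the $P_t$ stage and correctly route the $\mu$-mixing penalty; the bound $P_{t+1} \ge (1-\mu)\tilde P_{t+1}$ is the clean way to do this. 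Once these analytic points are in place, the algebra is the routine telescoping described above, and the same-variance choice of $Q_t$ is precisely what makes the entropy terms cancel and leaves only the path-length-sensitive integral term that will later be bounded (in the proof of Theorem~\ref{thm:main}) by something like $\frac{1}{\eta}\cdot\frac{1}{\sigma}\sum_t\|\u_t-\u_{t-1}\|_2$ times a $\mathrm{poly}(d,\log T)$ factor.
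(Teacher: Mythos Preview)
Your proposal is correct and follows essentially the same route as the paper's proof: the per-round identity $m_t(P_t) - \E_{Q_t}[f_t] = \frac{1}{\eta}\big(\mathrm{KL}(Q_t\Vert P_t) - \mathrm{KL}(Q_t\Vert \tilde P_{t+1})\big)$ (which is in fact an equality, not just the Donsker--Varadhan inequality, since $\tilde P_{t+1}$ is precisely the Gibbs posterior), the re-indexed telescoping, the pointwise bound $P_t \ge (1-\mu)\tilde P_t$ to absorb the fixed-share cost into $T\ln\frac{1}{1-1/T}\le 2$, and the entropy cancellation from the equal-variance choice of $Q_t$. The only cosmetic difference is that the paper writes the mixing penalty as $\E_{Q_{t-1}}[\ln(\tilde P_t/P_t)]$ rather than as a KL difference, but this is the same computation.
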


Given the Gaussinality of the distribution $Q_t$ and the initial distribution $\tilde{P}_1 = \N(\w_0,I_d)$, several mathematical manipulations presented in Appendix~\ref{sec:proof-main} show that
\begin{align*}
  \mathtt{term~(B)} \lesssim \frac{d \log (\eta T) \cdot P_T /\sigma + d\sigma^2T + d \log (1/\sigma)}{\eta},
\end{align*}
where $\lesssim$ hides constants independent of $d$, $\sigma$, $T$ and $\eta$.

\textbf{\protect\circled{3} Bounding comparator gap.  } Then, we move on to the comparator gap. Owing to the smoothness of $f_t$, we have
\begin{align}
  {}&\E_{\u\sim Q_t}[ f_t(\u) -  f_t(\u_t)]\notag\\
    \leq{}& \E_{\u\sim Q_t}\left[ \inner{\nabla  f_t(\u_t)}{\u-\u_t} + \frac{\beta}{2}\norm{\u-\u_t}^2\right] = \frac{\beta d \sigma^2}{2},\notag
\end{align}
where the equality is by the Gaussinality of the distribution $Q_t$. Therefore, we have
\begin{align}
  \mathtt{term~(C)} = \sum_{t=1}^T \big(\E_{Q_t}[ f_t(\u)] -  f_t(\u_t)\big) \leq \frac{\beta d \sigma^2 T}{2}.  \label{eq:comparator-gap} 
\end{align}
\textbf{Combining the results.} By combining the upper bounds for term~(B) and term~(C), we achieve
  \begin{align*}
  \DReg\lesssim {}& \frac{1}{\eta}\left(\frac{d\log (\eta T) \cdot P_T}{\sigma} + \eta dT\sigma^2 + d \log \Big(\frac{1}{\sigma}\Big)\right).
  \end{align*}
  Since the parameter $\sigma$ only appears in the analysis, we can freely tune it to make the above dynamic regret bound tight. 
\begin{compactitem} 
  \vspace{1mm} 
  \item When $P_T \leq \sqrt{{1}/{T}}$, we choose $\sigma = \sqrt{1/T}$ to obtain $\DReg\leq \O(d\log T)$. \vspace{1mm} 
  \item When $P_T \geq \sqrt{1/T}$, we set $\sigma = \Theta(P_T^{\frac{1}{3}}T^{-\frac{1}{3}})$ to obtain $\DReg\leq \O\Big(d\log T\cdot T^{\frac{1}{3}} \big(1+P_T^{\frac{2}{3}}\big)\Big).$
\end{compactitem}
We complete the proof by combining the two cases.
\end{proofsketch}

\begin{myRemark}[Technical Discussion]
  The mixability-based analysis was first developed for the prediction with expert advice (PEA) problem~\citep{Vovk'98:mixabiilty-PEA}, where the prediction domain $\W$ is a finite set. In the PEA setting, our analysis with fixed-share updates closely relates to that of~\citet{conf/nips/Cesa-BianchiGLS12}, which established a dynamic regret bound for the fixed-share method with convex losses. Although their arguments also apply to exp-concave losses, the later extension~\citep{arXiv'12:mixability-expconcave} only achieved a switching regret bound, a specific case of dynamic regret where the comparator sequence is piecewise stationary. In the OCO setting,~\citet[Lemma~1]{COLT'18:EW-many-faces} analyzed the exponential-weight method but primarily focused on the mixability regret term with respect to an unspecified and fixed $Q$. Technically, our results not only extend prior work with time-varying distributions $Q_t$, but also establish a path-length bound by carefully selecting $Q_t$ and analyzing the fixed-share method in continuous spaces. Our work addresses a gap in the literature by providing dynamic regret guarantees for mixable losses in the OCO setting.
\end{myRemark}

\subsection{An Equivalent Implementation}
\label{sec:equal-FLH}
The implementation of our method involves two key aspects: (i) how to update the distribution $P_t$ and (ii) how to predict $z_t$ based on $P_t$ satisfying~\eqref{eq:mix-rule}. This subsection provides an equivalent update rule for $P_t$, which has a more explicit form for implementation and helps us to better understand the mechanism of the algorithm. The construction of $z_t$ will be discussed in Section~\ref{sec:applications}.

The key observation is that the fixed-share Algorithm~\ref{alg:hedge-oco} with fixed parameter $\mu$ is essentially equivalent to the follow-the-leading-history-type (FLH-type) algorithm~\citep{ICML'09:Hazan-adaptive} as summarized in Algorithm~\ref{alg:flh-ew}. The equivalence was first made in the PEA problem~\citep{JMLR'16:closer-adaptive-regret}, and here we show the results also extend to the continuous space, which leads to a more clear way of updating distribution $P_t$. The FLH-type method is summarized in Algorithm~\ref{alg:flh-ew} and illustrated in Figure~\ref{fig:online-ensemble}. 

Conceptually, our method is essentially an online ensemble method~\citep{JMLR'24:Sword++} consisting of two parts: \vspace{-0.5mm}
\begin{compactitem}
\item \textbf{Multiple Base-learners} run over different time intervals. For each time $i$, we will invoke a new base-learner $\B_i$ with the initial distribution $P_i^{(i)} = \N(\w_0,I_d)$. Then, at the following time $t > i$, the distribution is updated by the exponential-weight update~\eqref{eq:update-rule}.\vspace{1.2mm}
\item \textbf{Meta-learner} assigns a weight to each base-learner to aggregate their predictions. For the new base-learner $\B_{t+1}$, the weight is set to $\mu$ and for the existing base-learners, the weights are updated by the Hedge method over the ``mix loss'' as shown in~\eqref{eq:alg-meta}. The distribution $P_{t+1}$ is obtained by a weighted combination of the distributions of the base-learners. 
\end{compactitem}

The following theorem establishes the equivalence between Algorithm~\ref{alg:hedge-oco} and Algorithm~\ref{alg:flh-ew}.
\begin{myThm} 
  \label{thm:equavalence}
  The sequence of distributions $\{P_t\}_{t=1}^T$ returned by the fixed-share algorithm (Algorithm~\ref{alg:hedge-oco}) is identical to that of the FLH-type algorithm (Algorithm~\ref{alg:flh-ew}) with the same sequence of input loss functions $\{ f_t\}_{t=1}^T$ and the same parameter setting of $\mu$.
  \end{myThm}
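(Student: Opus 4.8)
The plan is to prove the equivalence by induction on $t$, showing that the distribution $P_t$ produced by the fixed-share update (Algorithm~\ref{alg:hedge-oco}) coincides with the weighted mixture $P_t = \sum_{i \le t} p_t^{(i)} P_t^{(i)}$ maintained by the FLH-type method (Algorithm~\ref{alg:flh-ew}). The conceptual reason this works is that the fixed-share mixing step~\eqref{eq:step2-FS} injects a fresh copy of the initial distribution $N_0 = \N(\w_0, I_d)$ with weight $\mu$ at every round, and this is exactly the ``spawn a new base-learner $\B_{t+1}$ with weight $\mu$'' operation on the meta side; meanwhile the exponential-weight reweighting~\eqref{eq:step1-EW} acts identically on each component of the mixture, so it simultaneously advances every base-learner's internal distribution and reweights the meta-distribution by the mix-loss Hedge update. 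This is the continuous-space analogue of the PEA equivalence of~\citet{JMLR'16:closer-adaptive-regret}, so the structure of the argument is known; the work is in checking the bookkeeping carefully over $\R^d$.

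Concretely, I would first fix notation matching Algorithm~\ref{alg:flh-ew}: let $P_t^{(i)}$ denote the distribution of base-learner $\B_i$ at round $t$ (with $P_i^{(i)} = N_0$ and $P_{t+1}^{(i)}(\u) \propto P_t^{(i)}(\u) e^{-\eta f_t(\u)}$ for $t \ge i$), and let $p_t = (p_t^{(i)})_{i \le t}$ be the meta-weights, updated by setting the weight of the newly added learner to $\mu$ and reweighting the rest proportionally to $(1-\mu)$ times their Hedge update on the mix loss, where the mix loss of $\B_i$ at round $t$ is $\exp(-\eta m_t(P_t^{(i)})) = \E_{\u \sim P_t^{(i)}}[e^{-\eta f_t(\u)}]$. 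The claim is $P_t = \sum_{i=1}^t p_t^{(i)} P_t^{(i)}$ as functions on $\R^d$. The base case $t=1$ is immediate since both sides equal $N_0 = \N(\w_0, I_d)$. For the inductive step, assume $P_t = \sum_{i \le t} p_t^{(i)} P_t^{(i)}$. Applying~\eqref{eq:step1-EW}, the unnormalized update is $P_t(\u) e^{-\eta f_t(\u)} = \sum_{i \le t} p_t^{(i)} P_t^{(i)}(\u) e^{-\eta f_t(\u)}$; the normalizing constant of the $i$-th term is precisely the mix-loss factor $\E_{\u \sim P_t^{(i)}}[e^{-\eta f_t(\u)}]$, so after normalization $\tilde P_{t+1} = \sum_{i \le t} \tilde p_t^{(i)} P_{t+1}^{(i)}$ where $\tilde p_t^{(i)}$ are exactly the post-Hedge meta-weights. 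Then the fixed-share step~\eqref{eq:step2-FS} gives $P_{t+1} = (1-\mu)\tilde P_{t+1} + \mu N_0 = \sum_{i \le t} (1-\mu)\tilde p_t^{(i)} P_{t+1}^{(i)} + \mu P_{t+1}^{(t+1)}$, which matches $\sum_{i \le t+1} p_{t+1}^{(i)} P_{t+1}^{(i)}$ by the definition of the meta-update, completing the induction.

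The main obstacle, and the step deserving the most care, is verifying that the normalizing constant appearing in the exponential-weight renormalization really is the mix-loss weight used by the meta-learner, and that all the integrals are well-defined and finite so that ``$\propto$'' in~\eqref{eq:step1-EW} and the Hedge normalization are both legitimate. Here Assumption~\ref{assum:smooth} ($\beta$-smoothness, hence $f_t$ grows at most quadratically) together with the Gaussian tails of $N_0$ guarantee that $\E_{\u \sim P_t^{(i)}}[e^{-\eta f_t(\u)}]$ is a finite positive number and that each $P_t^{(i)}$ remains a genuine probability density — a subtlety absent in the finite PEA setting. Once this integrability is dispatched, the identity $P_t = \sum_{i} p_t^{(i)} P_t^{(i)}$ is a formal consequence of linearity of the exponential-weight map over mixtures, and the theorem follows. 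I would state the equivalence at the level of the distribution sequence $\{P_t\}$; since the prediction $z_t$ in both algorithms is constructed from $P_t$ via the same mixability rule~\eqref{eq:mix-rule}, identical $\{P_t\}$ immediately yields identical predictions and hence identical dynamic regret.
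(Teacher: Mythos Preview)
Your proposal is correct and follows essentially the same approach as the paper: the paper shows that the FLH mixture $\sum_{i}p_t^{(i)}P_t^{(i)}$ satisfies the fixed-share recursion (and shares the same initialization $N_0$), which is logically the same computation you carry out in the inductive step. The key algebraic identity---that applying~\eqref{eq:step1-EW} to a mixture simultaneously advances each $P_t^{(i)}$ via~\eqref{eq:update-rule} and reweights the mixture coefficients via the Hedge update~\eqref{eq:alg-meta}---is identical in both arguments; your extra discussion of integrability of the normalizers is a nice sanity check not made explicit in the paper.
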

\begin{algorithm}[!t]
    \caption{Follow-the-Leading-History}
    \label{alg:flh-ew}
    \begin{algorithmic}[1]
    \REQUIRE mixability coefficient $\eta$ and parameter $\mu\in[0,1]$.
    \STATE Invoke the first base-learner $\B_1$ with the initial decision distribution $P_1^{(1)} = \N(\w_0,I_d)$.
    \STATE Initialize base-learner pool $\H_1 = \{\B_1\}$, set weight $p_1^{(1)} = 1$ and set $P_1(\w) = P_1^{(1)}(\w)$.
    \FOR {$t=1,2,\ldots,T$} 
    \STATE The learner submits the prediction $z_{t}$ that satisfies~\eqref{eq:mix-rule}.
    \STATE The learner observes the loss function $ f_t$ for time $t$.
    \STATE For all base-learner $\B_i\in\H_t$, we update by
    \begin{equation}
      \label{eq:update-rule}
      \begin{aligned}
      {P}^{(i)}_{t+1}(\u) \propto P^{(i)}_t(\u) \exp(-\eta f_t(\u)), \forall \u\in\R^d.
      \end{aligned} 
    \end{equation} 
    \STATE Initialize a new base-learner $\B_{t+1}$ whose decision distribution is $P_{t+1}^{(t+1)} = \N(\w_0,I_d)$.    
    \STATE Update the weight for each base-learner $\B_i\in\H_t$ by
    \begin{align}
      \tilde{p}_{t+1}^{(i)} \propto {p_t^{(i)}\cdot\E_{\u\sim P_t^{(i)}}[\exp(- \eta f_t(\u)]}. \label{eq:alg-meta}
    \end{align}
    \STATE Update the weight for existing base-learner by 
    \begin{equation}
      \label{eq:fixed-share-FLH}
      p_{t+1}^{(i)} =  \left\{
      \begin{aligned}
      {}& (1-\mu)\cdot \tilde{p}_{t+1}^{(i)} & \mbox{for } \B_i\in\H_t\\
      {}& \mu &\mbox{for } \B_{t+1}.
      \end{aligned} 
      \right.
    \end{equation}
    \STATE Update the base-learner pool: $\H_{t+1} = \H_t \cup \{\B_{t+1}\}$.
    \STATE Obtain $P_{t+1}(\u) = \sum_{\B_i\in\H_{t+1}} p_{t+1}^{(i)} P_{t+1}^{(i)}(\u).$
    \ENDFOR
    \end{algorithmic}
    \end{algorithm} 
\definecolor{mydarkblue}{RGB}{0,0,139}
\begin{figure}[!t]
\centering
\resizebox*{0.47\textwidth}{!}{
\begin{tikzpicture}
\draw[->, >=latex, line width=1.5pt] (0,0) -- (5.5,0) node[right] {};

\foreach \x in {1,...,5}
    \draw (\x-0.5,0) -- (\x-0.5,0.05) node[above] {\small$t=$\x};


\foreach \x in {0}
{
    \draw[thick] (\x+0.05, -0.2) rectangle (\x+4.95, -0.6);
    \fill[white] (\x-0.1, -0.15) rectangle (\x+5.1, -0.45);
}

\foreach \x in {0}
{
    \fill[gray!50] (\x+0.08, -0.45) rectangle (\x+3.95, -0.57);
}

\foreach \x in {0}
{
    \draw[thick] (\x+0.05, -0.7) rectangle (\x+4.95, -1.1);
    \fill[white] (\x-0.1, -0.65) rectangle (\x+5.1, -0.95);
    \draw[thick] (\x+0.05, -1.2) rectangle (\x+4.95, -1.6);
    \fill[white] (\x-0.1, -1.15) rectangle (\x+5.1, -1.45);
    \draw[thick] (\x+0.05, -1.7) rectangle (\x+4.95, -2.1);
    \fill[white] (\x-0.1, -1.65) rectangle (\x+5.1, -1.95);        
}


\foreach \x in {0}
{
    \fill[gray!50] (\x+1.08, -0.95) rectangle (\x+3.95, -1.07);
    \fill[gray!50] (\x+2.08, -1.45) rectangle (\x+3.95, -1.57);
    \fill[gray!50] (\x+3.08, -1.95) rectangle (\x+3.95, -2.07);
}







\draw[dashed, dash pattern=on 2pt off 1pt] (4,0.5) -- (4,-2.3);
\draw[->, >=latex, line width=0.5pt, black] (3.95,-0.5) to[out=0, in=0] (3.95,-2.6);
\draw[->, >=latex, line width=0.5pt, black] (3.95,-1) to[out=0, in=0] (3.95,-2.6);
\draw[->, >=latex, line width=0.5pt, black] (3.95,-1.5) to[out=0, in=0] (3.95,-2.6);
\draw[->, >=latex, line width=0.5pt, black] (3.95,-2) to[out=0, in=0] (3.95,-2.6) node[left] {\includegraphics[height=0.5cm]{./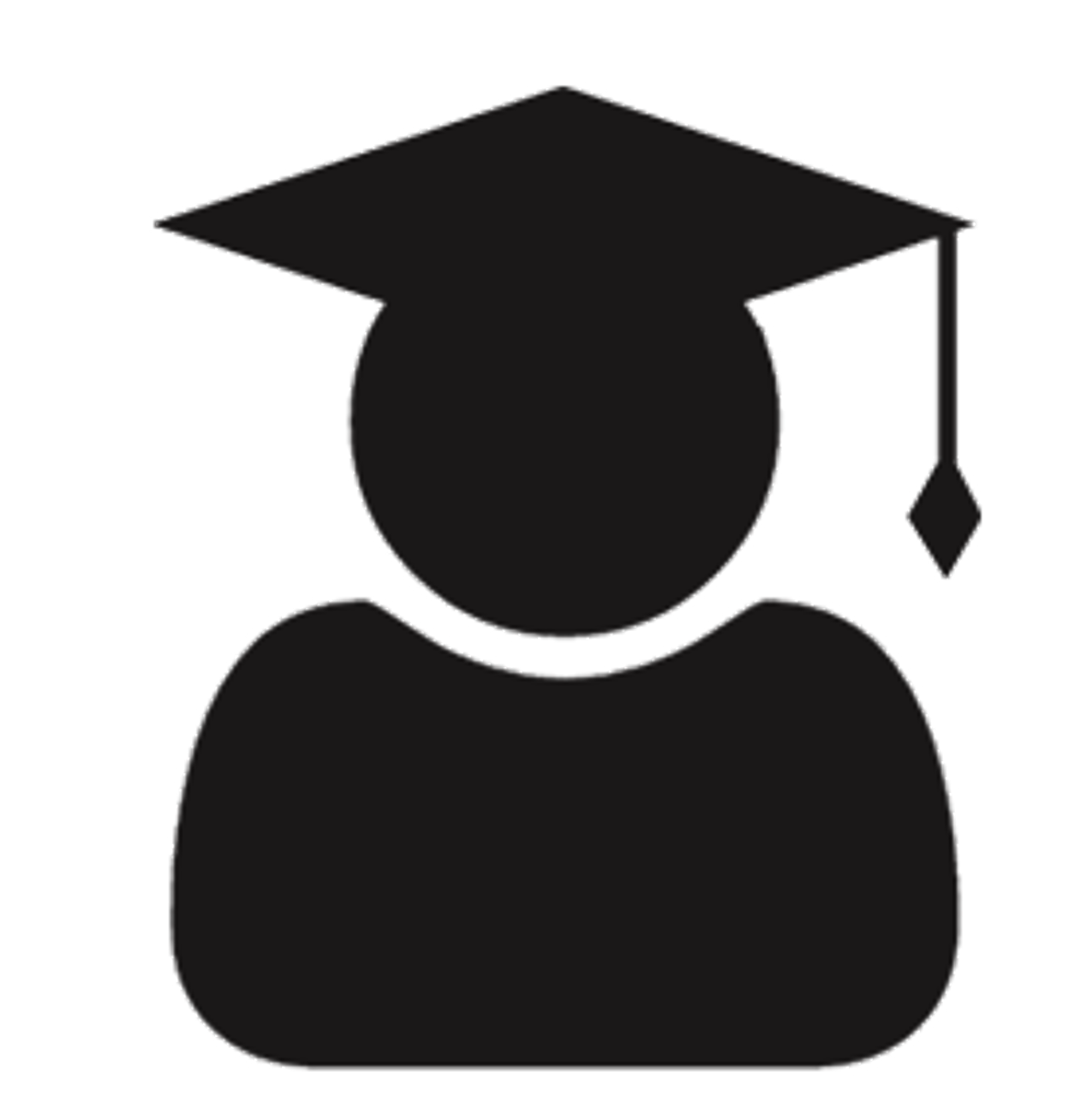}};
\node at (2, -0.38) {\includegraphics[height=0.3cm]{./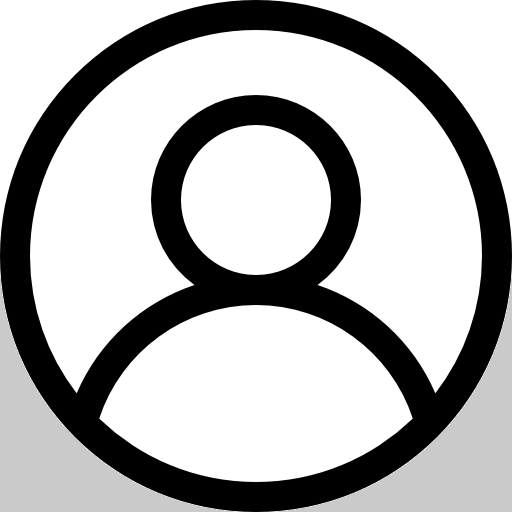}};
\node at (2.5, -0.88) {\includegraphics[height=0.3cm]{./sections/tables/icon/user_1.png}};
\node at (3, -1.38) {\includegraphics[height=0.3cm]{./sections/tables/icon/user_1.png}};
\node at (3.5, -1.88) {\includegraphics[height=0.3cm]{./sections/tables/icon/user_1.png}};
\node at (1.7,-2.6) {\small$\color{black}P_{t} = \sum_{i\in\H_{t}} p_{t}^{(i)}\cdot P_t^{(i)}$};

\matrix [draw, rounded corners=5pt, fill=white, below right, row sep=-0.1cm] at (5.8,-1.2) {
    \node[anchor=base, text width=3.9cm, align=center] {\includegraphics[height=0.5cm]{./sections/tables/icon/expert.png} \raisebox{0.16cm}{\small update $p_{t}^{(i)}$ by~\eqref{eq:alg-meta},~\eqref{eq:fixed-share-FLH}}}; \\
    \node[anchor=south, text width=3.3cm, align=center] {\small \textsf{meta-learner}}; \\
};

\matrix [draw, rounded corners=5pt, fill=white, below right, row sep=-0.1cm] at (5.8,0.3) {
    \node[anchor=base, text width=3.9cm, align=center] {\includegraphics[height=0.5cm]{./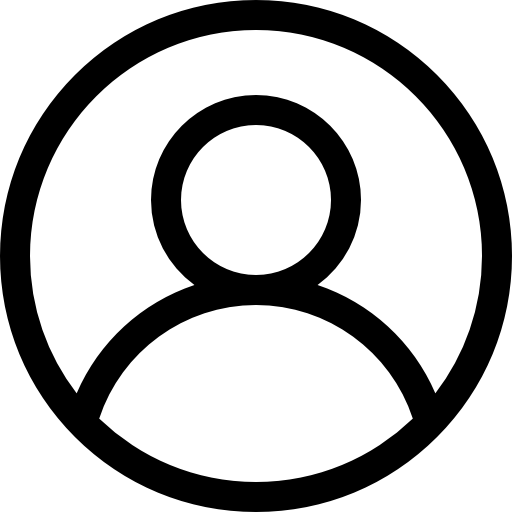} \raisebox{0.16cm}{\small update $P_{t}^{(i)}$ by~\eqref{eq:update-rule}}}; \\
    \node[anchor=south, text width=3.3cm, align=center] {\small \textsf{active base-learner}}; \\
};

\end{tikzpicture}}
\caption{An illustration of Algorithm~\ref{alg:flh-ew}, an alternative implementation of the fixed share method.}
\label{fig:online-ensemble}
\end{figure}

An interesting connection between Algorithm~\ref{alg:flh-ew} and the method of~\citet{COLT'21:baby-strong-convex,AISTATS'22:sc-proper} is that both approaches employ the FLH-type algorithm to bound dynamic regrets. However, their roles in the analyses differ substantially. In the previous work, the strongly adaptive regret guarantee of the FLH-type algorithm is pivotal for tracking the offline optimal sequence, whereas our analysis exploits a fixed-share update formulation combined with mixability arguments, thereby offering a fresh perspective to achieve the nearly optimal dynamic regret bound. Furthermore, from a computational perspective, although Algorithm~\ref{alg:flh-ew} aggregates a set of log-concave distributions—a task typically more computationally more intensive than updating a set of models $\w_t$ as in previous work—we show that for the squared loss, the distribution update can be performed in a closed form. This results in a computation cost that is of the same order as that of~\citet{COLT'21:baby-strong-convex,AISTATS'22:sc-proper} for these losses, but with an improved dependence on $d$.

\section{Instantiation to Different Curved Losses}
\label{sec:applications}
Now, we are ready to apply the generic algorithmic framework to different loss functions. We first consider the simple 1-dimensional squared loss for algorithmic illustration and then extend to the least-squares regression and logistic regression. We provide a comparison with the results by~\citet{COLT'21:baby-strong-convex,AISTATS'22:sc-proper} at the end of this section.

\subsection{Instantiation to 1-Dimensional Squared Loss} 
\label{sec:method-application}
The generic algorithmic framework can be directly applied to the 1-dimensional squared loss $f_t(z) = (z - y_t)^2$ as it is $2$-smooth and $1/(2B^2)$ mixable for any $y_t\in[-B,B]$. Even better, we can construct the prediction $z_t^{\mix}$ with the greedy forecaster~\citep{book/Cambridge/cesa2006prediction}. Specifically, given a distribution $P_t$ defined over $\Z = \R$, we can construct $z_t^{\mix}$ that satisfies the mixability condition~\eqref{eq:mix-rule} by
\begin{align}
    z_t^{\mix} = \left[\frac{m_{\sq}(P_t,-B)-m_{\sq}(P_t,B)}{4B}\right]_B, \label{eq:mixable-squared}
\end{align}
where we define $[z]_B = \min\{\max\{z,-B\},B\}$ the function that clips the value $z$ to the interval $[-B,B]$. In the above, the mix loss is defined by 
\begin{align*}
  m_{\sq}(P_t,y) = - 2B^2\ln\left(\sum_{\B_i\in\H_t} p_t^{(i)}\cdot\E_{z\sim P^{(i)}_t}\left[e^{-\frac{(z-y)^2}{2B^2}}\right]\right),
\end{align*}
where $P_t^{(i)}$ is the distribution maintained by the base-learner $\B_i$ and $p_t^{(i)}$ is the weight from the meta-learner. A direct application of Theorem~\ref{thm:main} leads to the following corollary.
\begin{myCor}
  \label{cor:squared-loss}
  For the squared loss $f_t(z) = (z - y_t)^2$ with $y_t\in[-B,B]$, set $\mu = 1/T$ and $w_0 = 0$ in Algorithm~\ref{alg:hedge-oco}. The prediction $z_t^{\mix}$ ensures
  \begin{align*}
    \sum_{t=1}^T f_t(z_t^\mix) - \sum_{t=1}^T f_t(u_t) \leq \tilde{\O}(1 + T^{\frac{1}{3}} P_T^{\frac{2}{3}}),
  \end{align*}
  where $P_t = \sum_{t=2}^T \vert u_t - u_{t-1}\vert$ is the path length.
\end{myCor}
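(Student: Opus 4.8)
The plan is to derive Corollary~\ref{cor:squared-loss} as a direct instantiation of Theorem~\ref{thm:main}. Concretely, I need to (i) check that Assumptions~\ref{assum:bounded-domain},~\ref{assum:smooth} and~\ref{assum:mixablity} hold for the one-dimensional squared loss with explicit constants, (ii) verify that the closed-form rule~\eqref{eq:mixable-squared} is an admissible prediction for step~\eqref{eq:mix-rule} of Algorithm~\ref{alg:hedge-oco}, and (iii) substitute $d=1$ together with those constants into the bound of Theorem~\ref{thm:main}.

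For (i): here $d=1$; since $y_t\in[-B,B]$, replacing any comparator $u_t$ by its clipping to $[-B,B]$ only decreases $(u_t-y_t)^2$ and does not increase the path length, so one may take $\W:=[-B,B]$ and Assumption~\ref{assum:bounded-domain} holds with $D=2B$; the map $z\mapsto(z-y_t)^2$ has second derivative $2$, so Assumption~\ref{assum:smooth} holds with $\beta=2$; and by Example~\ref{example:squared-loss} the loss $(z-y)^2$ is $\tfrac{1}{2B^2}$-mixable over $z\in\R$ for every $y\in[-B,B]$, so Assumption~\ref{assum:mixablity} holds with $\eta=\tfrac{1}{2B^2}$. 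The prescription $w_0=0$ is only needed to keep $\mathrm{KL}(Q_1\Vert P_1)$ bounded in Lemma~\ref{lem:mixability-regret}: for $Q_1=\N(u_1,\sigma^2)$ with $u_1\in[-B,B]$ it equals $\O(B^2+\log(1/\sigma))$, which is already absorbed into the $\O(\cdot)$ of Theorem~\ref{thm:main}.

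For (ii): since $y_t\in[-B,B]$ is arbitrary it suffices to show $(z_t^{\mix}-y)^2\le m_{\sq}(P_t,y)$ for all $y\in[-B,B]$, where (by linearity of the expectation over the mixture and Theorem~\ref{thm:equavalence}) $m_{\sq}(P_t,y)=-\tfrac{1}{\eta}\ln\E_{u\sim P_t}[e^{-\eta(u-y)^2}]$ with $\eta=\tfrac{1}{2B^2}$. Expanding the square in the exponent gives $m_{\sq}(P_t,y)-y^2=-\tfrac{1}{\eta}\ln\E_{u\sim P_t}[e^{-\eta u^2+2\eta uy}]$, which is concave in $y$ (a negative multiple of a cumulant-generating function). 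Hence $(z-y)^2\le m_{\sq}(P_t,y)$, rewritten as the affine-in-$y$ bound $z^2-2zy\le m_{\sq}(P_t,y)-y^2$, holds on all of $[-B,B]$ iff it holds at the two endpoints, i.e.\ iff $(z-B)^2\le m_{\sq}(P_t,B)$ and $(z+B)^2\le m_{\sq}(P_t,-B)$; since $m_{\sq}(P_t,\pm B)\ge 0$ always, these describe a (possibly empty) interval of admissible $z$, which $\eta$-mixability guarantees is nonempty. A short computation then shows that $z=\tfrac{m_{\sq}(P_t,-B)-m_{\sq}(P_t,B)}{4B}$ equalizes the two endpoint slacks and, together with the $1$-Lipschitzness of $y\mapsto\sqrt{m_{\sq}(P_t,y)}$, that this value — and also its clipping $[z]_B$, which is legitimate since $|y_t|\le B$ — lies in the admissible interval. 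This is exactly Vovk's greedy/aggregating forecaster for the squared loss, so here I would simply cite it (e.g.~\citet[Chapter 3]{book/Cambridge/cesa2006prediction}, see also~\citet{02:vovk-mixability}).

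For (iii): plugging $d=1$, $\beta=2$, $\eta=\tfrac{1}{2B^2}$, $D=2B$ and $\mu=1/T$ into Theorem~\ref{thm:main} turns $\O(d\log T\cdot(1+T^{1/3}P_T^{2/3}))$ into $\O(\log T\cdot(1+T^{1/3}P_T^{2/3}))$ up to factors depending only on $B$, i.e.\ $\tilde{\O}(1+T^{1/3}P_T^{2/3})$. I expect the only step with content beyond bookkeeping to be (ii), the admissibility of~\eqref{eq:mixable-squared}; but since that is the classical aggregating-forecaster construction, I do not anticipate a real obstacle — the point of the corollary is that the $d^{10/3}$ factor of prior work collapses to $d=1$ here for free.
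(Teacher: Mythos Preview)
Your proposal is correct and follows essentially the same approach as the paper: verify the assumptions of Theorem~\ref{thm:main} for the one-dimensional squared loss, show that the closed-form predictor~\eqref{eq:mixable-squared} satisfies the mixability condition~\eqref{eq:mix-rule}, and then instantiate the theorem with $d=1$. The only presentational difference is in step~(ii): you reduce to the two endpoints by observing directly that $m_{\sq}(P_t,y)-y^2$ is concave in $y$ (as a negative cumulant-generating function), whereas the paper cites Vovk's result that the mixability gap is convex in $y$ and then derives the clipped formula via an explicit three-case minimax computation; these are equivalent arguments, and your plan to cite the classical aggregating forecaster for the final ``short computation'' is exactly what the paper does as well.
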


\textbf{Implementations.}~ Since the squared loss is quadratic, the distribution $P_t^{(i)}$ is Gaussian, allowing for a closed-form update rule for both $P_t^{(i)}$ and $p_t^{(i)}$. Consequently, the prediction $z_t^{\text{mix}}$ can also be computed in a closed form. Since our method requires maintaining $t$ base learners at time $t$, the total time complexity for updating $P_{t+1}^{(i)}$ and $p_{t+1}^{(i)}$ and then aggregating them via~\eqref{eq:mixable-squared} is $\mathcal{O}(t)$ per round. This computation cost is of the same order as the FLH method used by~\citet[Figure 2]{COLT'21:baby-strong-convex}. 

\subsection{Instantiation to Least-Squares Regression}
\label{sec:squared-loss}
This subsection shows the fixed-share algorithm is also applicable to least-squares regression with the loss $f_t(\w) = (\w^\top\x_t-y_t)^2$. We assume that the feature vector $\x_t\in\R^d$ is bounded as $\|\x_t\|_2\leq L$ and the label $y_t\in[-B,B]$. The basic observation is that the loss function $f_t$ is essentially the 1-dimensional squared loss $\ell_{\sq}(z,y_t) = (z-y_t)^2$ with $z = \w^\top\x_t$, which is $1/(2B^2)$-mixable. Therefore, for any distribution $P_t$ defined over $\R^d$, we can construct the prediction $z_t^{\mix}$ for the least-squares regression following the same rule as~\eqref{eq:mixable-squared} with the mix loss $m(y,P_t)$ defined as
\begin{align}
    - 2B^2\ln\left(\sum_{\B_i\in\H_t} p_t^{(i)}\cdot\E_{\w\sim P^{(i)}_t}\left[e^{-\frac{(\w^\top\x_t-y)^2}{2B^2}}\right]\right).\label{eq:mix-loss-least-squares}
\end{align}

The aggregated prediction $z_t^\mix$ has the following guarantee for the least-squares regression.

\begin{myCor}
    \label{cor:least-squares-loss}
    For the squared loss $f_t(\w) = (\w^\top\x_t-y_t)^2$ with $\norm{\x_t}_2\leq L$ and $y_t\in[-B,B]$, set $\mu = 1/T$ and $\w_0 = \mathbf{0}$. The mix prediction $z_t^{\mix}$~\eqref{eq:mix-loss-least-squares} ensures
    \begin{align*}
      \sum_{t=1}^T \ell_{\sq}(z_t^\mix,y_t) - \sum_{t=1}^T f_t(\u_t) \leq \tilde{\O}(d + dT^{\frac{1}{3}} P_T^{\frac{2}{3}}),
    \end{align*}
    where $P_T = \sum_{t=2}^T \|\u_t - \u_{t-1}\|_2$ is the path length of the comparator sequence with $\norm{\u_t}_2\leq D$.
\end{myCor}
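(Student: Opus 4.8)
The corollary is obtained by instantiating Theorem~\ref{thm:main} to the least-squares loss $f_t(\w) = (\w^\top\x_t - y_t)^2$. The plan has three steps: (i) verify Assumptions~\ref{assum:bounded-domain}, \ref{assum:smooth} and~\ref{assum:mixablity} for this loss with constants that do not grow with $d$, $T$ or $P_T$; (ii) check that the closed-form prediction $z_t^{\mix}$ of~\eqref{eq:mixable-squared}--\eqref{eq:mix-loss-least-squares} is an admissible choice in Line~3 of Algorithm~\ref{alg:hedge-oco}, i.e.\ that it satisfies the mixability rule~\eqref{eq:mix-rule}; and (iii) invoke Theorem~\ref{thm:main} and absorb the constants into $\tilde{\O}(\cdot)$.

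\textbf{Step (i): the assumptions.} Take the parameter space $\W = \{\w\in\R^d : \norm{\w}_2 \le D\}$, which is convex and compact with diameter $2D$, so Assumption~\ref{assum:bounded-domain} holds (with its bound played by $2D$) and the comparator restriction $\norm{\u_t}_2\le D$ of the corollary is exactly $\u_t\in\W$. For smoothness, $\nabla^2 f_t(\w) = 2\x_t\x_t^\top \preceq 2\norm{\x_t}_2^2 I_d \preceq 2L^2 I_d$ for \emph{every} $\w\in\R^d$ since the Hessian is constant; hence $f_t$ is $\beta$-smooth with $\beta = 2L^2$, which is what Assumption~\ref{assum:smooth} requires (global smoothness on $\R^d$ is needed because the algorithm's iterate distributions are supported on all of $\R^d$). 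Finally, write $f_t(\w) = \ell_{\sq}(\w^\top\x_t, y_t)$ with $\ell_{\sq}(z,y) = (z-y)^2$; Example~\ref{example:squared-loss} gives that $\ell_{\sq}(\cdot, y_t)$ is $\eta$-mixable over $z\in\R$ with $\eta = 1/(2B^2)$ for all $y_t\in[-B,B]$, which is Assumption~\ref{assum:mixablity}. The three constants $D$, $\beta = 2L^2$, $\eta = 1/(2B^2)$ are independent of $d$, $T$ and $P_T$.

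\textbf{Step (ii): validity of $z_t^{\mix}$.} Let $\nu_t$ be the pushforward of $P_t$ under the linear map $\w\mapsto\w^\top\x_t$, a probability distribution on $\R$. A change of variables gives $\E_{\w\sim P_t}[e^{-\eta f_t(\w)}] = \E_{z\sim\nu_t}[e^{-\eta\ell_{\sq}(z,y_t)}]$, so the mix loss $m_t(P_t) = -\tfrac{1}{\eta}\ln\E_{\w\sim P_t}[e^{-\eta f_t(\w)}]$ coincides with $m_{\sq}(P_t, y_t)$ from~\eqref{eq:mix-loss-least-squares}; here I use Theorem~\ref{thm:equavalence} to identify $P_t = \sum_{\B_i\in\H_t} p_t^{(i)} P_t^{(i)}$, which turns the weighted sum in~\eqref{eq:mix-loss-least-squares} into an expectation under $P_t$. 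Then the greedy-forecaster guarantee for the square loss (see \citet[Chapter~3]{book/Cambridge/cesa2006prediction} and Section~\ref{sec:method-application}) states that the clipped prediction $z_t^{\mix} = \big[\tfrac{m_{\sq}(P_t,-B) - m_{\sq}(P_t,B)}{4B}\big]_B$ satisfies $\ell_{\sq}(z_t^{\mix}, y) \le m_{\sq}(P_t, y)$ \emph{simultaneously for every} $y\in[-B,B]$; specializing to $y = y_t$ yields $\ell_{\sq}(z_t^{\mix}, y_t)\le m_t(P_t)$, which is~\eqref{eq:mix-rule}. Two points deserve emphasis: $z_t^{\mix}$ is built from the aggregate distribution $P_t$ and then clipped, so the scheme is improper (consistent with Table~\ref{table:compare}); and the uniformity of the inequality over $y$ is precisely what makes $z_t^{\mix}$ admissible, since it must be announced before $y_t$ is revealed.

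\textbf{Step (iii): conclusion.} With Assumptions~\ref{assum:bounded-domain}--\ref{assum:mixablity} verified and~\eqref{eq:mix-rule} satisfied by $z_t^{\mix}$, Theorem~\ref{thm:main} with $\mu = 1/T$ applied to $f_t$ gives $\sum_{t=1}^T \ell_{\sq}(z_t^{\mix}, y_t) - \sum_{t=1}^T f_t(\u_t) = \DReg(\{\u_t\}^T_{t=1}) \le \O\big(d\log T\cdot(1 + T^{1/3}P_T^{2/3})\big)$, and since the hidden constants depend only on $D$, $L$, $B$ this is $\tilde{\O}(d + dT^{1/3}P_T^{2/3})$, as claimed. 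The only step that is not pure bookkeeping is (ii): one must confirm that the explicit clipped forecaster realizes the \emph{existential} mixability guarantee uniformly over the unknown label $y_t$, and that pushing the iterate distribution through the linear map $\w\mapsto\w^\top\x_t$ preserves the one-dimensional mixability structure; the remaining dependence on $\eta$, $\beta$ and $D$ is subsumed by $\tilde{\O}(\cdot)$.
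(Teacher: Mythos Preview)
Your proposal is correct and follows essentially the same approach as the paper: reduce the $d$-dimensional least-squares mix loss to the one-dimensional squared-loss mix loss via the pushforward of $P_t$ under $\w\mapsto\w^\top\x_t$, invoke the greedy-forecaster guarantee from the 1-d case to certify~\eqref{eq:mix-rule}, and then apply Theorem~\ref{thm:main}. Your Step~(i) makes the assumption-checking more explicit than the paper (which leaves $\beta=2L^2$ and the domain diameter implicit), but the substance is identical.
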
 

\textbf{Implementations.}~ Similar to 1-dimensional squared loss, the distribution $P_t^{(i)}$ is a multivariate Gaussian distribution and we have a close form update rule for $z_t^\mix$. The method can be also implemented in $\O(t)$ time for each iteration $t$.

\subsection{Instantiation to Logistic Regression}
Given that logistic loss $\ell_{\lr}(z,y) = \log (1+\exp(-yz))$ is 1-mixable for any $z\in\R^d$ and $y\in\{-1,1\}$, the generic algorithmic framework is also applicable here as the case of least squares. Specifically, for any given distribution $P_t$ over $\R^d$, one can construct the mix prediction by
\begin{align}
  z_{t}^{\mix} = \sigma^{-1}\left(\sum_{\B_i\in\H_t} p_t^{(i)}\cdot\E_{\w\sim P^{(i)}_t}[\sigma(\w^\top\x_t)]\right), \label{eq:mix-logistic-regression}
\end{align}
where $\sigma(z) = 1/(1+\exp(-z))$ is the sigmoid function and $\sigma^{-1}(z) = \ln \left(\frac{z}{1-z}\right)$ is the inverse function of the sigmoid. We have the following corollary for the logistic loss case.
\begin{myCor}
  \label{cor:logistic-loss}
For the logistic loss $f_t(\w) = \log(1+\exp(-y\w^\top\x))$ with $\norm{\x}_2\leq L$ and $y_t\in\{+1,-1\}$, set $\mu = 1/T$ and $\w_0 = \mathbf{0}$. The mix prediction $z_t^{\mix}$ ensures
\begin{align*}
  \sum_{t=1}^T \ell_{\lr}(z_t^{\mix},y_t) - \sum_{t=1}^T f_t(\u_t) \leq \tilde{\O}(d + dT^{\frac{1}{3}} P_T^{\frac{2}{3}}),
\end{align*}
where $P_T = \sum_{t=2}^T \norm{\u_t-\u_{t-1}}_2$ is the path length of any comparator sequence such that $\norm{\u_t}_2\leq D$.
\end{myCor}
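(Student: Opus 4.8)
The plan is to derive Corollary~\ref{cor:logistic-loss} as a direct instantiation of Theorem~\ref{thm:main}. Setting $\ell = \ell_{\lr}$, $h(\w,\x) = \w^\top\x$ and $f_t(\w) = \ell_{\lr}(\w^\top\x_t, y_t) = \log(1+\exp(-y_t\w^\top\x_t))$, the left-hand side of the claimed bound is exactly $\DReg(\{\u_t\}_{t=1}^T)$, so it only remains to (i) verify that Assumptions~\ref{assum:bounded-domain},~\ref{assum:smooth} and~\ref{assum:mixablity} hold in this setting with explicit constants, and (ii) check that the closed-form rule~\eqref{eq:mix-logistic-regression} produces a prediction $z_t^{\mix}$ satisfying the mixability requirement~\eqref{eq:mix-rule} that Algorithm~\ref{alg:hedge-oco} relies on.

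For Assumption~\ref{assum:bounded-domain} we take $\W = \{\w\in\R^d : \norm{\w}_2 \le D\}$, which is convex, compact, and has diameter $2D$, and contains every comparator $\u_t$ with $\norm{\u_t}_2 \le D$. For Assumption~\ref{assum:smooth}, a direct computation gives $\nabla^2 f_t(\w) = \sigma(y_t\w^\top\x_t)\bigl(1-\sigma(y_t\w^\top\x_t)\bigr)\x_t\x_t^\top \preceq \frac{1}{4}\x_t\x_t^\top \preceq \frac{L^2}{4}I_d$, using that the scalar $\sigma(y_t\w^\top\x_t)(1-\sigma(y_t\w^\top\x_t))$ lies in $[0,\frac{1}{4}]$ together with $\norm{\x_t}_2 \le L$; hence $f_t$ is $\beta$-smooth with $\beta = L^2/4$. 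For Assumption~\ref{assum:mixablity}, Example~\ref{example:logistic-loss} states that $\ell_{\lr}(\cdot, y)$ is $1$-mixable for every $y\in\{-1,1\}$ (the prediction variable here being the scalar logit), so we run Algorithm~\ref{alg:hedge-oco} with $\eta = 1$.

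It then remains to check~\eqref{eq:mix-rule}. Using the identities $e^{-\ell_{\lr}(z,y)} = \sigma(yz)$ and $\sigma(-z) = 1-\sigma(z)$, and abbreviating $q_t = \sum_{\B_i\in\H_t} p_t^{(i)}\,\E_{\w\sim P_t^{(i)}}[\sigma(\w^\top\x_t)] = \E_{\w\sim P_t}[\sigma(\w^\top\x_t)]$, the rule~\eqref{eq:mix-logistic-regression} reads $z_t^{\mix} = \sigma^{-1}(q_t)$, while $\E_{\w\sim P_t}[e^{-\eta f_t(\w)}]$ equals $q_t$ when $y_t = 1$ and $1-q_t$ when $y_t = -1$. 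Splitting into the two cases: for $y_t = 1$, inequality~\eqref{eq:mix-rule} becomes $\log(1+e^{-z_t^{\mix}}) \le -\ln q_t$, i.e. $z_t^{\mix} \ge \sigma^{-1}(q_t)$; for $y_t = -1$, it becomes $\log(1+e^{z_t^{\mix}}) \le -\ln(1-q_t)$, i.e. $z_t^{\mix} \le \sigma^{-1}(q_t)$. Thus $z_t^{\mix} = \sigma^{-1}(q_t)$ meets both with equality, so~\eqref{eq:mix-rule} holds. (Here $q_t\in(0,1)$ since each $\sigma(\w^\top\x_t)\in(0,1)$ and the $p_t^{(i)}$ form a probability vector, so $\sigma^{-1}(q_t)$ is well defined and finite.)

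With the assumptions verified and $\eta = 1$, Theorem~\ref{thm:main} yields $\DReg(\{\u_t\}_{t=1}^T) \le \O\bigl(d\log T\,(1+T^{1/3}P_T^{2/3})\bigr)$, where the suppressed constant depends only on $\eta$, $\beta = L^2/4$ and $D$; absorbing $L$ and $D$ into $\tilde\O$ gives the stated $\tilde\O(d + dT^{1/3}P_T^{2/3})$ bound. I expect the only mildly delicate step to be the verification of~\eqref{eq:mix-rule}: one must track that the expectation is taken under the \emph{aggregated} distribution $P_t = \sum_i p_t^{(i)} P_t^{(i)}$ (so the sum in~\eqref{eq:mix-logistic-regression} is indeed $\E_{\w\sim P_t}[\sigma(\w^\top\x_t)]$) and that the $y_t=+1$ and $y_t=-1$ cases jointly pin $z_t^{\mix}$ to the same value; everything else is routine bookkeeping feeding into Theorem~\ref{thm:main}.
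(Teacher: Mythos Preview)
Your proposal is correct and follows essentially the same approach as the paper: verify that the closed-form predictor~\eqref{eq:mix-logistic-regression} satisfies the mixability condition~\eqref{eq:mix-rule} by splitting into the cases $y_t=+1$ and $y_t=-1$ (the paper likewise shows the mixability gap is exactly zero in each case via the identity $e^{-\ell_{\lr}(z,y)}=\sigma(yz)$), and then invoke Theorem~\ref{thm:main}. Your explicit checks of the smoothness constant $\beta=L^2/4$ and of Assumption~\ref{assum:bounded-domain} are additional bookkeeping that the paper leaves implicit, but the substance is the same.
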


\textbf{Implementations.}~ For the logistic loss, the main computation cost is to calculate the term $\E_{\u\sim P^{(i)}_t}[\sigma(\w^\top\x_t)]$, which appears both in the weight update~\eqref{eq:alg-meta} and final prediction~\eqref{eq:mix-logistic-regression}. We do not have a closed-form expression for this term. However, we can exploit the log-concavity of the density of $P_t^{(i)}$ and apply a sampling technique to facilitate computation in polynomial time~\citep[Appendix B]{COLT'18:Foster-logistic}. To further accelerate the computation, one might consider using the technique of~\citet{NeurIPS'21:efficient-LR-imporper}, which approximates the logistic loss using a second-order surrogate. In such a case, $P_t^{(i)}$ is Gaussian and sampling can be performed efficiently. However, due to the two-layer structure of the method, it is challenging to directly extend the method here. We leave the computation cost for the logistic loss as a future work.

\subsection{More Comparison with Related Work}
\label{sec:comparison}
In this subsection, we provide a detailed comparison between our results and best-known prior results, with a particular emphasis on the three main cases of the curved loss.
\begin{compactitem}
\item For the 1-dimensional squared loss, our result matches the nearly optimal guarantee as~\citet[Theorem 1]{COLT'21:baby-strong-convex} and both methods are proper. However, our analysis does not rely on the KKT condition. \vspace{2mm}
\item For the least-squares regression and logistic regression, we improve the best-known results of $\tilde{\O}(d + d^\frac{10}{3} T^{\frac{1}{3}}P_T^{\frac{2}{3}})$ in~\citet{AISTATS'22:sc-proper,NeurIPS'22:Baby-LQR} and~\citet{TMLR'23:Baby-contextual-pricing} with better dependence on $d$. For these two cases, Algorithm~\ref{alg:hedge-oco} is inherently improper, as its predictions extend beyond the linear function class. This is less favorable compared to the specific algorithms designed for the least-square regression~\citep{NeurIPS'22:Baby-LQR} and the logistic regression~\citep{TMLR'23:Baby-contextual-pricing}, which ensure proper learning.
\end{compactitem}
We note that the path length in~\citet{COLT'21:baby-strong-convex,AISTATS'22:sc-proper} is defined via the $L_1$ norm. We present their results in the $L_2$ norm for better comparison through $\|\mathbf{x}\|_1 \le \sqrt{d}\,\|\mathbf{x}\|_2$, $\forall\mathbf{x}\in\mathbb{R}^d.$ The reliance on the $L_1$ norm in their analysis seems to stem from the inherent intricacy of handling KKT conditions. For the $L_1$ norm, the analysis in the $d$-dimensional case is closely connected to the 1-dimensional case, as it can be proceeded in a coordinate-wise manner. In contrast, applying the KKT-based analysis to the $L_2$ norm without introducing additional dependence on $d$ remains unclear. Our mixability-based analysis offers a more direct way to obtaining path length bounds with the $L_2$ norm.

\textbf{Limitations of Algorithm~\ref{alg:hedge-oco} and Improvements.} Although Algorithm~\ref{alg:hedge-oco} achieves improved dynamic regret without relying on KKT-based analysis, there remain several directions for improvement. First, regarding the time complexity, our method matches the $\O(T)$ complexity of prior work for the squared losses. However, the KKT-based methods~\citep{COLT'21:baby-strong-convex,AISTATS'22:sc-proper} can reduce the complexity further to $\O(\log T)$ by leveraging a geometric covering of base-learners~\citep{ICML'09:Hazan-adaptive}, at the expense of additional logarithmic factors in the regret. Incorporating this idea into the fixed-share update is a promising direction. Moreover, the guarantee provided by Algorithm~\ref{alg:hedge-oco} relies on the smoothness of the function $f_t$ and on allowing improper learning. In contrast, the smoothness condition is not required in~\citet{AISTATS'22:sc-proper}, and the works by~\citet{NeurIPS'22:Baby-LQR,TMLR'23:Baby-contextual-pricing} eliminated the need of improper learning for online prediction with the linear model, where the loss has a specific form. In Section~\ref{sec:general-OCO}, we improve our method further by removing the assumptions on smoothness and improper learning, though the computational issue will become more challenging.


\section{Extension to General OCO}
\label{sec:general-OCO}
This section extends our method to the general OCO without requiring a specific structure of the online function as in Section~\ref{sec:online-SL}. Instead, we only assume the loss functions $f_t$ are exp-concave (and thus mixable) over the domain $\W$. By using a projection step and learning with a surrogate loss, our method achieves a nearly optimal regret bound under proper learning and is free from the smoothness assumption. 

Besides Assumption~\ref{assum:bounded-domain} on the boundedness if the feasible domain $\W$, this section requires the following conditions.

\begin{myAssum}[Exp-concave]
\label{assum:exp-concave}
The online function $f_t$ is $\eta$-exp-concave over the domain $\W$ for all $t\in[T]$.
\end{myAssum}

\begin{myAssum}[Bounded Gradient]
  \label{assum:gradient}
The norm of the gradient of the loss function is bounded by $G$, i.e., $\norm{\nabla f_t(\w)}_2\leq G$ for any $\w\in\W$ and $t\in[T]$.
\end{myAssum}

A key obstacle in extending Algorithm~\ref{alg:hedge-oco} to the general OCO setting is that it requires the loss function to be mixable over $\R^d$, a condition often satisfied in online prediction but not in general OCO. In the latter case, exp-concavity is a more standard assumption, which only guarantees mixability over a bounded domain $\W$. This makes it challenging to construct a prediction $\w_t$ that has a negative mixability gap, i.e., $f_t(\w_t) \leq -\frac{1}{\eta}\ln\big( \E_{\u\sim P_t}[\exp(-\eta f_t(\u))]\big)$.

\begin{algorithm}[!t]
    \caption{Projected Fixed-share with Surrogate Loss}
    \label{alg:hedge-oco-general}
    \begin{algorithmic}[1]
    \REQUIRE exp-concavity coefficient $\eta$, loss parameter $\gamma$ and fixed-share parameter $\mu\in[0,1]$.
  \STATE Initialize $\w_0\in\W$ as any point in $\W$ and $\tilde{P}_1 = P_1 = \mathcal{N}(\w_0,I_d)$ as a Gaussian distribution.
    \FOR {$t=1,2,\ldots,T$} 
    \STATE The learner submits $\w_t = \E_{\w\sim P_t}[\w]$ and constructs the surrogate loss $\tilde{f}_t$ as~\eqref{eq:surrogate-loss}.
  \STATE For all $\u\in\R^d$, the learner updates the by
  \begin{align}
    {}&{P}'_{t+1}(\u) \propto P_t(\u)\exp(-\gamma\tilde{f}_t(\u)/2);~\label{eq:step1-EW-alg3}\\
    {}&\tilde{P}_{t+1}(\u) = \argmin\nolimits_{Q\in\mathscr{M}}\mbox{KL}(Q\Vert P'_{t+1})~\label{eq:step2-project-alg3},
  \end{align}
  where $\P$ is a set of distributions defined as~\eqref{eq:GMM-set}.
    \STATE Then, the distribution is updated by the fixed share
    \begin{align}
     P_{t+1}(\u) = (1-\mu)\tilde{P}_{t+1}(\u) + \mu N_0(\u), \label{eq:step3-fix-share-alg3}
    \end{align}
     where $N_0 = \mathcal{N}(\w_0,I_d)$ is a Gaussian distribution.
    \ENDFOR
    \end{algorithmic}
    \end{algorithm}

To address this issue, we introduce two algorithmic modifications as summarized in Algorithm~\ref{alg:hedge-oco-general}. Our approach remains an exponential-weight method with fixed-share updates, but the distribution $P_t$ is now updated with a surrogate loss~\eqref{eq:step1-EW-alg3}, and a projection step~\eqref{eq:step2-project-alg3} is incorporated.

\textbf{Surrogate Loss.} Let $\g_t = \nabla f_t(\w_t)$. Instead of using the original loss, we learn with the surrogate loss defined by
\begin{align}
    \tilde{f}_t(\w) ={}&  \g_t^\top(\w-\w_t) + \frac{\gamma}{2} \big(\g_t^\top(\w-\w_t)\big)^2,\label{eq:surrogate-loss}
\end{align}
where the coefficient is defined as $\gamma = \min\{1/(4GD),\eta\}$. It is known that the regret in terms of the surrogate loss is always an upper bound for the original loss: $f_t(\w_t) - f_t(\u_t) \leq \tilde{f}_t(\w_t)-\tilde{f_t}(\u)$ for any $\w_t,\u\in\W$~\citep[Lemma~4.2]{book'16:Hazan-OCO} for exp-concave losses. Hence, it suffices for us to analyze the dynamic regret of the surrogate loss.

The surrogate loss plays a key role in eliminating the smoothness assumption. In Section~\ref{sec:method-analysis}, smoothness was used to control the comparator gap. Since the surrogate loss is quadratic, we can directly upper-bound this term by 
\begin{align*}
   \mathtt{term~(C)}\leq  {}\E_{\u\sim Q_t}[\tilde{f}_t(\u) - \tilde{f}_t(\u_t)]  \leq  { d\sigma^2\gamma  \norm{\g_t}^2_2 }/{2},
\end{align*}
where we choose $Q_t = \mathcal{N}(\u_t, \sigma^2 I_d)$. The above inequality naturally leads to a similar result to~\eqref{eq:comparator-gap} in Section~\ref{sec:method-analysis} when the norm of the gradient is bounded as Assumption~\ref{assum:gradient}. 

\textbf{Projection Step.} The remaining challenge is that $\tilde{f}_t$ is not mixable over $\mathbb{R}^d$, making it  hard to bound the mixability gap. However, we show that such a negative mixability gap can still be ensured if $P_t$ lies in a carefully designed distribution set $\mathscr{M}$, as guaranteed by the projection step~\eqref{eq:step2-project-alg3}.

Specifically, we choose the set $\mathscr{M}$ as the family of all Gaussian mixtures whose component means and covariances are uniformly bounded. Let $\S \triangleq\bigl\{\Sigma\in\mathbb S^{d}_{++} \mid \tfrac1T\le\lambda_{\min}(\Sigma)\le\lambda_{\max}(\Sigma)\le 1\bigr\}$ be a set of symmetric positive-definite (SPD) matrices with bounded eigenvalues, where $\mathbb S^{d}_{++}$ is the set of all SPD matrices. Further denote by $\Theta\triangleq\W\times \S$ the admissible parameter space for the component Gaussians and let $\theta = (\w,\Sigma)\in\Theta$. We define the constraint set as
\begin{align}
  \mathscr{M} \triangleq \left\{\int_{\theta\in \Theta}\N({\w},\Sigma)\mathrm{d}\pi( \theta)\mid\pi \in \mathscr{P}(\Theta)\right\}, \label{eq:GMM-set}
\end{align}
where $\mathscr{P}(\Theta)$ is the set of all probability measures on $\Theta$. As detailed in Appendix~\ref{appendix:proof-section-oco}, the set is convex and closed in total variation, therefore a minimizer in the projection step always exists~\citep[Theorem 2.1]{csiszar1975divergence}. Moreover, since any $P \in \mathscr{M}$ has its mean in $\W$, we have $\w_t = \E_{P_t}[\w] \in \W$, which ensures a proper prediction.

Lemma~\ref{lem:mixablity-surr} in Appendix~\ref{appendix:proof-section-oco} shows that one can ensure a negative mixability gap for the surrogate loss, i.e., 
\[\tilde{f}_t(\E_{\w\sim P_t}[\w]) \leq -\frac{2}{\gamma}\ln\big( \E_{\u\sim P_t}[\exp(-\gamma \tilde{f}_t(\u)/2)]\big),\]
which leads to the following guarantee for Algorithm~\ref{alg:hedge-oco-general}.
\begin{myThm}
    \label{thm:main-general-oco} Under Assumptions~\ref{assum:bounded-domain},~\ref{assum:exp-concave} and~\ref{assum:gradient}, Algorithm~\ref{alg:hedge-oco-general} with $\mu = 1/T$ ensures
  \begin{align*}
    \DReg\leq{} \O\Big(d \log T \cdot(1+ T^{\frac{1}{3}}P_T^{\frac{2}{3}}) \Big),
  \end{align*}
 for any comparator sequence $\u_1,\dots,\u_T\in\W$.
  \end{myThm}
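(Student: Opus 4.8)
The plan is to replay the mix-loss argument of Section~\ref{sec:method-analysis} on the quadratic surrogate $\tilde f_t$ of~\eqref{eq:surrogate-loss} in place of $f_t$, with the mix loss $m_t(P)=-\tfrac{2}{\gamma}\ln(\E_{\u\sim P}[e^{-\gamma\tilde f_t(\u)/2}])$ (effective mixability coefficient $\gamma/2$) taking the role of the original mix loss. First I would invoke the surrogate-loss bound of~\citep[Lemma~4.2]{book'16:Hazan-OCO}: since $f_t$ is $\eta$-exp-concave, $\norm{\g_t}_2\le G$, $\gamma=\min\{1/(4GD),\eta\}$ and $\w_t,\u_t\in\W$, we have $f_t(\w_t)-f_t(\u_t)\le\tilde f_t(\w_t)-\tilde f_t(\u_t)$, so it suffices to bound $\sum_t\tilde f_t(\w_t)-\sum_t\tilde f_t(\u_t)$. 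I then split this, exactly as in~\eqref{eq:decomposition} with $Q_t=\N(\u_t,\sigma^2I_d)$, into the mixability gap $\mathtt{(A)}=\sum_t\tilde f_t(\w_t)-\sum_tm_t(P_t)$, the mixability regret $\mathtt{(B)}=\sum_tm_t(P_t)-\sum_t\E_{\u\sim Q_t}[\tilde f_t(\u)]$, and the comparator gap $\mathtt{(C)}=\sum_t\E_{\u\sim Q_t}[\tilde f_t(\u)]-\sum_t\tilde f_t(\u_t)$.

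For $\mathtt{(A)}$, the crux is the invariant $P_t\in\mathscr{M}$ for every $t$, which I prove by induction: $P_1=\N(\w_0,I_d)\in\mathscr{M}$ because $\w_0\in\W$ and $I_d\in\S$; the exponential-weight step~\eqref{eq:step1-EW-alg3} may push the iterate out of $\mathscr{M}$ (a quadratic tilt shifts and shrinks the Gaussian components), but the projection~\eqref{eq:step2-project-alg3} returns $\tilde P_{t+1}\in\mathscr{M}$, and, since $\mathscr{M}$ is convex and contains $N_0$, the fixed-share mixture~\eqref{eq:step3-fix-share-alg3} again lies in $\mathscr{M}$. Given $P_t\in\mathscr{M}$, Lemma~\ref{lem:mixablity-surr} supplies the negative mixability gap $\tilde f_t(\w_t)\le m_t(P_t)$ with $\w_t=\E_{\u\sim P_t}[\u]$, so $\mathtt{(A)}\le0$; the same invariant gives $\w_t\in\W$, hence a proper prediction.

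For $\mathtt{(B)}$, I would re-run the telescoping proof of Lemma~\ref{lem:mixability-regret} with $\eta$ replaced by $\gamma/2$ and $f_t$ by $\tilde f_t$; the only new ingredient is the projection, which is benign. Since $\tilde P_{t+1}$ is the KL-projection of $P'_{t+1}$ onto the convex set $\mathscr{M}$, the generalized Pythagorean inequality for relative entropy~\citep{csiszar1975divergence} gives $\KL{Q}{\tilde P_{t+1}}\le\KL{Q}{P'_{t+1}}$ for every $Q\in\mathscr{M}$, so replacing $P'_{t+1}$ by $\tilde P_{t+1}$ in the per-round mix-loss identity only tightens the bound --- provided the Gaussian comparator $Q_t$ itself lies in $\mathscr{M}$. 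Membership $Q_t=\N(\u_t,\sigma^2I_d)\in\mathscr{M}$ holds exactly when $\u_t\in\W$ and $\sigma^2I_d\in\S$, i.e. $1/T\le\sigma^2\le1$; this is precisely why the eigenvalue window in~\eqref{eq:GMM-set} is $[1/T,1]$, and the tuning below stays inside it. The remaining manipulations --- bounding $\KL{Q_1}{P_1}$ as a Gaussian--Gaussian divergence, controlling $\sum_{t\ge2}\int(Q_t-Q_{t-1})\ln(1/P_t)\,\mathrm{d}\u$ through $P_t\ge\mu N_0$ and the Gaussianity of the $Q_t$, and $(T-1)\ln\tfrac1{1-\mu}=\O(1)$ for $\mu=1/T$ --- are identical to Appendix~\ref{sec:proof-main} and yield $\mathtt{(B)}\lesssim\tfrac1\gamma(d\log(\gamma T)\,P_T/\sigma+d\sigma^2T+d\log(1/\sigma))$. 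For $\mathtt{(C)}$, the quadratic form of $\tilde f_t$ replaces the smoothness argument: $\E_{\u\sim Q_t}[\tilde f_t(\u)-\tilde f_t(\u_t)]\le d\sigma^2\gamma\norm{\g_t}_2^2/2\le d\sigma^2\gamma G^2/2$ by Assumption~\ref{assum:gradient}, hence $\mathtt{(C)}\le d\sigma^2\gamma G^2T/2$. Adding $\mathtt{(A)}+\mathtt{(B)}+\mathtt{(C)}$ leaves, since $\gamma=\Theta(1)$, a bound of the form $d\log T\cdot P_T/\sigma+d\sigma^2T+d\log(1/\sigma)$, which I tune exactly as in the proof of Theorem~\ref{thm:main}: take $\sigma^2=1/T$ when $P_T\le\sqrt{1/T}$ and $\sigma=\Theta(P_T^{1/3}T^{-1/3})$ otherwise, clipping $\sigma^2$ at $1$ (which only affects the vacuous regime $P_T\gtrsim T$ where $\DReg\le\O(GDT)$ is already below the claim). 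Both choices respect $\sigma^2\in[1/T,1]$, giving $\DReg\le\O(d\log T\,(1+T^{1/3}P_T^{2/3}))$.

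The step I expect to be the main obstacle is establishing the negative mixability gap of $\mathtt{(A)}$, i.e. Lemma~\ref{lem:mixablity-surr}, because $\tilde f_t$ is \emph{not} exp-concave (equivalently, mixable) over all of $\R^d$: $e^{-\gamma\tilde f_t(\u)/2}$ is concave only where $\lvert\gamma\,\g_t^\top(\u-\w_t)\rvert$ is small, which $\gamma\le1/(4GD)$ guarantees on $\W$ but not for $\u$ far from $\w_t$. The constraint set $\mathscr{M}$ exists exactly to confine $P_t$ to Gaussian mixtures whose component means lie in $\W$ and whose covariances have eigenvalues in $[1/T,1]$, a regime in which the mixability inequality can still be recovered for the \emph{mean} predictor $\w_t=\E_{P_t}[\w]$; making $\mathscr{M}$ simultaneously rich enough to contain every comparator $Q_t=\N(\u_t,\sigma^2I_d)$ used in the tuning, structured enough for this gap bound, and stable under the fixed-share mixing is the delicate balance driving the design. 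A secondary point is checking that $\mathscr{M}$ is convex and closed in total variation so that the KL-projection in~\eqref{eq:step2-project-alg3} is well defined, which follows from~\citet{csiszar1975divergence}.
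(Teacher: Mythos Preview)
Your proposal is correct and follows essentially the same route as the paper's proof: the surrogate-loss reduction, the three-term decomposition with $Q_t=\N(\u_t,\sigma^2I_d)$, Lemma~\ref{lem:mixablity-surr} for $\mathtt{(A)}$, a projected version of Lemma~\ref{lem:mixability-regret} using the generalized Pythagorean inequality for $\mathtt{(B)}$ (the paper's Lemma~\ref{lem:mixability-regret-project}), the direct quadratic computation for $\mathtt{(C)}$, and the same two-case tuning of $\sigma$; your observation that $Q_t\in\mathscr{M}$ forces $\sigma^2\in[1/T,1]$ is exactly the reason for the eigenvalue window in~\eqref{eq:GMM-set}. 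The one place where ``identical to Appendix~\ref{sec:proof-main}'' needs a small adjustment is the upper bound on $\ln P_t(\u)$ over the region $\{P_t(\u)>1\}$: Lemma~\ref{lem:upper-bound-density} relies on smoothness (Assumption~\ref{assum:smooth}), which is not assumed here, and the paper instead uses the invariant $P_t\in\mathscr{M}$ you already established --- the eigenvalue floor $\lambda_{\min}(\Sigma)\ge 1/T$ on each Gaussian component yields $\ln P_t(\u)\le\tfrac{d}{2}\ln(T/2\pi)$ directly (Lemma~\ref{lem:bounded-max-M}).
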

The proof of Theorem~\ref{thm:main-general-oco} is presented in Appendix~\ref{appendix:proof-section-oco}. Theorem~\ref{thm:main-general-oco} shows that our method achieves a nearly optimal dynamic regret for the general OCO with exp-concave functions. Notably, the result holds under proper learning and is free from the smoothness assumption.

\section{Conclusion}
\label{sec:conclusion}
This paper presented a novel perspective on learning with curved loss functions in non-stationary environments. Building on the concept of mixability, we proposed a simple yet versatile framework that achieves an improved dynamic regret bound with better dependence on $d$. Our analytical framework also offers additional potential, as it allows the selection of different families of distributions $P_t$ and reference distributions $Q_t$ for the fixed-share method, possibly enabling the algorithm to adapt to various geometric properties of the space. From a computational perspective, we provided efficient implementations for the squared loss. A future direction is to develop computationally more efficient methods for the logistic loss and the general OCO setting.

\section*{Acknowledgements}
Peng Zhao was supported by NSFC (62361146852) and the Xiaomi Foundation. MS was supported by the Institute for AI and Beyond, UTokyo. The authors would like to thank Yu-Xiang Wang for the insightful and helpful discussions. Yu-Jie Zhang thanks Zhiyuan Zhan and Yivan Zhang for their helpful discussions on the projection set. We also thank the anonymous reviewers for their valuable comments and suggestions, which helped improve this paper.

\section*{Impact Statement}
This paper presents work whose goal is to advance the field of 
Machine Learning. There are many potential societal consequences 
of our work, none which we feel must be specifically highlighted here.

\bibliography{myBib.bib}

\begin{thebibliography}{40}
\providecommand{\natexlab}[1]{#1}
\providecommand{\url}[1]{\texttt{#1}}
\expandafter\ifx\csname urlstyle\endcsname\relax
  \providecommand{\doi}[1]{doi: #1}\else
  \providecommand{\doi}{doi: \begingroup \urlstyle{rm}\Url}\fi

\bibitem[Adamskiy et~al.(2016)Adamskiy, Koolen, Chernov, and Vovk]{JMLR'16:closer-adaptive-regret}
Adamskiy, D., Koolen, W.~M., Chernov, A.~V., and Vovk, V.
\newblock A closer look at adaptive regret.
\newblock \emph{Journal of Machine Learning Research}, 17:\penalty0 23:1--23:21, 2016.

\bibitem[Baby \& Wang(2021)Baby and Wang]{COLT'21:baby-strong-convex}
Baby, D. and Wang, Y.-X.
\newblock Optimal dynamic regret in exp-concave online learning.
\newblock In \emph{Proceedings of the 34th Conference on Learning Theory (COLT)}, pp.\  359--409, 2021.

\bibitem[Baby \& Wang(2022{\natexlab{a}})Baby and Wang]{AISTATS'22:sc-proper}
Baby, D. and Wang, Y.-X.
\newblock Optimal dynamic regret in proper online learning with strongly convex losses and beyond.
\newblock In \emph{Proceedings of the 25th International Conference on Artificial Intelligence and Statistics (AISTATS)}, pp.\  1805--1845, 2022{\natexlab{a}}.

\bibitem[Baby \& Wang(2022{\natexlab{b}})Baby and Wang]{NeurIPS'22:Baby-LQR}
Baby, D. and Wang, Y.-X.
\newblock Optimal dynamic regret in {LQR} control.
\newblock In \emph{Advances in Neural Information Processing Systems 35 (NeurIPS)}, pp.\  24879--24892, 2022{\natexlab{b}}.

\bibitem[Baby et~al.(2023)Baby, Xu, and Wang]{TMLR'23:Baby-contextual-pricing}
Baby, D., Xu, J., and Wang, Y.-X.
\newblock Non-stationary contextual pricing with safety constraints.
\newblock \emph{Transactions on Machine Learning Research}, 2023, 2023.

\bibitem[Besbes et~al.(2015)Besbes, Gur, and Zeevi]{OR'15:dynamic-function-VT}
Besbes, O., Gur, Y., and Zeevi, A.~J.
\newblock Non-stationary stochastic optimization.
\newblock \emph{Operations Research}, 63\penalty0 (5):\penalty0 1227--1244, 2015.

\bibitem[Bubeck et~al.(2021)Bubeck, Eldan, and Lee]{JACM'21:kernel-bandit}
Bubeck, S., Eldan, R., and Lee, Y.~T.
\newblock Kernel-based methods for bandit convex optimization.
\newblock \emph{Journal of the ACM}, 68\penalty0 (4), 2021.

\bibitem[Cesa-Bianchi \& Lugosi(2006)Cesa-Bianchi and Lugosi]{book/Cambridge/cesa2006prediction}
Cesa-Bianchi, N. and Lugosi, G.
\newblock \emph{Prediction, {L}earning, and {G}ames}.
\newblock Cambridge {U}niversity {P}ress, 2006.

\bibitem[Cesa{-}Bianchi et~al.(2012{\natexlab{a}})Cesa{-}Bianchi, Gaillard, Lugosi, and Stoltz]{arXiv'12:mixability-expconcave}
Cesa{-}Bianchi, N., Gaillard, P., Lugosi, G., and Stoltz, G.
\newblock A new look at shifting regret.
\newblock \emph{ArXiv preprint}, arXiv:1202.3323, 2012{\natexlab{a}}.

\bibitem[Cesa{-}Bianchi et~al.(2012{\natexlab{b}})Cesa{-}Bianchi, Gaillard, Lugosi, and Stoltz]{conf/nips/Cesa-BianchiGLS12}
Cesa{-}Bianchi, N., Gaillard, P., Lugosi, G., and Stoltz, G.
\newblock Mirror descent meets fixed share (and feels no regret).
\newblock In \emph{Advances in Neural Information Processing Systems 25 (NIPS)}, pp.\  989--997, 2012{\natexlab{b}}.

\bibitem[Csisz{\'a}r(1975)]{csiszar1975divergence}
Csisz{\'a}r, I.
\newblock I-divergence geometry of probability distributions and minimization problems.
\newblock \emph{The Annals of Probability}, pp.\  146--158, 1975.

\bibitem[Csisz{\'a}r \& Matus(2003)Csisz{\'a}r and Matus]{TIT'03:I-projection-revisit}
Csisz{\'a}r, I. and Matus, F.
\newblock Information projections revisited.
\newblock \emph{IEEE Transactions on Information Theory}, 49\penalty0 (6):\penalty0 1474--1490, 2003.

\bibitem[Cutkosky(2020)]{ICML'20:Ashok}
Cutkosky, A.
\newblock Parameter-free, dynamic, and strongly-adaptive online learning.
\newblock In \emph{Proceedings of the 37th International Conference on Machine Learning (ICML)}, pp.\  2250--2259, 2020.

\bibitem[Foster et~al.(2018)Foster, Kale, Luo, Mohri, and Sridharan]{COLT'18:Foster-logistic}
Foster, D.~J., Kale, S., Luo, H., Mohri, M., and Sridharan, K.
\newblock Logistic regression: The importance of being improper.
\newblock In \emph{Proceedings of the 31st Conference on Learning Theory (COLT)}, pp.\  167--208, 2018.

\bibitem[Hazan(2016)]{book'16:Hazan-OCO}
Hazan, E.
\newblock Introduction to {O}nline {C}onvex {O}ptimization.
\newblock \emph{Foundations and Trends in Optimization}, 2\penalty0 (3-4):\penalty0 157--325, 2016.

\bibitem[Hazan \& Seshadhri(2009)Hazan and Seshadhri]{ICML'09:Hazan-adaptive}
Hazan, E. and Seshadhri, C.
\newblock Efficient learning algorithms for changing environments.
\newblock In \emph{Proceedings of the 26th International Conference on Machine Learning (ICML)}, pp.\  393--400, 2009.

\bibitem[Hazan et~al.(2007)Hazan, Agarwal, and Kale]{MLJ'07:ONS}
Hazan, E., Agarwal, A., and Kale, S.
\newblock Logarithmic regret algorithms for online convex optimization.
\newblock \emph{Machine Learning}, 69\penalty0 (2-3):\penalty0 169--192, 2007.

\bibitem[Ihara(1993)]{Book:information-theory}
Ihara, S.
\newblock \emph{Information Theory for Continuous Systems}.
\newblock World Scientific, 1993.

\bibitem[Ito(2020)]{AISTATS'20:BCO-EW}
Ito, S.
\newblock An optimal algorithm for bandit convex optimization with strongly-convex and smooth loss.
\newblock In \emph{Proceedings of the 23rd International Conference on Artificial Intelligence and Statistics (AISTATS)}, pp.\  2229--2239, 2020.

\bibitem[Jacobsen \& Cutkosky(2023)Jacobsen and Cutkosky]{ICML'23:Jacobsen-Unbounded}
Jacobsen, A. and Cutkosky, A.
\newblock Unconstrained online learning with unbounded losses.
\newblock In \emph{Proceedings of the 40th International Conference on Machine Learning (ICML)}, pp.\  14590--14630, 2023.

\bibitem[J{\'{e}}z{\'{e}}quel et~al.(2021)J{\'{e}}z{\'{e}}quel, Gaillard, and Rudi]{NeurIPS'21:efficient-LR-imporper}
J{\'{e}}z{\'{e}}quel, R., Gaillard, P., and Rudi, A.
\newblock Mixability made efficient: Fast online multiclass logistic regression.
\newblock In \emph{Advances in Neural Information Processing Systems 34 (NeurIPS)}, pp.\  23692--23702, 2021.

\bibitem[Li et~al.(2024)Li, Zhao, and Zhou]{NeurIPS'24:dynamicMDP}
Li, L.-F., Zhao, P., and Zhou, Z.-H.
\newblock Near-optimal dynamic regret for adversarial linear mixture mdps.
\newblock In \emph{Advances in Neural Information Processing Systems 37 (NeurIPS)}, pp.\  55858--55883, 2024.

\bibitem[Mayo et~al.(2022)Mayo, Hadiji, and van Erven]{COLT'22:mixbaility-parameter-free}
Mayo, J.~J., Hadiji, H., and van Erven, T.
\newblock Scale-free unconstrained online learning for curved losses.
\newblock In \emph{Proceedings of the 35th Conference on Learning Theory (COLT)}, pp.\  4464--4497, 2022.

\bibitem[Rakhlin et~al.(2015)Rakhlin, Sridharan, and Tewari]{JMLR'15:RakhlinST}
Rakhlin, A., Sridharan, K., and Tewari, A.
\newblock Online learning via sequential complexities.
\newblock \emph{Journal of Machine Learning Research}, 16:\penalty0 155--186, 2015.

\bibitem[Shalev-Shwartz \& Ben-David(2014)Shalev-Shwartz and Ben-David]{book'14:shalev-understanding-ml}
Shalev-Shwartz, S. and Ben-David, S.
\newblock \emph{Understanding {M}achine {L}earning: {F}rom {T}heory to {A}lgorithms}.
\newblock Cambridge {U}niversity {P}ress, 2014.

\bibitem[van~der Hoeven et~al.(2018)van~der Hoeven, van Erven, and Kotlowski]{COLT'18:EW-many-faces}
van~der Hoeven, D., van Erven, T., and Kotlowski, W.
\newblock The many faces of exponential weights in online learning.
\newblock In \emph{Proceedings of the 31st Conference on Learning Theory (COLT)}, pp.\  2067--2092, 2018.

\bibitem[van Erven \& Koolen(2016)van Erven and Koolen]{NIPS'16:MetaGrad}
van Erven, T. and Koolen, W.~M.
\newblock Metagrad: Multiple learning rates in online learning.
\newblock In \emph{Advances in Neural Information Processing Systems 29 (NIPS)}, pp.\  3666--3674, 2016.

\bibitem[van Erven et~al.(2012)van Erven, Gr{\"{u}}nwald, Reid, and Williamson]{NeurIPS'21:mixabiilty-statistical-learning}
van Erven, T., Gr{\"{u}}nwald, P., Reid, M.~D., and Williamson, R.~C.
\newblock Mixability in statistical learning.
\newblock In \emph{Advances in Neural Information Processing Systems 25 (NeurIPS)}, pp.\  1691--1699, 2012.

\bibitem[van Erven et~al.(2015)van Erven, Gr{\"{u}}nwald, Mehta, Reid, and Williamson]{JMLR'15:fast-rate}
van Erven, T., Gr{\"{u}}nwald, P.~D., Mehta, N.~A., Reid, M.~D., and Williamson, R.~C.
\newblock Fast rates in statistical and online learning.
\newblock \emph{Journal of Machine Learning Research}, 16:\penalty0 1793--1861, 2015.

\bibitem[Vovk(1998)]{Vovk'98:mixabiilty-PEA}
Vovk, V.
\newblock A game of prediction with expert advice.
\newblock \emph{Journal of Computer and System Sciences}, 56\penalty0 (2):\penalty0 153--173, 1998.

\bibitem[Vovk(2001)]{02:vovk-mixability}
Vovk, V.
\newblock Competitive on-line statistics.
\newblock \emph{International Statistical Review}, 69\penalty0 (2):\penalty0 213--248, 2001.

\bibitem[Vovk \& Zhdanov(2008)Vovk and Zhdanov]{ICML'08:Vovk-Brier}
Vovk, V. and Zhdanov, F.
\newblock Prediction with expert advice for the brier game.
\newblock In \emph{Proceedings of the 25th International Conference on Machine Learning (ICML)}, pp.\  1104--1111, 2008.

\bibitem[Wei \& Luo(2021)Wei and Luo]{COLT'21:black-box}
Wei, C.-Y. and Luo, H.
\newblock Non-stationary reinforcement learning without prior knowledge: An optimal black-box approach.
\newblock In \emph{Proceedings of the 34th Conference on Learning Theory (COLT)}, pp.\  4300--4354, 2021.

\bibitem[Zhang et~al.(2018)Zhang, Lu, and Zhou]{NIPS'18:Zhang-Ader}
Zhang, L., Lu, S., and Zhou, Z.-H.
\newblock Adaptive online learning in dynamic environments.
\newblock In \emph{Advances in Neural Information Processing Systems 31 (NeurIPS)}, pp.\  1330--1340, 2018.

\bibitem[Zhang et~al.(2023{\natexlab{a}})Zhang, Zhang, Zhao, and Sugiyama]{NeurIPS'23:CovShift}
Zhang, Y.-J., Zhang, Z.-Y., Zhao, P., and Sugiyama, M.
\newblock Adapting to continuous covariate shift via online density ratio estimation.
\newblock In \emph{Advances in Neural Information Processing Systems 36 (NeurIPS)}, pp.\  29074--29113, 2023{\natexlab{a}}.

\bibitem[Zhang et~al.(2023{\natexlab{b}})Zhang, Cutkosky, and Paschalidis]{NIPS'23:sparse-coding}
Zhang, Z., Cutkosky, A., and Paschalidis, Y.
\newblock Unconstrained dynamic regret via sparse coding.
\newblock In \emph{Advances in Neural Information Processing Systems 36 (NeurIPS)}, pp.\  3477--3486, 2023{\natexlab{b}}.

\bibitem[Zhao et~al.(2020)Zhao, Zhang, Zhang, and Zhou]{NIPS'20:sword}
Zhao, P., Zhang, Y.-J., Zhang, L., and Zhou, Z.-H.
\newblock Dynamic regret of convex and smooth functions.
\newblock In \emph{Advances in Neural Information Processing Systems 33 (NeurIPS)}, pp.\  12510--12520, 2020.

\bibitem[Zhao et~al.(2022)Zhao, Xie, Zhang, and Zhou]{NeurIPS'22:efficient}
Zhao, P., Xie, Y.-F., Zhang, L., and Zhou, Z.-H.
\newblock Efficient methods for non-stationary online learning.
\newblock In \emph{Advances in Neural Information Processing Systems 35 (NeurIPS)}, pp.\  11573--11585, 2022.

\bibitem[Zhao et~al.(2024)Zhao, Zhang, Zhang, and Zhou]{JMLR'24:Sword++}
Zhao, P., Zhang, Y.-J., Zhang, L., and Zhou, Z.-H.
\newblock Adaptivity and non-stationarity: Problem-dependent dynamic regret for online convex optimization.
\newblock \emph{Journal of Machine Learning Research}, 25\penalty0 (98):\penalty0 1 -- 52, 2024.

\bibitem[Zinkevich(2003)]{ICML'03:zinkvich}
Zinkevich, M.
\newblock Online convex programming and generalized infinitesimal gradient ascent.
\newblock In \emph{Proceedings of the 20th International Conference on Machine Learning (ICML)}, pp.\  928--936, 2003.

\end{thebibliography}
\bibliographystyle{icml2025}

\newpage
\appendix
\onecolumn

\section{Omitted Proofs for Section~\ref{sec:generic_framework}}
\label{appendix:proof-section-generic-framework}

\subsection{Useful Lemmas}

\begin{myLemma}
\label{lem:upper-bound-density}
Assume the loss function $f_t(\u)$ is convex and $\beta$-smooth. Let $P_t$ be the distribution returned by the fixed-share update ~\eqref{eq:step2-FS}. Then, for any $\mu\in[0,1]$, we have $\ln P_t(\u) \leq \frac{d}{2}\ln\left(\frac{\eta\beta t}{2\pi}\right)$. 
\end{myLemma}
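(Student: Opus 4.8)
\textbf{Proof plan for Lemma~\ref{lem:upper-bound-density}.} The plan is to first expose $P_t$ as a convex combination of ``tilted Gaussians'' and then bound the supremum of each component's density by a Laplace-type estimate that turns $\beta$-smoothness into a lower bound on the partition function.

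\emph{Step 1 (structure of $P_t$).} I would show, by induction on $t$ (equivalently, by invoking the FLH representation of \pref{thm:equavalence}), that
\[
  P_t(\u) \;=\; \sum_{i=1}^{t} p_t^{(i)}\, Q_t^{(i)}(\u), \qquad Q_t^{(i)}(\u) \;\propto\; N_0(\u)\exp\!\Big(-\eta\sum_{s=i}^{t-1} f_s(\u)\Big),
\]
with weights $p_t^{(i)}\ge 0$ and $\sum_{i=1}^t p_t^{(i)} = 1$ (the empty sum at $i=t$ giving $Q_t^{(t)}=N_0$). The induction step is immediate: the exponential-weight step~\eqref{eq:step1-EW} multiplies every component by $e^{-\eta f_t}$, turning $Q_t^{(i)}$ into $Q_{t+1}^{(i)}$ after renormalisation while keeping the mixture weights in the simplex, and the fixed-share step~\eqref{eq:step2-FS} rescales the weights by $1-\mu$ and appends a fresh $N_0=Q_{t+1}^{(t+1)}$ with weight $\mu$, so the new weights again sum to one. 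Hence $P_t(\u) \le \max_{1\le i\le t} Q_t^{(i)}(\u)$, and it suffices to bound each component.

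\emph{Step 2 (sup-density of a tilted Gaussian).} Fix $i$ and write $Q_t^{(i)}(\u) = e^{-g_i(\u)} / \int_{\R^d} e^{-g_i(\v)}\diff\v$ with $g_i(\u) = \tfrac12\norm{\u-\w_0}_2^2 + \eta\sum_{s=i}^{t-1} f_s(\u)$. Since every $f_s$ is convex, $g_i$ is $1$-strongly convex and thus has a unique minimiser $\u_i^\star$ with $\nabla g_i(\u_i^\star)=0$; since every $f_s$ is $\beta$-smooth, $g_i$ is $L_i$-smooth with $L_i = 1+\eta\beta(t-i)$. Smoothness then gives the quadratic upper bound $g_i(\v)\le g_i(\u_i^\star)+\tfrac{L_i}{2}\norm{\v-\u_i^\star}_2^2$, so
\[
  \int_{\R^d} e^{-g_i(\v)}\diff\v \;\ge\; e^{-g_i(\u_i^\star)}\!\int_{\R^d}\! e^{-\frac{L_i}{2}\norm{\v-\u_i^\star}_2^2}\diff\v \;=\; e^{-g_i(\u_i^\star)}\Big(\tfrac{2\pi}{L_i}\Big)^{d/2},
\]
whereas the numerator obeys $e^{-g_i(\u)}\le e^{-g_i(\u_i^\star)}$ for all $\u$. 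Dividing yields $Q_t^{(i)}(\u) \le (L_i/2\pi)^{d/2}$.

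\emph{Step 3 (conclusion).} Since $i\ge 1$ we have $L_i = 1+\eta\beta(t-i) \le 1+\eta\beta(t-1) \le \eta\beta t$, the last step using $\eta\beta\ge 1$; in the complementary range one only loses an additive $O(d)$ term, which is immaterial in every application of the lemma. Combining with Steps~1--2,
\[
  \ln P_t(\u) \;\le\; \max_{1\le i\le t}\tfrac{d}{2}\ln\!\Big(\tfrac{L_i}{2\pi}\Big) \;\le\; \tfrac{d}{2}\ln\!\Big(\tfrac{\eta\beta t}{2\pi}\Big).
\]
The main obstacle will be Step~2, namely converting ``$g_i$ is $L_i$-smooth'' into the Gaussian-type lower bound on the normaliser $\int e^{-g_i}$; the mixture decomposition of Step~1 and the aggregation of the smoothness constant $L_i = 1+\eta\beta(t-i)$ across the $t-i$ updates are routine, and the only cosmetic point is reconciling $1+\eta\beta(t-1)$ with the stated $\eta\beta t$.
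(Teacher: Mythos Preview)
Your proposal is correct and matches the paper's proof almost exactly: the paper invokes the FLH equivalence (Theorem~\ref{thm:equavalence}) to write $P_t$ as a mixture of tilted Gaussians $P_t^{(i)}(\u)\propto \exp(-F_t^{(i)}(\u))$ with $F_t^{(i)}(\u)=\tfrac12\norm{\u-\w_0}_2^2+\eta\sum_{s=i}^{t-1}f_s(\u)$, then uses $\beta$-smoothness to lower-bound the normaliser via the Gaussian integral, exactly your Steps~1--2. The only difference is cosmetic: the paper takes the cruder smoothness constant $1+\eta\beta t$ uniformly over $i$ (rather than your sharper $1+\eta\beta(t-i)$) and silently replaces $1+\eta\beta t$ by $\eta\beta t$ in the final display, which is the same ``cosmetic point'' you already flagged in Step~3.
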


\begin{proof}[Proof of Lemma~\ref{lem:upper-bound-density}]
The lemma can be proved by the equivalence between the fixed-share-type method (Algorithm~\ref{alg:hedge-oco}) and the FLH-type method (Algorithm~\ref{alg:flh-ew}). Specifically, according to Theorem~\ref{thm:equavalence}, the distribution $P_t$ returned by the fixed-share update~\eqref{eq:step2-FS} can be written as  
\begin{align*}  
  P_{t}(\u) = \sum_{\B_i\in\H_{t}} p_t^{(i)}P_t^{(i)}(\u), \quad \forall \u\in\R^d,  
\end{align*}  
where $p_t^{(i)} \in [0,1]$ is the weight such that $\sum_{\B_i\in\H_i} p_t^{(i)} = 1$. The distribution $P_t^{(i)}(\u)$ is updated via~\eqref{eq:update-rule} and can be further expressed as
\begin{align}
  P^{(i)}_t(\u) = \frac{\exp\big(- F^{(i)}_t(\u)\big)}{\int_{\u\in\R^d}\exp\big({-F^{(i)}_t(\u)}\big) \mathrm{d}\u}, \forall \u\in\R^d  \mbox{ and } \B_i\in \mathcal{H}_t\label{eq:proof-lemma1-P_ti}.
\end{align}
In the above, we denote by $F^{(i)}_t(\u) = \frac{1}{2}\norm{\u-\w_0}_2^2 + \eta\sum_{s=i}^{t-1} f_s(\u)$. Let $\v_{t,i}^* = \argmin_{\u\in\R^d} F^{(i)}_t(\u)$. Given the loss function $f_t$ is $\beta$-smooth, the first-order optimality condition implies
\begin{align*}
  F^{(i)}_t(\u) \leq F^{(i)}_t(\v_{t,i}^*) + \frac{1+\eta\beta t}{2} \norm{\u-\v_{t,i}^*}_2^2, \ \forall \u\in\R^d  \mbox{ and } \B_i\in \mathcal{H}_t.
\end{align*}

Then, the denominator in~\eqref{eq:proof-lemma1-P_ti} can be lower bounded by
\begin{align}
  \int_{\u\in\R^d} \exp\big(-F^{(i)}_t(\u)\big) \mathrm{d} \u \geq{}& \exp(-F^{(i)}_t(\v_{t,i}^*)\big) \cdot \int_{\u\in\R^d} \exp\left(-\frac{1+\eta\beta t }{2}\norm{\u-\v_{t,i}^*}_2^2\right) \mathrm{d} \u\notag\\
   ={}& \exp\big(- F^{(i)}_t(\v_{t,i}^*)\big) \cdot \left(\frac{2\pi}{ \eta\beta t}\right)^{d/2}\label{eq:proof-lemma-denominator}.
\end{align}

For all $\u\in\R^d$ and $\B_i\in \mathcal{H}_t$, plugging~\eqref{eq:proof-lemma-denominator} into~\eqref{eq:proof-lemma1-P_ti} yields
\begin{align*}
  P_t^{(i)}(\u) \leq \exp\big(F_t^{(i)}(\v_{t,i}^*) - F_t^{(i)}(\u)\big)\cdot\Bigl(\tfrac{\eta\beta t}{2\pi}\Bigr)^{\tfrac{d}{2}} \leq \Bigl(\tfrac{\eta\beta t}{2\pi}\Bigr)^{\tfrac{d}{2}}.
\end{align*}
where the last inequality is due to the optimality of $\v_{t,i}^*$. Given $P_t(\u)$ is a weighted combination of $P_t^{(i)}(\u)$ and $P_t^{(i)}(\u)\geq 0,\forall \u\in\R^d$ , we have $\max_{\u\in\R^d} P_t(\u) \leq \max_{\B_i\in \mathcal{H}_t, \u\in\R^d} P_t^{(i)}(\u) \leq (\eta\beta t / (2\pi))^{\frac{d}{2}}$. We complete the proof by taking logarithm on both sides.
\end{proof}

\begin{myLemma}
\label{lem:Gaussian-intergral}
Let $P = \N(\u_p,\sigma^2 I_d)$ be a Gaussian distribution with mean $\u_p\in\R^d$ and covariance matrix $\sigma^2 I_d$ and $Q = \N(\u_q,\sigma^2 I_d)$. Besides, define $N = \N(\w_0,\lambda^2 I_d)$. Then, let $\W = \{\u\in\R^d \mid P(\u) \geq Q(\u)\}$. Suppose $\norm{\u_p-\w_0}_2\leq D$ and $\norm{\u_q-\w_0}\leq D$. Then, we have 
\begin{align*}
    \int_{\u\in\W} \left(P(\u)-Q(\u)\right)\ln \left(\frac{1}{N(\u)}\right) \mathrm{d}\u \leq \left(\frac{d\ln (2\pi \lambda^2)}{2} + \frac{D}{\lambda^2}\right) \norm{P-Q}_1 + \frac{2d\sigma^2}{\lambda^2},
\end{align*}
where $\norm{P-Q}_1 = \int_{\w\in\R^d} \vert P(\u) - Q(\u)\vert \mathrm{d}\u$.
\end{myLemma}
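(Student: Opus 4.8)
\begin{proofsketch}
The plan is to make the Gaussian density $N$ explicit and then reduce everything to a handful of elementary Gaussian integrals. Since $N=\N(\w_0,\lambda^2 I_d)$, we have $\ln\frac{1}{N(\u)}=\frac{d}{2}\ln(2\pi\lambda^2)+\frac{1}{2\lambda^2}\norm{\u-\w_0}_2^2$, so the left-hand side splits into a part that is constant in $\u$ and a quadratic part:
\begin{align*}
\int_{\u\in\W}\big(P(\u)-Q(\u)\big)\ln\tfrac{1}{N(\u)}\diff\u
={}&\tfrac{d}{2}\ln(2\pi\lambda^2)\int_{\u\in\W}\big(P(\u)-Q(\u)\big)\diff\u\\
{}&+\tfrac{1}{2\lambda^2}\int_{\u\in\W}\big(P(\u)-Q(\u)\big)\norm{\u-\w_0}_2^2\diff\u .
\end{align*}
First I would dispose of the constant-in-$\u$ part. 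Since $\W=\{\u:P(\u)\ge Q(\u)\}$ is exactly the region on which $P-Q\ge 0$ and $\int_{\R^d}(P-Q)\diff\u=0$, it follows that $\int_{\W}(P-Q)\diff\u=\tfrac12\norm{P-Q}_1\le\norm{P-Q}_1$, which already accounts for the $\tfrac{d}{2}\ln(2\pi\lambda^2)\norm{P-Q}_1$ contribution.

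The heart of the matter is the quadratic part, i.e.\ bounding $\int_{\W}(P-Q)\norm{\u-\w_0}_2^2\diff\u$ by $2D\norm{P-Q}_1+4d\sigma^2$. The structural fact I would exploit is that $P$ and $Q$ share the covariance $\sigma^2I_d$, so $P(\u)\ge Q(\u)\iff\norm{\u-\u_p}_2\le\norm{\u-\u_q}_2$ and $\W$ is the halfspace $\bigl\{\u:\inner{\u-\tfrac{\u_p+\u_q}{2}}{\u_p-\u_q}\ge0\bigr\}$; in particular $\u_p\in\W$ and $\u_q\notin\W$, and reflection across $\partial\W$ interchanges the pairs $(P,\W)$ and $(Q,\W^c)$. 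I would then combine (i) the centred second-moment identities $\int_{\R^d}P(\u)\norm{\u-\u_p}_2^2\diff\u=\int_{\R^d}Q(\u)\norm{\u-\u_q}_2^2\diff\u=d\sigma^2$, (ii) the inequality $\norm{\u-\w_0}_2^2\le 2\norm{\u-\u_p}_2^2+2\norm{\u_p-\w_0}_2^2\le 2\norm{\u-\u_p}_2^2+2D^2$ applied inside the $P$-integral and symmetrically with $\u_q$ inside the $Q$-integral, and (iii) the identity $\int_{\W}(P-Q)\diff\u=\tfrac12\norm{P-Q}_1$ to control the ``mass'' of $P-Q$ on $\W$, so that the portion of $P$'s second moment over $\W$ that is not cancelled by that of $Q$ is of order $D\norm{P-Q}_1$, while the $\sigma^2$-variance survives only as an additive term. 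Feeding the resulting estimate into the quadratic part gives $\tfrac{D}{\lambda^2}\norm{P-Q}_1+\tfrac{2d\sigma^2}{\lambda^2}$, and summing with the constant part yields the claim.

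The step I expect to be the main obstacle is precisely this quadratic estimate, and specifically keeping the dependence on $D$ \emph{linear}: the lazy bound $\int_{\W}(P-Q)\norm{\u-\w_0}_2^2\diff\u\le\int_{\R^d}P(\u)\norm{\u-\w_0}_2^2\diff\u=\norm{\u_p-\w_0}_2^2+d\sigma^2\le D^2+d\sigma^2$ throws away a whole factor of $D$ and makes no use of $\norm{P-Q}_1$. Recovering the sharper form requires genuinely using that the mass $\tfrac12\norm{P-Q}_1$ carried by $P-Q$ over $\W$ is small exactly when $P$ and $Q$ are close, together with the alignment of the halfspace $\W$ with the direction $\u_p-\u_q$, so that the ``uncancelled'' second-moment contribution scales with the total-variation distance rather than being of order one. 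If the exactly linear-in-$D$ constant proves awkward, the cruder bound with $D^2$ in place of $D$ follows immediately from the lazy estimate and is harmless for every downstream use, since $D$ is an absolute constant.
\end{proofsketch}
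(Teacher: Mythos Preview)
Your decomposition into a constant-in-$\u$ part and a quadratic part is exactly the paper's, and your treatment of the constant part is fine. The divergence is entirely in how you handle $\int_\W(P-Q)\norm{\u-\w_0}_2^2\diff\u$.

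The paper's argument there is much simpler than the halfspace/reflection route you sketch. It splits $\W$ into $\W\cap\B(\w_0,2D)$ and its complement. Inside the ball, $\norm{\u-\w_0}_2^2\le 4D^2$, so that piece is at most $4D^2\norm{P-Q}_1$. Outside the ball one simply drops $-Q$ (legal since $P\ge Q$ on $\W$), notes that $\norm{\u-\w_0}_2>2D\ge 2\norm{\u_p-\w_0}_2$ forces $\norm{\u-\w_0}_2\le 2\norm{\u-\u_p}_2$, and bounds what remains by $4\int_{\R^d}P(\u)\norm{\u-\u_p}_2^2\diff\u=4d\sigma^2$. The halfspace geometry is never used. (Incidentally, the paper's calculation produces $2D^2/\lambda^2$, not the $D/\lambda^2$ printed in the statement---an evident typo---so the linear-vs-quadratic-in-$D$ issue you worry about is a red herring.)

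More importantly, your fallback contains a real gap. You write that ``the cruder bound with $D^2$ in place of $D$ follows immediately from the lazy estimate,'' but your lazy estimate is $D^2+d\sigma^2$ \emph{without} any $\norm{P-Q}_1$ factor. That factor is not cosmetic: the lemma is applied once per round with $(P,Q)=(Q_t,Q_{t-1})$, and the $\norm{P-Q}_1$ is precisely what turns the sum over $t$ into $P_T/\sigma$ via Pinsker. Lose it and you accrue $\Theta(T)$ instead, destroying the final rate. Your plan (i)--(iii) aims for the right target, but as written it cannot work: applying $\norm{\u-\w_0}_2^2\le 2\norm{\u-\u_p}_2^2+2D^2$ to the $P$-integral and the symmetric inequality to the $Q$-integral does not preserve the subtraction---you would need a \emph{lower} bound on the $Q$-piece, and reflection across $\partial\W$ does not supply one of the required order. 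The ball-splitting trick is exactly the missing idea that isolates the $\norm{P-Q}_1$ factor on the $D$-dependent term.
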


\begin{proof}[Proof of Lemma~\ref{lem:Gaussian-intergral}]
Let $\z = \u - \u_p$. We define the shifted distribution $P'(\z) = \N(\mathbf{0},\sigma^2 I_d)$, $Q'(\z) = \N(\u_q-\u_p,\sigma^2 I_d)$ and $N'=\N(\w_0-\u_p,\sigma^2 I_d)$. To prove the lemma, it is equivalent to provide an upper bound for
\begin{align}
  \int_{\z\in\W'} \left(P'(\z)-Q'(\z)\right)\ln \left(\frac{1}{N'(\z)}\right) \mathrm{d}\z, \label{eq:equal-Gaussian-intergral}
\end{align}
where $\W' = \{\z\in\R^d \mid P'(\z) \geq Q'(\z)\}$ is the shifted region. According to the definition of multivariate Gaussian distribution $N'(\z) = \frac{1}{\sqrt{(2\pi\lambda^2)^d}}\exp(-\frac{1}{2\lambda^2}\norm{\z - (\w_0-\u_p)}_2^2)$. The term~\eqref{eq:equal-Gaussian-intergral} can be equally written as:
\begin{align*}
  {}&\underbrace{\frac{d}{2}\ln (2\pi \lambda^2)\int_{\z\in\W'} \left(P'(\z)-Q'(\z)\right) \mathrm{d}\z}_{\term{a}} + \underbrace{\frac{1}{2\lambda^2} \int_{\z\in\W'} (P'(\z)-Q'(\z))\cdot \norm{\z-(\w_0-\u_p)}_2^2 \mathrm{d} \z}_{\term{b}}.
\end{align*}
Denote by $\norm{P'-Q'}_1 = \int_{\z\in \R^d} \vert P'(\z) - Q'(\z)\vert \mathrm{d} \z$. \term{a} can be directly bounded by 
\begin{align}
  \term{a} \leq \frac{d \ln (2\pi \lambda^2)}{2} \norm{P'-Q'}_1 = \frac{d \ln (2\pi \lambda^2)}{2} \norm{P-Q}_1.\label{eq:proof-gaussian-a}
\end{align}
As for \term{b}, define a $d$-dimensional ball centered around $\a\in\R^d$ with radius $R>0$ as $\B(\a,R) = \{\z\in\R^d \mid \norm{\z-\a}_2 \leq R\}$. We can further partition the region $\W'$ into two parts $\W'_{\textsf{in}} = \W' \cap \B(\w_0-\u_p,2D)$ and $\W'_{\textsf{out}} = \W'/\W'_{\textsf{in}}$.
We have the following decomposition:
\begin{align}
  \term{b}   = {}& \frac{1}{2\lambda^2} \int_{\z\in\W'_{\insf}} (P'(\z)-Q'(\z))\cdot \norm{\z-(\w_0-\u_p)}_2^2 \mathrm{d} \z \notag\\
  {}&+ \frac{1}{2\lambda^2} \int_{\z\in\W'_{\osf}} (P'(\z)-Q'(\z))\cdot \norm{\z-(\w_0-\u_p)}_2^2 \mathrm{d} \z \notag\\
  \leq {} & \frac{2D^2}{\lambda^2} \norm{P-Q}_1 + \frac{1}{2\lambda^2} \int_{\z\in\W'_{\osf}} (P'(\z)-Q'(\z))\cdot \norm{\z-(\w_0-\u_p)}_2^2 \mathrm{d} \z\notag\\
  \leq{}& \frac{2D^2}{\lambda^2} \norm{P-Q}_1 + \frac{2}{\lambda^2} \int_{\z\in\W'_{\osf}} P'(\z)\cdot \norm{\z}_2^2 \mathrm{d} \z\label{eq:proof-gaussian-1},
\end{align}
where the first inequality is due to $\norm{\z-(\w_0-\u_p)}_2\leq 2D$ for any $\z\in\W'_{\insf}$. The second inequality holds since $\norm{\z-(\w_0-\u_p)}_2\leq \norm{\z}_2 + D \leq 2 \norm{\z}_2$ for any $\z\in\W'_{\osf}$. 

We can further bound the last term in~\eqref{eq:proof-gaussian-1} by 
\begin{align}
  \frac{2}{\lambda^2} \int_{\z\in\W'_{\osf}} P'(\z)\cdot \norm{\z}_2^2 \mathrm{d} \z\leq\frac{2}{\lambda^2} \int_{\R^d}P'(\z) \norm{\z}_2^2\mathrm{d} \z = \frac{2}{\lambda^2}\E_{X_i\sim \N(0,1)}\left[\sigma^2 \sum_{i=1}^dX_i^2\right] = \frac{2d\sigma^2}{\lambda^2}\label{eq:proof-gaussian-2}
\end{align}
where the first inequality is due to the fact $\W'_{\osf}\subset \R^d$. The last second equality is because one can show that the random variable $\z\in\R^d$ follows the same distributions as $\z\sim \sum_{i=1}^d \sigma X_i \e_i$, where $X_i\overset{\mathrm{i.i.d.}}{\sim}\N(0,1)$ and $\e_i\in\R^d$ is the one-hot vector with the $i$-th element being $1$ and others 0. The last equality follows by $\E_{X_i\sim \N(0,1)}[X_i^2] = 1$. Plugging~\eqref{eq:proof-gaussian-2} into~\eqref{eq:proof-gaussian-1}, we obtain 
\begin{align*}
  \term{b} \leq  \frac{D}{\lambda^2}\norm{P-Q}_1 + \frac{2d\sigma^2}{\lambda^2}.
\end{align*}
We complete the proof by combining the upper bound for \term{a} and \term{b}.
\end{proof}

\subsection{Proof of Lemma~\ref{lem:mixability-regret}}
\label{appendix:proof-lem-mixability-regret}
\begin{proof}[Proof of Lemma~\ref{lem:mixability-regret}]
Recall the definition of mix loss
\begin{align*}
  m_t(P) = -\frac{1}{\eta} \ln \left(\E_{\u\sim P}\left[e^{-\eta f_t(\u)}\right]\right).
\end{align*}
  By the exponential weights update rule in~\eqref{eq:step1-EW}, we know that 
  \begin{align*}
    \ln \left(\frac{P_t(\u)}{\Pt_{t+1}(\u)}\right) = \ln \left(\frac{\E_{P_t}\big[e^{-\eta f_t(\u)}\big]}{e^{-\eta f_t(\u)}}\right) = \ln \left( \E_{P_t}\big[e^{-\eta f_t(\u)}\big] \right) + \eta f_t(\u),
  \end{align*}
  which implies $-\frac{1}{\eta} \E_{\u\sim Q_t}\left[\ln \left(\frac{P_t(\u)}{\tilde{P}_{t+1}(\u)}\right) \right] = m_t(P_t) - \E_{\u\sim Q_t}[f_t(\u)]$ by taking an expectation of both sides with respect to $\u\sim Q_t$ and rearranging the terms. Summing this equality over $t\in[T]$ gives:
  \begin{align}
    {}&\sum_{t=1}^T m_t(P_t) - \sum_{t=1}^T\E_{\u\sim Q_t}[f_t(\u)]\notag\\
     ={}& -\frac{1}{\eta}\sum_{t=1}^T \E_{\u\sim Q_t}\left[\ln \left(\frac{P_t(\u)}{\tilde{P}_{t+1}(\u)}\right) \right]\notag\\
  ={}&  \frac{1}{\eta}\sum_{t=1}^T \left(\mbox{KL}(Q_t\Vert P_t) - \mbox{KL}(Q_t\Vert \tilde{P}_{t+1})\right)\notag \\
  \leq {}& \frac{1}{\eta} \mbox{KL}(Q_1\Vert P_1) + \frac{1}{\eta}\sum_{t=2}^T\left(\mbox{KL}(Q_t\Vert P_t) - \mbox{KL}(Q_{t-1}\Vert \tilde{P}_t)\right)\notag\\
  = {}& \frac{1}{\eta} \mbox{KL}(Q_1\Vert P_1) + \frac{1}{\eta}\sum_{t=2}^T\left(\mbox{KL}(Q_t\Vert P_t) - \mbox{KL}(Q_{t-1}\Vert P_t)\right) + \frac{1}{\eta}\sum_{t=1}^T \E_{\u\sim Q_{t-1}}\left[\ln \left(\frac{\tilde{P}_t(\u)}{P_t(\u)}\right)\right],\label{eq:proof-lemma1-main}
  \end{align}
  where both the second and last equalities are by the definition of the KL-divergence. In~\eqref{eq:proof-lemma1-main}, the gap between the KL-divergence terms can be expressed as
\begin{align}
  {}&\frac{1}{\eta}\sum_{t=2}^T\left(\mbox{KL}(Q_t\Vert P_t) - \mbox{KL}(Q_{t-1}\Vert P_t)\right)\notag\\
  ={}&\frac{1}{\eta}\sum_{t=2}^T\left(\E_{\u\sim Q_t}\left[\ln \left(\frac{1}{P_t(\u)}\right)\right] - \E_{\u\sim Q_{t-1}}\left[\ln \left(\frac{1}{P_t(\u)}\right)\right] \right)\notag\\
  ={}& \frac{1}{\eta} \sum_{t=2}^T \int_{\u\in\R^d} (Q_t(\u) - Q_{t-1}(\u)) \ln \left(\frac{1}{P_t(\u)}\right) \mathrm{d}\u\label{eq:proof-lemma1-term1},
\end{align}
where the first equality is due to the definition of the KL divergence such that $\mbox{KL}(Q\Vert P) = \E_{\u\sim Q}\big[\ln \big(Q(\u)/P(\u)\big)\big]$ and $\E_{\u\sim Q_t}[\ln Q_t(\u)] = \E_{\u\sim Q_{t-1}}[\ln Q_{t-1}(\u)]$ for Gaussian distribution with the same mean.  Furthermore, according to the fixed-share update rule~\eqref{eq:step2-FS}, we know that $P_t(\u) = \mu N_0(\u) + (1-\mu) \tilde{P}_t(\u)$. Consequently, the final term in \eqref{eq:proof-lemma1-main} can be bounded as
\begin{align}
  \frac{1}{\eta}\ln \left(\frac{\tilde{P}_t(\u)}{P_t(\u)}\right) = \frac{1}{\eta}\ln \left(\frac{\tilde{P}_t(\u)}{(1-\mu)\tilde{P}_t(\u) + \mu N_0(\u)}\right) \leq \frac{1}{\eta}\ln \left(\frac{1}{1-\mu}\right) \leq \frac{2}{T\eta},\label{eq:penalty-B-I-2}
\end{align}
where the last term is by the setting $\mu = 1/T$. Plugging~\eqref{eq:proof-lemma1-term1} and~\eqref{eq:penalty-B-I-2} into~\eqref{eq:proof-lemma1-main} yields
\begin{align*}
  \sum_{t=1}^T m_t(P_t) - \sum_{t=1}^T\E_{\u\sim Q_t}[f_t(\u)] \leq \frac{1}{\eta} \sum_{t=2}^T \int_{\u\in\R^d} (Q_t(\u) - Q_{t-1}(\u)) \ln \left(\frac{1}{P_t(\u)}\right) \mathrm{d}\u + \frac{2 +  \mbox{KL}(Q_1\Vert P_1)}{\eta},
\end{align*}
which completes the proof.
\end{proof}
\subsection{Proof of Theorem~\ref{thm:main}}
\label{sec:proof-main}
\begin{proof}
  The core concept in our analysis is the mix loss, which has been used to analyze static regret in the prediction-with-expert-advice problem~\citep{Vovk'98:mixabiilty-PEA} and in online convex optimization~\citep{COLT'18:EW-many-faces}. Here, we show the critical role of mix loss in analyzing universal dynamic regret. For a probability distribution $P\in\P$, the mixed loss $m_t:\P\rightarrow \R$ is defined as
\begin{equation}
  \label{eq:mixed-loss}
  m_t(P) = -\frac{1}{\eta} \ln \left(\E_{\u\sim P}\left[e^{-\eta f_t(\u)}\right]\right),
\end{equation}
where $\eta$ is the mixability parameter.

Then, the dynamic regret can be decomposed into three terms as follow
\begin{align}
  \DReg ={}& \sum_{t=1}^T \ell_t(z_t,y_t) - \sum_{t=1}^T f_t(\u_t)\notag\\
={}& \underbrace{\sum_{t=1}^T \ell_t(z_t,y_t) -  m_t(P_{t})}_{\term{a}} + \underbrace{\sum_{t=1}^T m_t(P_{t}) - \sum_{t=1}^T \E_{\u\sim Q_t}[f_t(\u)]}_{\term{b}} + \underbrace{\sum_{t=1}^T \E_{\u\sim Q_t}[f_t(\u)] - \sum_{t=1}^T f_t(\u_t)}_{\term{c}},\label{eq:proof-main-decomposition}
\end{align}
where we choose $Q_t = \N(\u_t,\sigma^2 I_d)$ as a Gaussian distribution with mean $\u_t\in\R^d$ and covariance matrix $\sigma^2 I_d$. In the above, the \term{a} is usually called the \emph{mixability gap}, measuring the difference between the online function and the mixed loss. We call the \term{b} the \emph{mixability regret}, measuring the difference between the mixed loss and the loss measured on the expected comparators distributions. The \term{c} is called the \emph{comparator gap}, measuring the difference between the expected comparators distributions and the true comparators. 

In the following, we will analyze the upper bound of each term, respectively.

\paragraph{\protect\circled{1} Bounding mixability gap.} According to~\eqref{eq:mix-rule} in Algorithm~\ref{alg:hedge-oco}, the mixability gap is at most $0$.

\paragraph{\protect\circled{2} Bounding mixability regret.} Lemma~\ref{lem:mixability-regret} shows that the mixability is upper bounded by
\begin{align}
 \term{b} \leq \underbrace{\frac{1}{\eta}{\sum_{t=2}^T \int_{\u\in\R^d} (Q_t(\u) - Q_{t-1}(\u)) \ln \left(\frac{1}{P_t(\u)}\right) \mathrm{d}\u}}_{\term{b-i}} + \underbrace{\frac{2+\mathrm{KL}(Q_1\Vert P_1) }{\eta}}_{\term{b-ii}},\label{eq:proof-main-term-B}
\end{align}

\noindent\underline{\emph{Analysis for Term (B-I):}} For upper bounding \term{b-i}, the main challenge lies in the unboundedness of the logarithmic term $\ln (1/P_t(\mathbf{u})) $. This issue arises because the probability density $ P_t(\mathbf{u}) $ can become arbitrarily small even with the fixed-share update. To address this issue, we decompose the space $\R^d$ into: 
\begin{equation*}
  \begin{aligned}
    & \W_{t}^{(1)} = \{\u\in\R^d \mid Q_t(\u)< Q_{t-1}(\u)\mbox{ and }P_t(\u)> 1\};\\
    & \W_{t}^{(2)} = \{\u\in\R^d \mid Q_t(\u)> Q_{t-1}(\u)\mbox{ and }P_t(\u)< 1\},\\
  \end{aligned}
\end{equation*}
and the region $\overline{\W}_t = \R^d /\W_t^{(1)}\cup \W_t^{(2)} $. For points $ \u \in \overline{\W} $, we have $ \big(Q_t(\u) - Q_{t-1}(\u)\big) \ln \big(1/P_t(\u)\big) \leq 0 $, implying that the integral term over $ \overline{\W}_t $ is non-positive.

Denote by $ \norm{Q_t - Q_{t-1}}_1 = \int_{\u\in\R^d} \vert Q_t(\u) - Q_{t-1}(\u)\vert \mathrm{d} \u $ the total variation of the distributions $ Q_t $ and $ Q_{t-1} $. For the integral over region $ \W_t^{(1)} $,  We have
\begin{align}
  \int_{\u\in\W_t^{(1)}} \big(Q_{t}(\u) - Q_{t-1}(\u)\big) \ln \left( \frac{1}{P_t(\u)} \right) \mathrm{d} \u ={}& \int_{\u\in\W_t^{(1)}} \big(Q_{t-1}(\u) - Q_{t}(\u)\big) \ln \left( {P_t(\u)} \right) \mathrm{d} \u\notag\\
   {}\leq &\frac{d}{2}\ln\left(\frac{\eta\beta t}{2\pi}\right)\cdot \norm{Q_t - Q_{t-1}}_1, \label{eq:proof-main-region1}
\end{align}
where the inequality holds due to Lemma~\ref{lem:upper-bound-density} such that $\ln P_t(\u) \leq \frac{d}{2}\ln\left(\frac{\eta\beta t}{2\pi}\right)$ for $\u\in\R^d$.

As for the integral over the region $\W_t^{(2)}$, we have $Q_t(\u) > Q_{t-1}(\u)$ and $\mu N_0(\u) \leq P_t(\u) \leq 1$ due to the fixed-share update rule~\eqref{eq:step2-FS}. Then, we can bound the integral term by 
\begin{align}
  \int_{\u\in\W_t^{(2)}} \big(Q_t(\u) - Q_{t-1}(\u)\big) \ln \left( \frac{1}{P_t(\u)} \right) \mathrm{d} \u 
  \leq{}& \int_{\u\in\W_t^{(2)}} \big(Q_t(\u) - Q_{t-1}(\u)\big) \ln \left( \frac{1}{\mu N_0(\u)} \right) \mathrm{d} \u\notag\\ 
  \leq{}& \left(\frac{d\ln (2\pi )}{2} + D + \ln \Big(\frac{1}{\mu}\Big)\right) \norm{Q_t-Q_{t-1}}_1 + 2d\sigma^2,\label{eq:proof-main-region2}
\end{align}
where the last line holds due to Lemma~\ref{lem:Gaussian-intergral}. We obtain the upper bound for \term{b-i} by combining~\eqref{eq:proof-main-region1} and~\eqref{eq:proof-main-region2} and taking the summation from $t=2$ to $T$:
\begin{align}
  \term{b-i} \leq  \frac{1}{\eta}\left( \frac{d}{2} \ln (\eta \beta T)+ D + \ln T\right)  \sum_{t=2}^T\norm{Q_t-Q_{t-1}}_1 + \frac{2d \sigma^2 T}{\eta}\label{eq:proof-main-term-B-I-original}.
\end{align}
The total variation of the distributions $Q_t$ and $Q_{t-1}$ can be further bounded by
\begin{align}
  \norm{Q_t - Q_{t-1}}_1 \leq \sqrt{\frac{1}{2}\mbox{KL}(Q_t\Vert Q_{t-1})} = \frac{\norm{\u_t-\u_{t-1}}_2}{\sigma},\label{eq:proof-main-total-variation}
\end{align}
where the first inequality is due to the Pinsker's inequality and the last equality is due to the close form expression of KL divergence of two Gaussian distribution as shown by Lemma~\ref{lem:KL-divergence-gaussian}. Combining~\eqref{eq:proof-main-term-B-I-original} and~\eqref{eq:proof-main-total-variation}, we obtain 
\begin{align*}
  \term{b-i} \leq \frac{1}{\eta} \left(\frac{d\ln (\eta\beta T)}{2} + D + \ln T\right) \frac{ P_T}{\sigma}+ \frac{2d\sigma^2 T}{\eta},
\end{align*}
where $P_T = \sum_{t=2}^T \norm{\u_t-\u_{t-1}}_2$ is the path length.

\vspace{3mm}
\noindent\underline{\emph{Analysis for Term (B-II):}} Since $Q_1 = \mathcal{N}(\mathbf{u}_1,\sigma^2 I_d)$ and ${P}_1 = \mathcal{N}(\mathbf{w}_0, I_d)$ are both Gaussian distributions, Lemma~\ref{lem:KL-divergence-gaussian} in Appendix~\ref{sec:technical-lemmas} shows
\begin{align}
  \term{b-ii}= \frac{2}{\eta} + \frac{1}{\eta} \mbox{KL}(Q_1\Vert {P}_1) \leq \frac{1}{\eta}\left(2 + d\ln \left(\frac{1}{\sigma}\right) + \frac{d\sigma^2}{2} + \frac{D^2}{2 }\right).\label{eq:main-thm-termb1}
\end{align}
 Then, we obtain the upper bound for \term{b} by combining the above results as
\begin{align}
  \term{b} \leq \frac{1}{\eta} \left(\frac{d\ln (\eta\beta T)}{2} + D + \ln T\right) \frac{ P_T}{\sigma}+ \frac{d\sigma^2 (4T+1)}{2\eta} + \frac{D^2+2}{\eta}+ \frac{d}{\eta} \ln \left(\frac{1}{\sigma}\right).\label{eq:proof-main-term-B-final}
\end{align}

\paragraph{\protect\circled{3} Bounding comparator gap.} Since the loss function is $\beta$-smooth, we have 
\begin{align*}
  f_t(\u) \leq f_{t}(\u_t) + \inner{\nabla f_t(\u_t)}{\u-\u_t} + \frac{\beta}{2}\norm{\u-\u_t}^2.
\end{align*}
Then, taking an expectation over $Q_t$ for both sides, we have
\begin{align*}
  \E_{\u\sim Q_t}[f_t(\u)] \leq f_t(\u_t) + \frac{\beta}{2}\E_{\u\sim Q_t}\norm{\u-\u_t}^2 \leq f_t(\u_t) + \frac{\beta d \sigma^2}{2}.
\end{align*}
which implies
\begin{align}
  \term{c} = \sum_{t=1}^T \E_{\u\sim Q_t}[f_t(\u)] - \sum_{t=1}^T f_t(\u_t) \leq \frac{\beta d \sigma^2 T}{2}. \label{eq:main-term-c}
\end{align}

Plugging~\eqref{eq:proof-main-term-B-final} and~\eqref{eq:main-term-c} into the decomposition~\eqref{eq:proof-main-decomposition}, we have:
\begin{align*}
  \DReg \leq \frac{1}{\eta} \left(\frac{d\ln (\beta \eta T)}{2} + D + \ln T\right) \frac{ P_T}{\sigma}+ \left(\frac{(4+\eta\beta)T+1}{2\eta}\right)d\sigma^2 + \frac{D^2+2}{\eta}+ \frac{d}{\eta} \ln \left(\frac{1}{\sigma}\right) .
\end{align*}

We can further bound the above regret bound by considering different value of $P_T$. 

\paragraph{Case 1: $P_T \leq \sqrt{\frac{1}{T}}$.} We can choose $\sigma = \sqrt{1/T}$ to obtain $\DReg\leq \O(d\log T)$.

\paragraph{Case 2: $P_T \geq \sqrt{\frac{1}{T}}$.} We select $\sigma = P_T^{\frac{1}{3}}(DT)^{-\frac{1}{3}}\leq 1$ to obtain $\DReg\leq \O\Big(d\log T \cdot T^{\frac{1}{3}} P_T^{\frac{2}{3}} + d\log T\Big)$
\end{proof}

\subsection{Proof of Theorem~\ref{thm:equavalence}}
\begin{proof}[Proof of Theorem~\ref{thm:equavalence}]
We denote by $P_t^{\fs}$ the distribution maintained by Algorithm~\ref{alg:hedge-oco} and $P_t^{\flh}$ the distribution maintain by Algorithm~\ref{alg:flh-ew}. To prove the theorem, it is equivalent to show $P_t^{\fs} = P_t^{\flh}$ for all $t\in[T]$. For the update rule of $P_t^{\fs}$, we have 
\begin{equation}
  \begin{cases}
  {}&\tilde{P}^{\fs}_{t+1}(\u) = \frac{P^{\fs}_t(\u)\exp(-\eta f_t(\u))}{\E_{\u\sim P^{\fs}_t}\left[\exp(-\eta f_t(\u))\right]}.\\
  {}& P_{t+1}^{\fs}(\u) = (1-\mu) \tilde{P}^\fs_{t+1}(\u) + \mu N_0(\u).
  \end{cases}
  \label{eq:thm3-update1}
\end{equation}

We show that the update rule of $P_t^{\flh}$ is equivalent to the above update rule. Recall $P^{\flh}_{t}(\u) = \sum_{\B_i\in\H_t}p_t^{(i)} P_{t}^{(i)}(\u)$, where $p_t^{(i)}$ and $P_t^{(i)}$ is the weights and distribution of the $i$-th base-learner respectively. Let us define 
\begin{align*}
  \tilde{P}^{\flh}_{t+1}(\u) = \sum_{\B_i\in\H_t} \tilde{p}_{t+1}^{(i)} P_{t+1}^{(i)}(\u),
\end{align*}
where $\tilde{p}_{t+1}^{(i)}$ and $P_{t+1}^{(i)}$ is defined as~\eqref{eq:alg-meta} and~\eqref{eq:update-rule}, respectively. We have 
\begin{align*}
  \tilde{P}^{\flh}_{t+1}(\u) ={}& \sum_{\B_i\in\H_t}\tilde{p}_{t+1}^{(i)}\cdot P_{t+1}^{(i)}(\u)\\
  ={}& \sum_{\B_i\in\H_t} \frac{p_t^{(i)}\cdot\E_{\u\sim P_t^{(i)}}[\exp(- \gamma f_t(\u))]}{\sum_{\B_i\in\H_t} p_t^{(i)}\cdot\E_{\u\sim P_t^{(i)}}[\exp(-\eta f_t(\u))]} \cdot \frac{P^{(i)}_t(\u)\exp(-\eta f_t(\u))}{\E_{\u\sim P_t^{(i)}}[\exp(-\eta f_t(\u))]}\\
  ={}& \frac{\sum_{\B_i\in\H_t}p_t^{(i)}\cdot P_t^{(i)}(\u)\exp(-\eta f_t(\u))}{\sum_{\B_i\in\H_t} p_t^{(i)}\cdot\E_{\u\sim P_t^{(i)}}[\exp(-\eta f_t(\u))]} ={} \frac{P^{\flh}_t(\u)\exp(-\eta f_t(\u))}{\E_{\u\sim P^{\flh}_t}[\exp(-\eta f_t(\u))]}.
\end{align*}
Besides, we can further rewrite the distribution $P_{t+1}^{\flh}$ as 
\begin{align*}
  P_{t+1}^{\flh}(\u) = \sum_{\B_i\in\H_{t+1}} p_{t+1}^{(i)} P_{t+1}^{(i)}(\u) = (1-\mu)\sum_{\B_i\in\H_t} \tilde{p}_{t+1}^{(i)} P_{t+1}^{(i)}(\u) + \mu N_0(\u) = (1-\mu)\tilde{P}_{t+1}^{\flh}(\u) + \mu N_0(\u). 
\end{align*} 
Therefore, we can conclude that the distribution maintain by Algorithm~\ref{alg:flh-ew} can be equivalently updated by 
\begin{equation*}
  \begin{cases}
  {}&\tilde{P}^{\flh}_{t+1}(\u) = \frac{P^{\flh}_t(\u)\exp(-\eta f_t(\u))}{\E_{\u\sim P^{\flh}_t}\left[\exp(-\eta f_t(\u))\right]}.\\
  {}& P_{t+1}^{\flh}(\u) = (1-\mu) \tilde{P}^\fs_{t+1}(\u) + \mu N_0(\u),
  \end{cases}
\end{equation*}
which is identical to~\eqref{eq:thm3-update1}. We complete the proof by the setting $P_1^{\fs} = P_1^{\flh} = N_0$.
\end{proof}

\section{Omitted Details for Section~\ref{sec:applications}}
\label{appendix:proof-section-applications}
This part provides omitted details in Section~\ref{sec:applications}, including the detailed implementation of our method for squared loss and proofs of the statements in this section.
\subsection{Implementations for Squared Loss}
\label{appendix:implementation-squared-loss}
Given the squared loss is quadratic and the initial distribution is Gaussian, we have close form update rule for $P_t^{(i)}$ and $p_{t}^{(i)}$.
\begin{myProp}
    \label{prop:update-squared-loss}
    The distributions $P_{t+1}^{(i)}$ for the base-learner $\B_i$ are Gaussian distribution $\N(w_{t+1,i},B^2\sigma^2_{t+1,i})$ for all $t = 1,\dots, T-1$, whose means and variance are given by 
    \begin{equation*}
    w_{t+1,i} = \frac{w_{t,i}+\sigma_{t,i}^2 y_t}{1+  \sigma_{t,i} ^2} ~~~\mbox{and}~~~\sigma_{t+1,i}^2 =\frac{\sigma_{t,i}^2}{\sigma_{t,i}^2+1}
    \end{equation*}
    As for the meta-algorithm, due to the Gaussinality of $P_t^{(i)}$, the update of the weight $p_t^{(i)}$ in~\eqref{eq:alg-meta} is given by
  \begin{align*}
    \tilde{p}_{t+1}^{(i)} \propto \frac{1}{\sqrt{1+\sigma_{t,i}^2}}\cdot \exp\left(-\frac{(w_{t,i}-y_t)^2}{2B^2(\sigma_{t,i}^2+1)}\right).   
  \end{align*}
  \end{myProp}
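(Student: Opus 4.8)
The plan is to establish both recursions by a short induction on the round index. First I would record the base case: for the $1$-dimensional squared loss $f_t(z) = (z - y_t)^2$ the mixability coefficient is $\eta = 1/(2B^2)$ (Example~\ref{example:squared-loss}), and with $w_0 = 0$ as fixed in Corollary~\ref{cor:squared-loss}, base-learner $\B_i$ is created at round $i$ with $P_i^{(i)} = \mathcal{N}(0,1)$; so the Gaussian claim holds at $t = i$ with $w_{i,i} = 0$ and $B^2\sigma_{i,i}^2 = 1$. (The index range in the statement is to be read as $t = i,\dots,T-1$, since $\B_i$ exists only from round $i$ onward.)

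For the inductive step I would expand the base-learner update \eqref{eq:update-rule} under the hypothesis $P_t^{(i)} = \mathcal{N}(w_{t,i}, B^2\sigma_{t,i}^2)$, so that $P_{t+1}^{(i)}(u) \propto \exp\bigl(-\tfrac{1}{2B^2}\bigl[\sigma_{t,i}^{-2}(u-w_{t,i})^2 + (u-y_t)^2\bigr]\bigr)$, and then complete the square in $u$. The coefficient of $u^2$ inside the bracket is $\sigma_{t,i}^{-2}+1$, which identifies a Gaussian of variance $B^2\sigma_{t,i}^2/(\sigma_{t,i}^2+1)$, i.e.\ $\sigma_{t+1,i}^2 = \sigma_{t,i}^2/(\sigma_{t,i}^2+1)$; the vertex of the quadratic is $(\sigma_{t,i}^{-2}w_{t,i} + y_t)/(\sigma_{t,i}^{-2}+1) = (w_{t,i} + \sigma_{t,i}^2 y_t)/(1+\sigma_{t,i}^2) = w_{t+1,i}$. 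Since the displayed integrand is a strictly positive Gaussian-shaped function with finite integral, normalizing it gives exactly $\mathcal{N}(w_{t+1,i}, B^2\sigma_{t+1,i}^2)$, which closes the induction.

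It then remains to evaluate the expectation factor $\E_{z\sim P_t^{(i)}}[\exp(-\eta f_t(z))]$ in the meta-update \eqref{eq:alg-meta}, the only non-elementary piece. For this I would invoke the standard Gaussian identity: for $X \sim \mathcal{N}(\mu, s^2)$ and $a > 0$, $\E\bigl[e^{-a(X-c)^2}\bigr] = (1+2as^2)^{-1/2}\exp\bigl(-a(\mu-c)^2/(1+2as^2)\bigr)$, itself obtained by completing the square under the integral sign. Taking $a = \eta = 1/(2B^2)$ and $s^2 = B^2\sigma_{t,i}^2$ (hence $2as^2 = \sigma_{t,i}^2$) with $\mu = w_{t,i}$, $c = y_t$ yields $\E_{z\sim P_t^{(i)}}[\exp(-\eta f_t(z))] = (1+\sigma_{t,i}^2)^{-1/2}\exp\bigl(-(w_{t,i}-y_t)^2/(2B^2(1+\sigma_{t,i}^2))\bigr)$, and substituting this into \eqref{eq:alg-meta} gives the asserted recursion for $\tilde p_{t+1}^{(i)}$.

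I do not anticipate a substantive obstacle: the whole argument is Gaussian bookkeeping. The only points needing a little care are the base-case indexing noted above and tracking the normalizing constants when passing between the $\propto$-form and honest densities (automatic, since a strictly positive Gaussian-shaped integrand always integrates to a finite positive constant). As a remark, the same computation transfers verbatim to least-squares regression (Corollary~\ref{cor:least-squares-loss}) by substituting $z = \w^\top\x_t$: the quadratic in the exponent becomes $(\w^\top\x_t - y_t)^2/(2B^2)$, so each step adds a rank-one term proportional to $\x_t\x_t^\top$ to the inverse covariance of $P_t^{(i)}$, and $P_t^{(i)}$ stays a multivariate Gaussian.
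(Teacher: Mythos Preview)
Your proposal is correct and follows essentially the same approach as the paper: both arguments expand the exponential-weight update \eqref{eq:update-rule} under the inductive hypothesis $P_t^{(i)} = \mathcal{N}(w_{t,i}, B^2\sigma_{t,i}^2)$ and complete the square in $u$ to read off the updated mean and variance. Your version is in fact more complete, since the paper's proof stops after establishing the Gaussian recursion for $P_{t+1}^{(i)}$ and does not verify the meta-weight formula, whereas you supply the Gaussian-integral identity needed to evaluate $\E_{z\sim P_t^{(i)}}[\exp(-\eta f_t(z))]$ and close that part as well.
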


  \begin{proof}[Proof of Proposition~\ref{prop:update-squared-loss}]
    According to the update rule~\eqref{eq:update-rule}, we have
    \begin{align*}
      P_{t+1}^{(i)}(w)\propto{}&  P^{(i)}_t(w)\cdot e^{- \frac{1}{2B^2} (w-y_t)^2} \\
       \propto{}& \exp\left( -\frac{1}{2B^2} \left( \frac{(w - w_{t,i})^2}{\sigma_{t,i}^2} + (w - y_t)^2 \right) \right)\\
       \propto{}& \exp\left( -\frac{1}{2B^2} \left( \left( \frac{1}{\sigma_{t,i}^2} + 1 \right) w^2 - 2 \left( \frac{w_{t,i}}{\sigma_{t,i}^2} + y_t \right) w + \text{const} \right) \right).
    \end{align*}
    This represents a Gaussian distribution with updated parameters:
    \begin{align*}
      \sigma_{t+1,i}^2 &= \left( \frac{1}{\sigma_{t,i}^2} + 1 \right)^{-1} = \frac{ \sigma_{t,i}^2 }{ \sigma_{t,i}^2 + 1 }, \\
      w_{t+1,i} &= \sigma_{t+1,i}^2 \left( \frac{w_{t,i}}{\sigma_{t,i}^2} + y_t \right) = \frac{ w_{t,i} + \sigma_{t,i}^2 y_t }{ \sigma_{t,i}^2 + 1 }.
    \end{align*}
    
    Therefore, $P_{t+1}^{(i)}$ is Gaussian with mean $w_{t+1,i}$ and variance $B^2\sigma_{t+1,i}^2$, as claimed.
\end{proof}

\subsection{Useful Lemmas}
\begin{myLemma}[Mixability of Squared Loss]
  \label{lem:squared-loss-mixability}
  For any $y_t\in[-Y,Y]$, the squared loss function $ \ell_{\sq}(z,y_t) = \frac{1}{2}(z-y_t)^2$ is $\frac{1}{2Y^2}$-mixable over the decision space $\Z = \R$. 
\end{myLemma}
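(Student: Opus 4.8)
The plan is to prove $\tfrac1{2Y^2}$-mixability directly through the greedy forecaster (substitution function), since exp-concavity does not help here: $e^{-\eta\ell_{\sq}(z,y)}$ is concave in $z$ only when $|z-y|\le Y\sqrt2$, whereas on the symmetric range $|z-y|$ can reach $2Y$, so the posterior mean is not a valid prediction and a \emph{clipped} substitution value must be used instead. Unwinding the definition of $\eta$-mixability (Definition~\ref{def:mixability}), it suffices to exhibit, for every distribution $P$ on $\R$, a single $z_\mix\in\R$ with $\ell_{\sq}(z_\mix,y)\le m(P,y)$ for all $y\in[-Y,Y]$, where $m(P,y)\triangleq-\tfrac1\eta\ln\E_{z\sim P}[e^{-\eta\ell_{\sq}(z,y)}]$ and $\eta=\tfrac1{2Y^2}$.

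First I would rewrite the mix loss. Completing the square in $\eta\ell_{\sq}(z,y)=\tfrac1{4Y^2}(z-y)^2$ and factoring the $y$-only term out of the expectation gives $m(P,y)=\tfrac12 y^2+\psi(y)$ with $\psi(y)=-\tfrac1\eta\ln\E_{z\sim P}\bigl[e^{\eta zy-\eta z^2/2}\bigr]$. Two properties of $\psi$ carry the argument: $\psi$ is \emph{concave}, because $y\mapsto\E_{z\sim P}[e^{\eta zy-\eta z^2/2}]$ is log-convex, being the moment generating function of the finite measure $e^{-\eta z^2/2}\diff P(z)$; and $\psi(y)\ge-\tfrac12 y^2$, because $e^{\eta zy-\eta z^2/2}=e^{\eta y^2/2}e^{-\eta (z-y)^2/2}\le e^{\eta y^2/2}$. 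The target bound $\ell_{\sq}(z_\mix,y)\le m(P,y)$ is equivalent to $\tfrac12 z_\mix^2-z_\mix y\le\psi(y)$, i.e.\ the affine map $y\mapsto\tfrac12 z_\mix^2-z_\mix y$ (the tangent line to $y\mapsto-\tfrac12 y^2$ with contact point $z_\mix$) must lie below $\psi$ throughout $[-Y,Y]$. Since $\psi$ minus an affine function is concave, this reduces to checking the two endpoints $y=\pm Y$; each endpoint yields a quadratic inequality in $z_\mix$, solvable thanks to the lower bound on $\psi$, whose solution set is an interval $I_\pm$ centred at $\pm Y$ with half-width $\sqrt{Y^2+2\psi(\pm Y)}$. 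I would then take $z_\mix=\bigl[\tfrac{m(P,-Y)-m(P,Y)}{2Y}\bigr]_Y$, the natural balance point of the two constraints.

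The crux is showing $I_-\cap I_+\neq\varnothing$, i.e.\ $\sqrt{Y^2+2\psi(-Y)}+\sqrt{Y^2+2\psi(Y)}\ge 2Y$. Squaring twice, this reduces to the scalar inequality $-2\ln(ab)\ge(\ln(a/b))^2$ for $a=\E_{z\sim P}[e^{z/(2Y)-z^2/(4Y^2)}]$ and $b=\E_{z\sim P}[e^{-z/(2Y)-z^2/(4Y^2)}]$ or, after the substitution $t=z/(2Y)$, for $a=\E[e^{t-t^2}]$ and $b=\E[e^{-t-t^2}]$. This is exactly the inequality that pins down the constant $\eta=\tfrac1{2Y^2}$: it is tight at every point mass $P=\delta_{z_0}$ and, in general, follows from $e^{t-t^2}\le e^{1/4}$, the AM--GM estimate $\lambda t\le t^2+\lambda^2/4$, and the log-convexity of $\lambda\mapsto\ln\E[e^{\lambda t-t^2}]$, which together confine the attainable pairs $(\ln a,\ln b)$ to the region on which the inequality holds. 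I expect verifying this last inequality to be the main technical obstacle; everything else is routine. (If one instead invokes Example~\ref{example:squared-loss}, the lemma is immediate: $f_{\sq}(z,y)=(z-y)^2$ is $\tfrac1{2Y^2}$-mixable over $z\in\R$, and since $cf$ is $(\eta/c)$-mixable whenever $f$ is $\eta$-mixable---directly from the definition---$\ell_{\sq}=\tfrac12 f_{\sq}$ is $\tfrac1{Y^2}$-mixable, hence $\tfrac1{2Y^2}$-mixable by the monotonicity of mixability.)
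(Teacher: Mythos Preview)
Your reduction is sound: writing $m(P,y)=\tfrac12 y^2+\psi(y)$ with $\psi$ concave correctly collapses the mixability condition to the two endpoint constraints, and the intersection condition $I_-\cap I_+\ne\varnothing$ is indeed equivalent to the scalar inequality $-2\ln(ab)\ge(\ln(a/b))^2$ with $a=\E[e^{t-t^2}]$, $b=\E[e^{-t-t^2}]$. But this inequality is precisely the computational heart of Vovk's original proof that squared loss is $\tfrac1{2Y^2}$-mixable on $[-Y,Y]$, and your sketch does not establish it. The three ingredients you name---the pointwise bound $e^{t-t^2}\le e^{1/4}$, the AM--GM estimate, and log-convexity of $\lambda\mapsto\ln\E[e^{\lambda t-t^2}]$---do not by themselves confine $(\ln a,\ln b)$ to the required region. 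For instance, the convex function $\phi(\lambda)=\max\{\lambda/2-1/4,\,0\}$ satisfies $\phi(\lambda)\le\lambda^2/4$ everywhere and gives $(\phi(1),\phi(-1))=(1/4,0)$, which violates $-2(\alpha+\beta)\ge(\alpha-\beta)^2$; what actually rules this pair out is that no probability measure realizes $\phi$ as $\ln\E[e^{\lambda t-t^2}]$ (indeed $\alpha=1/4$ forces $t=1/2$ a.s., whence $\beta=-3/4$). Exploiting that additional structure is exactly Vovk's computation, which you have not reproduced.

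The paper sidesteps all of this with a one-line clipping reduction. It quotes Vovk's result that the squared loss is $\tfrac1{2Y^2}$-mixable over the \emph{bounded} domain $[-Y,Y]$, and then extends to $\R$: given any distribution $P$ on $\R$, push all mass outside $[-Y,Y]$ to the nearest endpoint to form $P'$ supported on $[-Y,Y]$. Since clipping can only decrease the loss when $y\in[-Y,Y]$, i.e.\ $\ell_{\sq}(z,y)\ge\ell_{\sq}([z]_Y,y)$, one has $\E_{z\sim P}[e^{-\eta\ell_{\sq}(z,y)}]\le\E_{z\sim P'}[e^{-\eta\ell_{\sq}(z,y)}]$, so the predictor $z'$ witnessing mixability for $P'$ also works for $P$. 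This delivers the lemma without redoing Vovk's inequality. Your parenthetical about invoking Example~\ref{example:squared-loss} is circular in this context: that example asserts mixability over $\R$, which is exactly what this lemma is meant to justify; the paper's proof is supplying the bridge from the cited $[-Y,Y]$ result to $\R$.
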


\begin{proof}[Proof of Lemma~\ref{lem:squared-loss-mixability}]
  According to~\citet{02:vovk-mixability}, the squared loss is $\frac{1}{2Y^2}$-mixable over the decision space $\Z = [-Y,Y]$, i.e., for any $y\in[-Y,Y]$ and $P'$ over $[-Y,Y]$, there exists a prediction $z'$ such that
  \begin{equation}
    \ell_\sq(z',y) \leq - 2Y^2\ln\left(\E_{w\sim P'}\left[e^{-\frac{1}{2Y^2}\ell_{\sq}(w,y)}\right]\right).
    \label{eq:mixable-squared-loss-proof}
  \end{equation}

  To prove the lemma, we observe that for any distribution $P$ over $\R$ and any $y$ in the interval $[-Y,Y]$, the always exists a distribution $P'$ defined over $[-Y,Y]$ satisfies that
  \begin{align*}
    -2Y^2\ln\left(\E_{z\sim P}\left[e^{-\frac{1}{2Y^2}\ell_{\sq}(z,y)}\right]\right) \geq - 2Y^2\ln\left(\E_{z\sim P'}\left[e^{-\frac{1}{2Y^2}\ell_{\sq}(z,y)}\right]\right),
  \end{align*}
  where $P'$ is a modified distribution that shifts the probability mass of $P$ outside the interval $[-Y,Y]$ to the boundary points. Specifically, $P'$ retains the same density as $P$ within $(-B,B)$, while at the boundaries, it is defined as: $P'(Y) = P(Y) + \int_{[Y,\infty)} P(z) \d z$ and $P'(-Y) = P(-Y) + \int_{(-\infty,-Y]} P(w) \d w$. According to~\eqref{eq:mixable-squared-loss-proof}, there exists a prediction $z'$ such that 
  \[
  \ell_t(z',y) \leq - 2Y^2\ln\left(\E_{z\sim P'}\left[e^{-\frac{1}{2Y^2}\ell_{\sq}(z,y)}\right]\right)\leq-2Y^2\ln\left(\E_{z\sim P}\left[e^{-\frac{1}{2Y^2}\ell_{\sq}(z,y)}\right]\right),
  \]
  thereby indicating the mixability over $\R$.
  \end{proof}

\subsection{Proof of Corollary~\ref{cor:squared-loss}}
\begin{proof}[Proof of Corollary~\ref{cor:squared-loss}]
To prove the corollary, it is sufficient to show that the prediction~\eqref{eq:mixable-squared} satisfies the mixability inequality~\eqref{eq:mixability}. We show~\eqref{eq:mixability} is essentially the greedy forecaster~\citep[Section 3.4]{book/Cambridge/cesa2006prediction} for squared loss, which always ensures non-positive mixability gap.

As shown by Lemma~\ref{lem:squared-loss-mixability}, the squared loss is $\frac{1}{2B^2}$-mixable over the decision space $\Z = \R$. Then, for any mixable loss, Proposition 3.3 of~\citet{book/Cambridge/cesa2006prediction} shows that the greedy forecaster defined by
\begin{align}
  z^{\mix} = \argmin_{z\in\R} \sup_{y\in[-B,B]} \bigg\{\underbrace{ \ell_{\sq}(z,y) - m_{\sq}(P,y)}_{\mathtt{mixability~gap}} \bigg\},\label{eq:greedy-forecaster-squared-loss}
\end{align}
always ensures non-negative mixability gap, where $m_{\sq}(P,y) = -2B^2\ln\left(\E_{w\sim P}\left[e^{-\frac{1}{2B^2} \ell_{\sq}(z,y)}\right]\right)$ is the mix loss. Furthermore, as shown by Lemma 3 of~\citet{02:vovk-mixability}, the mixability gap $\ell_{\sq}(z,y) - m_{\sq}(P,y)$ is a convex function for any $y\in\R$ give a fixed $z$. Therefore, the inner optimization problem in~\eqref{eq:greedy-forecaster-squared-loss} always achieves its optimal value of $y = B$ or $y = -B$. Then, we can equivalently rewrite the above optimization problem as 
\begin{align}
  \argmin_{z\in\R} \sup_{y\in\{-B,B\}} \{\ell_{\sq}(z,y) - m_{\sq}(P,y)\}. \label{eq:optimziation-square-loss}
\end{align}
Then, it is sufficient to show the optimal solution of~\eqref{eq:optimziation-square-loss} is given as~\eqref{eq:mixable-squared}. Since $m_{sq}(P, y)$ is constant when the value of $y$ is given, the optimization problem~\eqref{eq:optimziation-square-loss} can then be reformulated as:
\[
\arg\min_{z \in \mathbb{R}^d} \max\{(z - B)^2 - M_1, (z + B)^2 - M_2\},
\]
where we define $M_1 = m_{sq}(P, B)$ and $M_2 = m_{sq}(P, -B)$. Further let $g_1(z) = (z - B)^2 - M_1$ and $g_2(z) = (z + B)^2 - M_2$. The inner optimization problem admits the following closed-form solution:
\begin{align*}
h(z) = \max\{g_1(z), g_2(z)\} =  \begin{cases} g_1(z), & \text{if } z \leq z^\dag, \\ g_2(z), & \text{otherwise}, \end{cases}
\end{align*}
where $z^\dag = \frac{M_2 - M_1}{4B}$. We now consider three separate cases on the behavior of $h(z)$:
\begin{itemize}
  \item \textbf{When $-B \leq z^\dag \leq B$}: For $z \in (-\infty, z^\dag]$, we have $h(z) = g_1(z)$, which is decreasing since $z^\dag \leq B$. For $z \in [z^\dag, \infty)$, we have $h(z) = g_2(z)$, which is increasing because $z^\dag \geq -B$. Therefore, $h(z)$ attains its minimum at $z = z^\dag$.
  \item \textbf{When $z^\dag < -B$}: For the range $z\in(-\infty, z^\dag]$, we have $h(z) = g_1(z)$ remains decreasing, and the minimum in this interval is at $z = z^\dag$. On $z\in (z^\dag, \infty)$, the minimum of $h(z)$ is at $z = -B$ since $h(-B) = g_2(-B) = -M_2$.    Under the condition $z^\dag \leq -B$, one can verify that $h(-B) \leq h(z^\dag)$, so the overall minimum is attained at $z = -B$.
  \item \textbf{When $z^\dag > B$} By symmetry to the previous case, the minimum of $h(z)$ occurs at $z = B$ using the same reasoning.
\end{itemize}

Combining all three cases yields $\argmin_{z\in\R^d} h(z) = \left[\frac{M_2-M_1}{4B}\right]_B$, which completes the proof.
\end{proof}

\subsection{Proof of Corollary~\ref{cor:least-squares-loss}}
\begin{proof}[Proof of Corollary~\ref{cor:least-squares-loss}]
This corollary can be proved following the same arguments as Corollary~\ref{cor:squared-loss}. Specifically, the mix loss for least-squares can be equivalently written as
\begin{align*}
  m_{\ls}(P_t,y) = -2B^2 \ln \left(\E_{\w\sim P_t} \left[e^{-\frac{1}{2B^2}(\w^\top\x_t-y)^2}\right]\right) =  -2B^2 \ln \left(\E_{z\sim P_t^{\Z}} \left[e^{-\frac{1}{2B^2}(z-y)^2}\right]\right).
\end{align*}
where $P_t^{\Z}$ is the distribution over $\R$ induced by $P_t$. Then, the arguments in the proof of Corollary~\ref{cor:squared-loss} shows that the greedy forecaster~\eqref{eq:greedy-forecaster-squared-loss} constructed based on $m_{ls}(P,y)$ ensures
\begin{align*}
\ell_{\sq}(z_t^{\mix},y) \leq -2B^2 \ln\left(\E_{z\sim P_t^\Z}\left[e^{-\frac{1}{2B^2}(z-y)^2}\right]\right) = -2B^2 \ln \left(\E_{\w\sim P_t} \left[e^{-\frac{1}{2B^2}(\w^\top\x_t-y)^2}\right]\right),
\end{align*} 
which indicates a non-positive mixability gap and one can ensures an $\O((\ln T)^{\frac{2}{3}}P_T^{\frac{2}{3}}T^{\frac{1}{3}})$ bound following the same argument as Theorem~\ref{thm:main}.
\end{proof}

\subsection{Proof of Corollary~\ref{cor:logistic-loss}}
\begin{proof}[Proof of Corollary~\ref{cor:logistic-loss}]
We show the mixability gap is non-positive when we defined $z_t^{\mix}$ as~\eqref{eq:mix-logistic-regression}. Let $\sigma(z) = 1/(1+\exp(-z))$. This can be shown by a direct calculation. In the case when $y = +1$, we have
\begin{align*}
  \ell_{\lr}(z_t^{\mix},+1) =  \log\left(1 + \exp(-z_t^\mix)\right) = - \ln \left(\E_{\w\sim P_t}[\sigma(\w^\top\x_t)]\right) = -\ln\left(\E_{\w\sim P_t}[e^{-\ell_{\lr}(\w^\top\x_t,+1)}]\right).
\end{align*}
For the case $y = -1$, we have
\begin{align*}
  \ell_{\lr}(z_t^{\mix},-1) = \log\left( 1 + \exp(z_t^\mix)\right) =  \log \left(\frac{1}{1-\E_{\w\sim P_t}[\sigma(\w^\top\x_t)]}\right) = -\ln\left(\E_{\w\sim P_t}[e^{-\ell_{\lr}(\w^\top\x_t,-1)}]\right).
\end{align*}
Therefore, the mixability gap is exactly 0 for any $y_t\in\{+1,-1\}$. We complete the proof by the same argument in the proof of Theorem~\ref{thm:main}.
\end{proof}
\section{Omitted Proofs for Section~\ref{sec:general-OCO}}
\label{appendix:proof-section-oco}
  \subsection{Properties of the Set of Distributions $\mathscr{M}$}
  This section provides several properties about the set~\eqref{eq:GMM-set} defined as
\begin{align}
  \mathscr{M} \triangleq \left\{\int_{\theta\in \Theta}\N({\w},\Sigma)\mathrm{d}\pi( \theta)\mid\pi \in \mathscr{P}(\Theta)\right\}, 
\end{align}
where $\Theta = \W\times \S$ and we denote by $\theta = (\w,\Sigma)$. In the above, $\S \triangleq\bigl\{\Sigma\in\mathbb S^{d}_{++} \mid \tfrac1T\le\lambda_{\min}(\Sigma)\le\lambda_{\max}(\Sigma)\le 1\bigr\}$ is a set of symmetric positive-definite matrices with bounded eigenvalue and $\W\subset\R^d$ is a convex and compact domain. We present the following lemmas about the properties of the set.

  \begin{myLemma}
  \label{lem:convex-closed}
      The set $\mathscr{M}$ is convex and closed in topology of the total variation distance.
  \end{myLemma}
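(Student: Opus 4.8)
The plan is to establish the two properties by separate arguments: convexity follows from an elementary manipulation of the mixing measures, while closedness is obtained by exhibiting $\mathscr{M}$ as the image of the weakly compact set $\mathscr{P}(\Theta)$ under a mixture map that is continuous \emph{into the total-variation topology}. Concretely, I would proceed in four steps: (i) verify convexity directly; (ii) show $\Theta=\W\times\S$ is a compact metric space, so that $\mathscr{P}(\Theta)$ is compact in the weak topology; (iii) prove that the mixture map $\Phi:\pi\mapsto\int_{\Theta}\N(\w,\Sigma)\,\mathrm{d}\pi(\theta)$ is continuous from the weak topology on $\mathscr{P}(\Theta)$ to the total-variation topology on the space of probability measures on $\R^d$; and (iv) combine (ii) and (iii) to conclude that $\mathscr{M}=\Phi(\mathscr{P}(\Theta))$ is closed.

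Step (i) is immediate: if $P_0=\int_{\Theta}\N(\w,\Sigma)\,\mathrm{d}\pi_0(\theta)$ and $P_1=\int_{\Theta}\N(\w,\Sigma)\,\mathrm{d}\pi_1(\theta)$ are in $\mathscr{M}$ and $\lambda\in[0,1]$, then $\lambda P_0+(1-\lambda)P_1=\int_{\Theta}\N(\w,\Sigma)\,\mathrm{d}\big(\lambda\pi_0+(1-\lambda)\pi_1\big)(\theta)$, and $\lambda\pi_0+(1-\lambda)\pi_1\in\mathscr{P}(\Theta)$, so the convex combination lies in $\mathscr{M}$. For step (ii), $\W$ is compact by Assumption~\ref{assum:bounded-domain}, and $\S$ is a closed subset of the bounded set $\{\Sigma:\|\Sigma\|_{\mathrm{op}}\le 1\}$ in the finite-dimensional space of symmetric matrices — the constraints $\lambda_{\min}(\Sigma)\ge 1/T$ and $\lambda_{\max}(\Sigma)\le 1$ are closed conditions — hence compact; therefore $\Theta$ is compact metric and, by Prokhorov's theorem, $\mathscr{P}(\Theta)$ is compact and metrizable in the weak topology.

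Step (iii) is where I expect the real work, and it is exactly the place the two-sided eigenvalue bounds defining $\S$ enter. For each fixed $x\in\R^d$, the Gaussian density $p_\theta(x)$ is jointly continuous in $\theta=(\w,\Sigma)$ on $\Theta$, and it obeys the uniform bound $p_\theta(x)\le(2\pi)^{-d/2}T^{d/2}$ because $\lambda_{\min}(\Sigma)\ge 1/T$ forces $\det\Sigma\ge T^{-d}$; hence $\theta\mapsto p_\theta(x)$ belongs to $C_b(\Theta)$. Consequently, if $\pi_n\rightharpoonup\pi$ weakly, then the densities $q_n(x)=\int_{\Theta}p_\theta(x)\,\mathrm{d}\pi_n(\theta)$ of $\Phi(\pi_n)$ converge to $q(x)=\int_{\Theta}p_\theta(x)\,\mathrm{d}\pi(\theta)$ for every $x$; since $q_n$ and $q$ are probability densities, Scheff\'{e}'s lemma upgrades this pointwise convergence to $L^1$ convergence, i.e.\ $\Phi(\pi_n)\to\Phi(\pi)$ in total variation, which is the claim.

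Finally, step (iv): total variation is a metric, so it suffices to argue sequentially. Given $Q_n\in\mathscr{M}$ with $Q_n\to Q$ in total variation, write $Q_n=\Phi(\pi_n)$ with $\pi_n\in\mathscr{P}(\Theta)$; by step (ii) extract a subsequence $\pi_{n_k}\rightharpoonup\pi\in\mathscr{P}(\Theta)$, and by step (iii) $Q_{n_k}=\Phi(\pi_{n_k})\to\Phi(\pi)$ in total variation. Uniqueness of limits gives $Q=\Phi(\pi)\in\mathscr{M}$, so $\mathscr{M}$ is total-variation closed; in fact the same argument shows $\mathscr{M}$ is total-variation compact, being a continuous image of a compact set. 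The anticipated main obstacle is step (iii): one must check that the non-degeneracy and uniform control of the Gaussian components — both guaranteed precisely by $1/T\le\lambda_{\min}(\Sigma)\le\lambda_{\max}(\Sigma)\le 1$ — suffice to transport weak convergence of the mixing measures all the way to total-variation convergence of the mixtures through Scheff\'{e}'s lemma.
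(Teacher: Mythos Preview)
Your proposal is correct and follows essentially the same route as the paper: convexity via convex combinations of the mixing measures, and closedness by using compactness of $\Theta$ together with Prokhorov's theorem to pass to a weakly convergent subsequence of mixing measures, then upgrading pointwise convergence of the mixture densities to $L^1$/TV convergence. The only minor difference is that you invoke Scheff\'e's lemma for this last step, whereas the paper constructs an explicit integrable envelope and applies dominated convergence; your version is slightly cleaner but otherwise identical in spirit.
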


  \begin{proof}[Proof of Lemma~\ref{lem:convex-closed}]
  We first check the convexity of the set. Pick two distributions $P_{\pi_1} = \int_{\theta\in\Theta} \N(\w,\Sigma) \mathrm{d} \pi_1(\theta)$, $P_{\pi_2} = \int_{\theta\in\Theta} \N(\w,\Sigma) \mathrm{d} \pi_2(\theta)$ in the set $\mathscr{M}$ and let $\lambda\in[0,1]$. Set $\pi_{\lambda} = \lambda \pi_1 + (1-\lambda) \pi_2$. Because $\pi_\lambda$ is a still probability measure on $\Theta$, it belongs to $\mathscr P(\Theta)$.  For every Borel set $A\subseteq\R^{d}$,
\[
P_{\pi_\lambda}(A)
 =\int_{\Theta} \mathcal N(\w,\Sigma)(A)\,\pi_\lambda(d\theta)
 =\lambda P_{\pi_1}(A)+(1-\lambda)P_{\pi_2}(A),
\]
so $P_{\pi_\lambda}= \lambda P_{\pi_1}+(1-\lambda)P_{\pi_2}\in\mathscr M$.  
Hence $\mathscr M$ is convex.

Then, we check the set is closed in topology of the total variation distance. Denote by $$g(\u;\theta) = \frac{1}{\sqrt{(2\pi)^d \vert\Sigma\vert}}\exp\left(-\frac{1}{2}(\u-\w)^\top\Sigma^{-1}(\u-\w)\right)$$ the density function of the Gaussian distribution $\N(\w,\Sigma)$ with $\theta = (\w,\Sigma)$. We begin by choosing $\{P_n\}_{n\geq 1}$ as an arbitrary sequence within $\mathscr{M}$ such that $P_n$ convergence to a distribution $P_*$ in total variable, i.e., $\norm{P_n-P_*}_{\mbox{TV}}\rightarrow 0$. Because each $P_n$ lies in $\mathscr M$ by definition, it can be written in mixture form as $P_n \;=\; \int_{\Theta} \mathcal N(\mu,\Sigma)\,d\pi_n(\theta)$ for some $\pi_n\in\mathscr P(\Theta).$

Since the parameter set $\Theta = \W\times \S$ is compact, every family of
mixing measures $\{\pi_n\}_{n\ge1}\subset\mathscr P(\Theta)$ is tight. By Prokhorov’s theorem, one can extract a subsequence $(\pi_{n_k})_{k\ge1}$ of $\{\pi_n\}_{n\ge1}$ and find a limit $\pi\in\mathscr P(\Theta)$ such that
\[
  \pi_{n_k}\Longrightarrow\pi_*
  \quad\text{in }\mathscr P(\Theta)\quad\text{(weak convergence).}
\]
 We can also define
\begin{align*}
  P_{{n_k}}(\u) = \int_{\Theta}g(\u;\theta) \mathrm{d} \pi_{n_k}(\theta)~~~~\mbox{and}~~~~ P_{\pi_*}(\u) = \int_{\Theta} g_{\theta}(\u)\mathrm{d} \pi_*(\theta).
\end{align*}
Since the map $\theta\mapsto g(\u;\theta)$ is continuous and bounded on the compact set $\Theta$, we have 
\begin{equation}
\lim_{k\rightarrow\infty} P_{n_k}(\u) = P_{\pi_*}(\u) \mbox{ for all } \u\in\R^d. \label{eq:pointwise-convergence}
\end{equation}
In the next, we show the density $P_{n_k}(\u)$ and $P_{\pi_*}(\u)$ is upper bounded by an integrable function $\bar{g}(\u)$. Specifically, let $[z]_+ \triangleq \max\{0,z\}$ for any $z\in\R$.  For every $\mathbf{u}\in\R^{d}$ we have  
\[
  \lvert P_{n_k}(\mathbf{u}) \rvert
    = \int_{\Theta} g(\mathbf{u};\theta)\, \mathrm{d}\pi_{n_k}(\theta)
    \;\le\;
    \bar g(\mathbf{u})\triangleq\Bigl(\tfrac{T}{2\pi}\Bigr)^{d/2}
    \exp\!\Bigl[-\tfrac12\,[\norm{\mathbf{u}}_2 - R]_+^{\,2}\Bigr],
\]
where $R \triangleq \sup_{\mathbf{w}\in\W}\norm{\mathbf{w}}_2<\infty$ because $\W$ is compact.  
The inequality follows from the bounds $I_d \preccurlyeq \Sigma^{-1} \preccurlyeq T I_d$ that hold for every $(\mathbf{w},\Sigma)\in\Theta$.  The same argument gives  $\lvert P_{\pi_*}(\mathbf{u}) \rvert \;\le\; \bar g(\mathbf{u})$ for all $\mathbf{u}\in\R^{d}.$ Therefore, we have
\begin{align}
  \vert P_{n_k}(\u) - P_{\pi_*}(\u)\vert \leq 2 \bar{g}(\u). \label{eq:domain}
\end{align}
We note the dominating function $\bar g$ is integrable as it is constant on the ball $\C = \{\mathbf{u}\in\R^{d} : \norm{\mathbf{u}}_2 \le R\}$ and decays with a Gaussian tail outside $\C$, hence  $\int_{\R^{d}} \bar g(\mathbf{u})\, \mathrm{d}\mathbf{u} < \infty \forall \u\in\R^d .$

Combining the pointwise convergence~\eqref{eq:pointwise-convergence} and the integrable domination~\eqref{eq:domain}, the dominated convergence theorem yields
\begin{equation}
  \mathop{\mbox{lim}}_{k\rightarrow\infty}  \int_{\R^{d}}\vert P_{n_k}(\u) - P_{\pi_*}(\u)\vert \mathrm{d}\u=0. \label{eq:2-6}
\end{equation}

Thus $P_{n_k}\to P_{\pi_*}$ in $L^{1}(\R^{d})$.  Because two probability measures with densities $q_1(\u),q_2(\u)$ satisfy
\[
  \|Q_1-Q_2\|_{\mathrm{TV}}
     =\frac12\int_{\R^{d}}\!|q_1(\u)-q_2(\u)|\,d\u,
\]
relation~\eqref{eq:2-6} implies
\begin{equation}
  \|P_{n_k}-P_{\pi_*}\|_{\mathrm{TV}}
     \;=\;\tfrac12\|P_{n_k}-P_{\pi_*}\|_{L^{1}}
     \xrightarrow{k\to\infty}0. \label{eq:27}
\end{equation}

By assumption, the original sequence $\{P_n\}_{n\geq 1}$ converges in total variation to some limit $P_{\ast}$.  Because total variation is a metric, limits are unique; therefore $P_{\ast}=P_{\pi_*}$. Finally, $P_{\pi_*}$ has already been written as the Gaussian mixture generated by the weak limit~$\pi_{*}$ of the mixing measures, so $P_{\pi_*}\in\mathscr M$.  Consequently, every TV-convergent sequence in~$\mathscr M$ remains in~$\mathscr M$, and the set is closed in the total-variation topology.
\end{proof}

\begin{myLemma}
\label{lem:bounded-max-M}
For any distribution $P_t\in\mathscr{M}$, we have $\max_{\u\in\R^d} \ln (P_t(\u))\leq \frac{d}{2}\ln \Big(\frac{T}{2\pi}\Big)$.
\end{myLemma}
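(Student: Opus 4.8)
The plan is to reduce the bound on a mixture density to a pointwise bound on each Gaussian component. Fix an arbitrary $P_t\in\mathscr{M}$ and write it in mixture form as $P_t(\u) = \int_{\Theta} g(\u;\theta)\,\mathrm{d}\pi(\theta)$ for some $\pi\in\mathscr{P}(\Theta)$, where $\theta = (\w,\Sigma)\in\Theta = \W\times\S$ and $g(\u;\theta)$ denotes the density of $\N(\w,\Sigma)$, i.e. $g(\u;\theta) = (2\pi)^{-d/2}(\det\Sigma)^{-1/2}\exp\bigl(-\tfrac12(\u-\w)^\top\Sigma^{-1}(\u-\w)\bigr)$.

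First I would bound a single component uniformly in $\u$. Since $\Sigma$ is SPD, the exponential factor $\exp\bigl(-\tfrac12(\u-\w)^\top\Sigma^{-1}(\u-\w)\bigr)$ is at most $1$ for every $\u$, so $g(\u;\theta)\leq (2\pi)^{-d/2}(\det\Sigma)^{-1/2}$. By the definition of $\S$ we have $\lambda_{\min}(\Sigma)\geq 1/T$, hence $\det\Sigma = \prod_{i=1}^d\lambda_i(\Sigma)\geq T^{-d}$, which gives $(\det\Sigma)^{-1/2}\leq T^{d/2}$ and therefore $g(\u;\theta)\leq (T/(2\pi))^{d/2}$ for all $\u\in\R^d$ and all $\theta\in\Theta$. (Only the lower eigenvalue bound on $\Sigma$ is used; the upper bound $\lambda_{\max}(\Sigma)\leq 1$ is irrelevant here.)

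Then I would integrate the pointwise estimate against the mixing measure: since $\pi$ is a probability measure on $\Theta$,
\[
P_t(\u) = \int_{\Theta} g(\u;\theta)\,\mathrm{d}\pi(\theta) \;\leq\; \sup_{\theta\in\Theta} g(\u;\theta) \;\leq\; \Bigl(\tfrac{T}{2\pi}\Bigr)^{d/2}
\]
for every $\u\in\R^d$. Taking logarithms yields $\max_{\u\in\R^d}\ln\bigl(P_t(\u)\bigr)\leq \tfrac{d}{2}\ln\bigl(T/(2\pi)\bigr)$, as claimed.

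There is essentially no obstacle: the statement follows from a direct computation once one observes that the exponential part of any Gaussian density never exceeds $1$, so only the normalizing constant matters, and that constant is controlled purely by the covariance-eigenvalue lower bound baked into the definition of $\S$. The only point requiring (minor) care is that the bound must be established pointwise in $\u$ \emph{before} averaging against $\pi$, which is why the uniform-in-$\u$ estimate on $g(\cdot;\theta)$ is proved first; after that, monotonicity of the integral finishes the argument immediately.
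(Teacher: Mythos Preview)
Your proposal is correct and follows essentially the same approach as the paper's proof: write $P_t$ as a Gaussian mixture, bound the exponential factor of each component by $1$, use $\lambda_{\min}(\Sigma)\ge 1/T$ to control $(\det\Sigma)^{-1/2}\le T^{d/2}$, and then pass the pointwise bound through the mixing integral. The only cosmetic difference is that the paper first drops the exponential inside the integral and then bounds the remaining normalizing constant by its supremum, whereas you bound $g(\u;\theta)$ uniformly before integrating; the arguments are otherwise identical.
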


\begin{proof}[Proof of Lemma~\ref{lem:bounded-max-M}]
According to the definition of $\mathscr{M}$, we have $P_t = \int_{(\w,\Sigma)\in\Theta} \N(\w,\Sigma) \mathrm{d} \pi_t(\theta)$ for some distribution $\pi_t$ on $\Theta$. Then, for any $\u\in\R^d$, the density can be bounded by 
\begin{align*}
  P_t(\u) ={}& \int_{(\w,\Sigma)\in\W\times \S} \frac{1}{\sqrt{(2\pi)^d \vert \Sigma\vert}} \exp\left(-\frac{1}{2}(\u-\w)^\top\Sigma^{-1}(\u-\w)\right)\mathrm{d}\pi_t( \theta)\\
  \leq {}& \int_{(\w,\Sigma)\in\W\times \S} \frac{1}{\sqrt{(2\pi)^d \vert \Sigma\vert}}\mathrm{d} \pi_t(\theta)\\
  \leq{}& \max_{(\w,\Sigma)\in\W\times \S} \frac{1}{\sqrt{(2\pi)^d \vert \Sigma\vert}} \leq \frac{1}{\sqrt{(2\pi/T)^d }}
\end{align*}
where the first inequality is due to the fact that $\exp(-z) \leq 1$ for any $z\geq 0$. The last inequality is due to the definition of $\S \triangleq\bigl\{\Sigma\in\mathbb S^{d}_{++} \mid \tfrac1T\le\lambda_{\min}(\Sigma)\le\lambda_{\max}(\Sigma)\le 1\bigr\}$ and the fact that $\lambda_{\min}(\Sigma)\geq 1/T$ for any $\Sigma\in\S$. Then, we have $\ln \big(P_t(\u)\big) \leq \ln \Big(\frac{1}{\sqrt{(2\pi/T)^d }}\Big) \leq \frac{d}{2}\ln \Big(\frac{T}{2\pi}\Big)$ for any $\u\in\R^d$, which completes the proof.
\end{proof}

  \subsection{Useful Lemmas}
  \label{appendix:the4-lemma}
\begin{myLemma}
  \label{lem:mixablity-surr}
  Let $\tilde{f}_t(\w) =  \g_t^\top(\w-\w_t) + \frac{\gamma}{2} \big(\g_t^\top(\w-\w_t)\big)^2$, where $\w_t = \E_{\w\sim P_t}[\w]$. Under Assumptions~\ref{assum:bounded-domain} and~\ref{assum:gradient}, we have $$\tilde{f}_t(\w_t) \leq -\frac{2}{\gamma}\ln\left( \E_{\w\sim P_t}\left[\exp\Big(-\frac{\gamma}{2} \tilde{f}_t(\w)\Big)\right]\right)$$ for any distribution $P_t \in \mathscr{M}$ when $\gamma \leq 1/(4GD)$ and $\w_t\in\W$.
\end{myLemma}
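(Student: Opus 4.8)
The plan is to reduce the statement to a one-dimensional Gaussian estimate. Observe first that $\tilde f_t(\w_t)=0$, since both the linear and quadratic terms in \eqref{eq:surrogate-loss} vanish at $\w=\w_t$; hence the claim is equivalent to $\E_{\w\sim P_t}\big[\exp(-\tfrac{\gamma}{2}\tilde f_t(\w))\big]\le 1$. Writing $c\triangleq\gamma/2$ and $X\triangleq\g_t^\top(\w-\w_t)$, a direct expansion gives $-\tfrac{\gamma}{2}\tilde f_t(\w)=-cX-c^2X^2$, so I must show $\E_{\w\sim P_t}[\exp(-cX-c^2X^2)]\le 1$. Note a naive pointwise bound will not do: since $P_t\in\mathscr{M}$ is a Gaussian mixture with full support on $\R^d$, the scalar $X$ is unbounded, so estimates such as $e^{-z}\le 1-z+z^2$ (valid only for bounded $z$) are unavailable — this is precisely why the constraint set $\mathscr{M}$ must be built from Gaussians with \emph{bounded} component means.

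Next I would pass to the mixture representation: by \eqref{eq:GMM-set} there is $\pi_t\in\mathscr{P}(\Theta)$ with $P_t=\int_{\Theta}\N(\w',\Sigma)\,\mathrm{d}\pi_t(\w',\Sigma)$, and by Fubini the target expectation equals $\int_{\Theta}\E_{\w\sim\N(\w',\Sigma)}[\exp(-cX-c^2X^2)]\,\mathrm{d}\pi_t$. For a fixed component, $X$ is a scalar Gaussian $\N(m,s^2)$ with $m=\g_t^\top(\w'-\w_t)$ and $s^2=\g_t^\top\Sigma\g_t\in[0,G^2]$; since $\w',\w_t\in\W$ and $\norm{\g_t}_2=\norm{\nabla f_t(\w_t)}_2\le G$ (Assumptions~\ref{assum:bounded-domain} and~\ref{assum:gradient}), one gets $|m|\le GD$, hence $|cm|\le\tfrac{1}{8GD}\cdot GD=\tfrac18$ using $\gamma\le 1/(4GD)$. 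Also, averaging $m$ against $\pi_t$ yields $\int_\Theta m\,\mathrm{d}\pi_t=\g_t^\top(\E_{\w\sim P_t}[\w]-\w_t)=0$, the first-order cancellation that will eventually close the argument.

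The heart of the proof is the scalar estimate: for $Z\sim\N(m,s^2)$ with $|cm|\le\tfrac18$ and any $s^2\ge0$, one has $\E[\exp(-cZ-c^2Z^2)]\le 1-cm$. I would prove it by completing the square, $-cz-c^2z^2=\tfrac14-(cz+\tfrac12)^2$, so the left-hand side equals $e^{1/4}\,\E[\exp(-(cZ+\tfrac12)^2)]$; since $cZ+\tfrac12\sim\N(cm+\tfrac12,\delta)$ with $\delta\triangleq c^2s^2$, the standard identity $\E_{Y\sim\N(\nu,\delta)}[e^{-Y^2}]=(1+2\delta)^{-1/2}\exp(-\nu^2/(1+2\delta))$ gives the closed form $\Phi(\delta)\triangleq\frac{e^{1/4}}{\sqrt{1+2\delta}}\exp\!\big(-\tfrac{(cm+1/2)^2}{1+2\delta}\big)$. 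A short computation shows $\tfrac{\mathrm d}{\mathrm d\delta}\ln\Phi(\delta)=\big(2(cm+\tfrac12)^2-(1+2\delta)\big)/(1+2\delta)^2<0$ for all $\delta\ge0$, because $|cm|\le\tfrac18$ forces $(cm+\tfrac12)^2\le(\tfrac58)^2<\tfrac12$; hence $\Phi$ is nonincreasing in $\delta$ and $\Phi(\delta)\le\Phi(0)=e^{-cm-c^2m^2}$. It then remains to check the elementary inequality $e^{-cm-c^2m^2}\le 1-cm$ for $|cm|\le\tfrac18$, which follows from $\ln(1-x)\ge -x-x^2$ on that range.

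Finally I would integrate the scalar bound against $\pi_t$: $\E_{\w\sim P_t}[\exp(-cX-c^2X^2)]\le\int_\Theta(1-cm)\,\mathrm{d}\pi_t=1-c\int_\Theta m\,\mathrm{d}\pi_t=1$, so that $-\tfrac{2}{\gamma}\ln\E_{\w\sim P_t}[\exp(-\tfrac{\gamma}{2}\tilde f_t(\w))]\ge 0=\tilde f_t(\w_t)$, which is the claim. The main obstacle is exactly the scalar estimate: the unbounded Gaussian tail rules out the usual pointwise $e^{-z}$ bounds, so one must rely on the exact Gaussian moment computation together with the monotonicity of $\Phi$ in the variance parameter $\delta$, and it is precisely the threshold $\gamma\le 1/(4GD)$ that supplies the slack $(cm+\tfrac12)^2<\tfrac12$ needed both for that monotonicity and for $e^{-cm-c^2m^2}\le 1-cm$.
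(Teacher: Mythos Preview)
Your proof is correct and follows essentially the same route as the paper's: pass to the mixture representation, bound each Gaussian component by its value at the component mean, use $e^{-cm-c^2m^2}\le 1-cm$ (equivalently the paper's $e^{z-z^2}\le 1+z$), and integrate against $\pi_t$ so that the linear term cancels by $\E_{P_t}[\w]=\w_t$. The only substantive difference is in the per-component step: the paper invokes the Gaussian exp-concavity lemma of \citet[Lemma~10]{NIPS'16:MetaGrad} (stated here as Lemma~\ref{lem:Gaussian-exp-concavity}) as a black box, whereas you re-derive that inequality from scratch via the closed-form Gaussian integral and the monotonicity of $\Phi(\delta)$ in the variance parameter; your argument is thus self-contained and makes transparent exactly where the threshold $\gamma\le 1/(4GD)$ enters (to force $(cm+\tfrac12)^2<\tfrac12$), but otherwise the two proofs coincide step for step.
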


\begin{proof}[Proof of Lemma~\ref{lem:mixablity-surr}]
  To prove the lemma, it is sufficient to show $\E_{\w\sim P_t}[\exp(-\gamma \tilde{f}_t(\w)/2)]\leq \exp(-\gamma \tilde{f}_t(\w_t)/2)$ for the distribution $P_t\in\mathscr{M}$. Given the definition of $\mathscr{M}$~\eqref{eq:GMM-set}, the distribution $P_t = \int_{(\w,\Sigma)\in\Theta} \N(\w,\Sigma)\mathrm{d} \pi_t( \theta)$ is a Gaussian mixture model, where $\pi_t$ is a distribution on $\Theta$ corresponding to $P_t$. Then, we have 
  \begin{align*}
    \E_{\w\sim P_t}[\exp(-\gamma \tilde{f}_t(\w)/2)] = {}&\int_{(\w,\Sigma)\in\Theta}\E_{\u\sim \N(\w,\Sigma)}[\exp(-\gamma \tilde{f}_t(\w)/2)] \pi_t(d\theta)\\
    ={}&  \int_{(\w,\Sigma)\in\Theta}\E_{\u\sim \N(\w,\Sigma)}\left[\exp\left(\frac{\gamma}{2} \g_t^\top(\w_t - \u) - \frac{\gamma^2}{4} \big(\g_t^\top(\u-\w_t)\big)^2 \right)\right] \pi_t(d\theta)\\
    \leq{}&  \int_{(\w,\Sigma)\in\Theta}\exp\left(\frac{\gamma}{2} \g_t^\top(\w_t - \w) - \frac{\gamma^2}{4} \big(\g_t^\top(\w-\w_t)\big)^2 \right) \pi_t(d\theta)\\
    \leq {} & \int_{(\w,\Sigma)\in\Theta} \left(1+\frac{\gamma}{2} \g_t^\top(\w_t - \w)\right) \pi_t(d\theta)\\
    = {} & 1+\frac{\gamma}{2} \g_t^\top\left(\w_t - \int_{(\w,\Sigma)\in\Theta} \E_{\u\sim \N(\w,\Sigma)}[\u]\pi_t(d\theta)\right)= 1.
  \end{align*}
where first inequality is due to the Gaussian exp-concavity as shown in Lemma~\ref{lem:Gaussian-exp-concavity} in Appendix~\ref{sec:technical-lemmas} under the condition ${\gamma}/{2}\leq {1}/{(5GD)}$. The second inequality follows from the fact that $e^{z - z^2} \leq 1 + z$ for any $z \geq -\frac{2}{3}$. In our case, we set $z = -\frac{\gamma}{2} \g_t^\top (\w_t - \w)$, which satisfies $z \geq -\frac{2}{3}$ under the condition on $\gamma$ and the boundedness $\norm{\w_t - \w}_2 \leq D$ for all $\w_t, \w \in \W$. The last inequality is due to $\int_{(\w,\Sigma)}  \E_{\u\sim \N(\w,\Sigma)[\u]}\mathrm{d} \pi_t(\theta) = \E_{P_t}[\w] = \w_t$.

One the other hand, due to the definition of the surrogate loss, we have $\exp\big(-\gamma\tilde{f}_t(\w_t)/2\big) = 1$. We complete the proof by plugging the equality into the above displayed inequality. 
\end{proof}

The following lemma is a counterpart to Lemma~\ref{lem:mixability-regret} that incorporates the projection step. It shows that a similar mixability regret bound as Lemma~\ref{lem:mixability-regret} for the surrogate loss still holds.
\begin{myLemma}
\label{lem:mixability-regret-project}
Let $Q_t = \N(\u_t,\sigma^2 I_d)$ be a Gaussian distribution with mean $\u_t\in\W$ and covariance $\sigma^2 I_d \in \R^{d\times d}$ and define 
\[
\tilde{m}_t(P) = -\frac{2}{\gamma}\ln\left( \E_{\w\sim P_t}\left[\exp\Big(-\frac{\gamma}{2} \tilde{f}_t(\w)\Big)\right]\right).
\]
Algorithm~\ref{alg:hedge-oco-general} with the update rule~\eqref{eq:step1-EW-alg3},~\eqref{eq:step2-project-alg3} and~\eqref{eq:step3-fix-share-alg3} with $\mu = 1/T$ ensures that the mixability regret is bounded as
\begin{align*}
 {}& \sum_{t=1}^T \tilde{m}_t(P_t)  - \sum_{t=1}^T \E_{\u\sim Q_t}[ \tilde{f}_t(\u)] \leq \frac{4+2\mathrm{KL}(Q_1\Vert P_1) }{\gamma} + \frac{2}{\gamma} \sum_{t=2}^T \int_{\u\in\R^d} (Q_t(\u) - Q_{t-1}(\u)) \ln \left(\frac{1}{P_t(\u)}\right) \mathrm{d}\u,
\end{align*}
where $\mathrm{KL}(P \Vert Q) = \E_{\u\sim P}[\ln\left(P(\u)/Q(\u)\right)]$ refers to the Kullback-Leibler divergence.
\end{myLemma}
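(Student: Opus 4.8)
The plan is to re-run the proof of Lemma~\ref{lem:mixability-regret} almost verbatim, with the exponential-weight coefficient $\eta$ there replaced by $\gamma/2$, and to insert one extra step that absorbs the projection~\eqref{eq:step2-project-alg3}. First I would unfold the surrogate update~\eqref{eq:step1-EW-alg3}: since $P'_{t+1}(\u)\propto P_t(\u)\exp(-\gamma\tilde{f}_t(\u)/2)$, we get $\ln\big(P_t(\u)/P'_{t+1}(\u)\big)=\ln\E_{\w\sim P_t}[\exp(-\gamma\tilde{f}_t(\w)/2)]+\tfrac{\gamma}{2}\tilde{f}_t(\u)$, and applying $-\tfrac{2}{\gamma}\E_{\u\sim Q_t}[\,\cdot\,]$ to both sides turns this into $\tilde{m}_t(P_t)-\E_{\u\sim Q_t}[\tilde{f}_t(\u)]=\tfrac{2}{\gamma}\big(\mathrm{KL}(Q_t\Vert P_t)-\mathrm{KL}(Q_t\Vert P'_{t+1})\big)$ — the same identity as in the proof of Lemma~\ref{lem:mixability-regret}, except that the post-update distribution is $P'_{t+1}$ rather than $\tilde{P}_{t+1}$, and every $1/\eta$ has become $2/\gamma$.

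The only genuinely new ingredient is to pass from $P'_{t+1}$ to $\tilde{P}_{t+1}$. Because $\mathscr{M}$ is convex and closed in total variation (Lemma~\ref{lem:convex-closed}) and $\tilde{P}_{t+1}=\argmin_{Q\in\mathscr{M}}\mathrm{KL}(Q\Vert P'_{t+1})$ is its information projection onto $\mathscr{M}$, the generalized Pythagorean inequality for $I$-projections~\citep{csiszar1975divergence} yields $\mathrm{KL}(Q\Vert P'_{t+1})\ge\mathrm{KL}(Q\Vert\tilde{P}_{t+1})+\mathrm{KL}(\tilde{P}_{t+1}\Vert P'_{t+1})\ge\mathrm{KL}(Q\Vert\tilde{P}_{t+1})$ for every $Q\in\mathscr{M}$. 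I would invoke this with $Q=Q_t$, which is valid because $Q_t=\N(\u_t,\sigma^2 I_d)$ is itself a member of $\mathscr{M}$ — take the mixing measure $\pi$ in~\eqref{eq:GMM-set} to be a point mass at $(\u_t,\sigma^2 I_d)\in\Theta$, which is allowed as soon as $\u_t\in\W$ and $1/T\le\sigma^2\le1$ so that $\sigma^2 I_d\in\S$. This replaces $\mathrm{KL}(Q_t\Vert P'_{t+1})$ by the larger $\mathrm{KL}(Q_t\Vert\tilde{P}_{t+1})$ and gives $\tilde{m}_t(P_t)-\E_{\u\sim Q_t}[\tilde{f}_t(\u)]\le\tfrac{2}{\gamma}\big(\mathrm{KL}(Q_t\Vert P_t)-\mathrm{KL}(Q_t\Vert\tilde{P}_{t+1})\big)$.

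From here I would follow Lemma~\ref{lem:mixability-regret} step for step: sum over $t\in[T]$, reindex the second family of KL terms, drop the non-negative $\mathrm{KL}(Q_T\Vert\tilde{P}_{T+1})$, and split $\mathrm{KL}(Q_t\Vert P_t)-\mathrm{KL}(Q_{t-1}\Vert\tilde{P}_t)$ as $\big(\mathrm{KL}(Q_t\Vert P_t)-\mathrm{KL}(Q_{t-1}\Vert P_t)\big)+\big(\mathrm{KL}(Q_{t-1}\Vert P_t)-\mathrm{KL}(Q_{t-1}\Vert\tilde{P}_t)\big)$; the first bracket equals $\int_{\R^d}(Q_t(\u)-Q_{t-1}(\u))\ln(1/P_t(\u))\mathrm{d}\u$ since the two Gaussians $Q_t,Q_{t-1}$ share a covariance (hence a differential entropy), and the second is at most $\ln\tfrac{1}{1-\mu}\le 2/T$ because the fixed-share step~\eqref{eq:step3-fix-share-alg3} gives $P_t\ge(1-\mu)\tilde{P}_t$ pointwise with $\mu=1/T$. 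Collecting terms gives $\sum_{t=1}^T\tilde{m}_t(P_t)-\sum_{t=1}^T\E_{\u\sim Q_t}[\tilde{f}_t(\u)]\le\tfrac{2}{\gamma}\mathrm{KL}(Q_1\Vert P_1)+\tfrac{2}{\gamma}\sum_{t=2}^T\tfrac{2}{T}+\tfrac{2}{\gamma}\sum_{t=2}^T\int_{\R^d}(Q_t(\u)-Q_{t-1}(\u))\ln(1/P_t(\u))\mathrm{d}\u$, and $\sum_{t=2}^T 2/T\le2$ collapses the middle sum to $4/\gamma$, which is exactly the claimed bound. I expect the main obstacle to be conceptual rather than computational: pinning down the projection cleanly, i.e.\ that $\mathscr{M}$ is convex with a well-defined $I$-projection (covered by Lemma~\ref{lem:convex-closed} and~\citet{csiszar1975divergence}) and — the point where care is really needed — that the analysis comparator $Q_t$ genuinely lies in $\mathscr{M}$, since the Pythagorean inequality only controls divergences measured against members of the set onto which one projects. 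Everything after that is a mechanical replay of the computation in Lemma~\ref{lem:mixability-regret}.
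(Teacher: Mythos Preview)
Your proposal is correct and follows essentially the same route as the paper's proof: both unfold the exponential-weight update to obtain the KL identity, invoke the generalized Pythagorean inequality for the $I$-projection onto $\mathscr{M}$ to pass from $P'_{t+1}$ to $\tilde{P}_{t+1}$, and then replay the telescoping and fixed-share arguments of Lemma~\ref{lem:mixability-regret} with $\eta$ replaced by $\gamma/2$. You are in fact slightly more careful than the paper in explicitly verifying $Q_t\in\mathscr{M}$ (which requires $1/T\le\sigma^2\le 1$); one small wording slip is that $\mathrm{KL}(Q_t\Vert\tilde{P}_{t+1})$ is the \emph{smaller} of the two divergences, but your inequality and conclusion are stated correctly.
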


\begin{proof}[Proof of Lemma~\ref{lem:mixability-regret-project}]
Following the same reasoning as in the proof of Lemma~\ref{lem:mixability-regret}, according to the exponential-weight update rule~\eqref{eq:step1-EW-alg3}, we obtain
\begin{align*}
  {}&\sum_{t=1}^T \tilde{m}_t(P_t) - \sum_{t=1}^T \E_{\u\sim Q_t}[\tilde{f}_t(\u)]\\
  ={}& \frac{2}{\gamma} \sum_{t=1}^T \Big(\mbox{KL}(Q_t \Vert P_t) - \mbox{KL}(Q_t \Vert P'_{t+1}) \Big)\\
  \leq{}& \frac{2}{\gamma} \sum_{t=1}^T \Big(\mbox{KL}(Q_t \Vert P_t) - \mbox{KL}(Q_t \Vert \tilde{P}_{t+1}) \Big)\\
  \leq {}&\frac{2}{\gamma} \mbox{KL}(Q_1\Vert P_1) + \frac{2}{\gamma} \sum_{t=2}^T \Big(\mbox{KL}(Q_t \Vert P_t) - \mbox{KL}(Q_{t-1} \Vert \tilde{P}_{t}) \Big)\\
  ={}&  \frac{2}{\gamma} \mbox{KL}(Q_1\Vert P_1) + \frac{2}{\gamma} \sum_{t=2}^T \int_{\u\in\R^d} (Q_t(\u) - Q_{t-1}(\u)) \ln \left(\frac{1}{P_t(\u)}\right) \mathrm{d}\u + \frac{2}{\gamma} \sum_{t=1}^T \E_{\u\sim Q_{t-1}}\left[\ln \left(\frac{\tilde{P}_t(\u)}{P_t(\u)}\right)\right]\\
  \leq{}& \frac{2}{\gamma} (2+\mbox{KL}(Q_1\Vert P_1)) + \frac{2}{\gamma} \sum_{t=2}^T \int_{\u\in\R^d} (Q_t(\u) - Q_{t-1}(\u)) \ln \left(\frac{1}{P_t(\u)}\right) \mathrm{d}\u 
\end{align*}
where the first inequality follows from the generalized Pythagorean theorem for KL divergence. Specifically, since the set $\mathscr{M}$ is convex and closed in the topology of total variation distance as shown in Lemma~\ref{lem:convex-closed}, the projection $\tilde{P}_{t+1} = \arg\min_{P\in\mathscr{M}} \mathrm{KL}(P \Vert P'_{t+1})$ exists according to~\citet[Theorem 2.1]{csiszar1975divergence}. Furthermore, ~\citet{TIT'03:I-projection-revisit} shows that that for any $Q_t \in \mathscr{M}$, we have $\mathrm{KL}(Q_t \Vert P'_{t+1}) \geq \mathrm{KL}(Q_t \Vert \tilde{P}_{t+1})$. The last second equality follow the same reasoning in obtaining~\eqref{eq:proof-lemma1-term1}. The last inequality is due to the fixed-share step~\eqref{eq:step3-fix-share-alg3} with $\mu = 1/T$, where one can show $\E_{\u\sim Q_{t-1}}\left[\ln\left(\tilde{P}_t(\u)/P_t(\u)\right)\right]\leq 2/T$ by the same arguments in obtaining~\eqref{eq:penalty-B-I-2}.
\end{proof}

\subsection{Proof of Theorem~\ref{thm:main-general-oco}}
\begin{proof}[Proof of Theorem~\ref{thm:main-general-oco}]
For $\eta$-exp-concave loss function, Lemma~\ref{lemma:exp-concave-surrogate-loss} in Appendix~\ref{sec:technical-lemmas} shows that the dynamic regret in terms of the original loss can be upper bounded by that in terms of the surrogate loss as
\begin{align*}
    \DReg \leq{}& \sum_{t=1}^T \tilde{f}_t(\w_t) - \sum_{t=1}^T \tilde{f}_t(\u_t)\\
     = {}&\underbrace{\sum_{t=1}^T \tilde{f}_t(\w_t) -  \tilde{m}_t(P_{t})}_{\term{a}} + \underbrace{\sum_{t=1}^T \tilde{m}_t(P_{t}) - \sum_{t=1}^T \E_{\u\sim Q_t}[\tilde{f}_t(\u)]}_{\term{b}} + \underbrace{\sum_{t=1}^T \E_{\u\sim Q_t}[\tilde{f}_t(\u)] - \sum_{t=1}^T \tilde{f}_t(\u_t)}_{\term{c}},
\end{align*}
where $\tilde{m}_t(P) = -\frac{2}{\gamma}\ln\left( \E_{\w\sim P_t}\left[\exp\Big(-\frac{\gamma}{2} \tilde{f}_t(\w)\Big)\right]\right)$ is the mix loss defined in terms of the surrogate loss.

\paragraph{\protect\circled{1} Bounding mixability gap.} For \term{A}, Lemma~\ref{lem:mixablity-surr} shows the mixability gap is non-positive as $\term{A}\leq 0$. 

\paragraph{\protect\circled{2} Bounding mixability regret.} As for \term{B}, we have
\begin{align}
    \term{B} \leq{}& \frac{4+2\mathrm{KL}(Q_1\Vert P_1) }{\gamma} + \frac{2}{\gamma} \sum_{t=2}^T \int_{\u\in\R^d} (Q_t(\u) - Q_{t-1}(\u)) \ln \left(\frac{1}{P_t(\u)}\right) \mathrm{d}\u\notag\\
    \leq{}& \frac{2}{\gamma}\left(2 +d\ln \left(\frac{1}{\sigma}\right) + \frac{d\sigma^2}{2} + \frac{D^2}{2 }\right) + \frac{2}{\gamma} \sum_{t=2}^T \int_{\u\in\R^d} (Q_t(\u) - Q_{t-1}(\u)) \ln \left(\frac{1}{P_t(\u)}\right) \mathrm{d}\u\label{eq:thm4-termb}
\end{align}
where the first inequality is due to Lemma~\ref{lem:mixability-regret-project} in Appendix~\ref{appendix:the4-lemma}, which serves as the counterpart to Lemma~\ref{lem:mixability-regret} when the projection step is incorporated. The second inequality follows from the definitions $Q_1 = \mathcal{N}(\mathbf{u}_1,\sigma^2 I_d)$ and $P_1 = \mathcal{N}(\mathbf{w}_0, I_d)$, together with Lemma~\ref{lem:KL-divergence-gaussian}.

For the last term on the right-hand side of~\eqref{eq:thm4-termb}, we follow the same argument as in the proof of Theorem~\ref{thm:main} and decompose the integral into three regions: $\W_t^{(1)} = \{\u \in \R^d \mid Q_t(\u) < Q_{t-1}(\u) \text{ and } P_t(\u) > 1\}$, $\W_t^{(2)} = \{\u \in \R^d \mid Q_t(\u) > Q_{t-1}(\u) \text{ and } P_t(\u) < 1\}$, and the remainder $\overline{\W}_t = \R^d \setminus \W_t^{(1)} \cup \W_t^{(2)}$. Since $\big(Q_t(\u) - Q_{t-1}(\u)\big)\ln(1/P_t(\u)) \leq 0$ for all $\u \in \overline{\W}_t$, the integral over this region is non-positive. Therefore, it suffices to bound the integrals over $\W_t^{(1)}$ and $\W_t^{(2)}$. For the integral over region $ \W_t^{(1)} $,  We have
\begin{align}
  \int_{\u\in\W_t^{(1)}} \big(Q_{t}(\u) - Q_{t-1}(\u)\big) \ln \left( \frac{1}{P_t(\u)} \right) \mathrm{d} \u \leq \frac{d}{2}\ln\left(\frac{T}{2\pi}\right)\cdot \norm{Q_t - Q_{t-1}}_1, \label{eq:proof-main-region1-thm4}
\end{align}
where~\eqref{eq:proof-main-region1-thm4} holds by $P_t = \mu N_0 + (1-\mu) \tilde{P}_t\in\mathscr{M}$ and $\max_{\u\in\R^d} \ln (P_t(\u))\leq \frac{d}{2}\ln \Big(\frac{T}{2\pi}\Big)$ as shown in Lemma~\ref{lem:bounded-max-M}.

As for the integral over the region $\W_t^{(2)}$, due to the fixed-share update rule~\eqref{eq:step3-fix-share-alg3}, the same arguments in obtaining~\eqref{eq:proof-main-region2} in the proof of Theorem~\ref{thm:main} can be applied to show that 
\begin{align}
  \int_{\u\in\W_t^{(2)}} \big(Q_t(\u) - Q_{t-1}(\u)\big) \ln \left( \frac{1}{P_t(\u)} \right) \mathrm{d} \u \leq \left(\frac{d\ln (2\pi )}{2} + D + \ln \Big(\frac{1}{\mu}\Big)\right) \norm{Q_t-Q_{t-1}}_1 + 2d\sigma^2,\label{eq:proof-main-region2-thm4}
\end{align}

Then, plugging~\eqref{eq:proof-main-region1-thm4} and~\eqref{eq:proof-main-region2-thm4} back into~\eqref{eq:thm4-termb} and further upper bound the total variation between $Q_t$ and $Q_t$ by the path-length as shown in~\eqref{eq:proof-main-total-variation}, we have 
\begin{align}
  \term{b} \leq \frac{(d+2)\ln T + D}{\gamma}\cdot \frac{ P_T}{\sigma}+ \frac{d\sigma^2 (4T+1)}{\gamma} + \frac{2D^2+4}{\gamma}+ \frac{2d}{\gamma} \ln \left(\frac{1}{\sigma}\right).\label{eq:proof-thm4-term-B-final}
\end{align}

\paragraph{\protect\circled{3} Bounding comparator gap.} As for \term{C}, given the definition of the loss function $\tilde{f}_t$, the comparator gap can be directly calculated by 
\begin{align*}
    \term{c} = {}&\sum_{t=1}^T\E_{\u\sim Q_t}[\tilde{f}_t(\u) - \tilde{f}_t(\u_t)]\\
     = {}&  \sum_{t=1}^T\E_{\u\sim Q_t} \left[ \g_t^\top(\u-\u_t)\right] + \frac{\gamma}{2}\E_{\u\sim Q_t}\left[\big(\g_t^\top(\u-\w_t)\big)^2 - \big(\g_t^\top(\u_t-\w_t)\big)^2\right]\\
    ={}& \sum_{t=1}^T\frac{\gamma}{2} \E_{\u\sim Q_t}[\big(\g_t^\top(\u-\u_t)\big)^2] \\
      \leq{}&\sum_{t=1}^T \frac{\gamma\norm{\g_t}^2_2}{2}\cdot \E_{\u\sim Q_t}[\norm{\u-\u_t}_2^2]\\
     \leq{}&  \frac{\gamma d\sigma^2\sum_{t=1}^T\norm{\g_t}_2^2}{2}    \leq{} \frac{\gamma d\sigma^2G^2 T}{2}
\end{align*}
where the third equality is due to $\E_{\u\sim Q_t}[\u] = \u_t$. The last second inequality is by $\E_{\u\sim Q_t}[\norm{\u-\u_t}_2^2] \leq  (d\sigma^2)/2$.

Then, combining the upper bound for \term{a}, \term{b} and \term{c}, we have 
\begin{align*}
  \DReg \leq{}&  \frac{(d+2)\ln T + D}{\gamma}\cdot \frac{ P_T}{\sigma}+ d\left(\frac{5}{\gamma} + \frac{\gamma  G^2}{2}\right) \sigma^2 T+\frac{2D^2+4}{\gamma}+ \frac{2d}{\gamma} \ln \left(\frac{1}{\sigma}\right).
\end{align*}
We can further bound the above regret bound by considering different value of $P_T$. 

\paragraph{Case 1: $P_T \leq \sqrt{\frac{1}{T}}$.} We can choose $\sigma = \sqrt{1/T}$ to obtain $\DReg\leq \O(d\log T)$.

\paragraph{Case 2: $P_T \geq \sqrt{\frac{1}{T}}$.} We select $\sigma = P_T^{\frac{1}{3}}T^{-\frac{1}{3}}$ to obtain $\DReg\leq \O\Big(d\log T \cdot T^{\frac{1}{3}} P_T^{\frac{2}{3}} + d\log T\Big)$.
\end{proof}

\section{Technical Lemmas}
\label{sec:technical-lemmas}
In this section, we provide several useful lemmas used in the proof. 
\begin{myLemma}[Theorem 1.8.1 of~\citet{Book:information-theory}]
  \label{lem:Gaussian_entropy}
  Let $Q = \mathcal{N}(\u_q,\Sigma_q)$ be a $d$-dimensional Gaussian distribution. Then the entropy of $Q$ is given by
  \begin{align*}
  H(Q) = \frac{d}{2} \ln (2\pi e) + \frac{1}{2}\ln \vert \Sigma_q\vert.
  \end{align*}
\end{myLemma}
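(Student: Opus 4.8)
The statement is the standard closed-form expression for the differential entropy of a multivariate Gaussian, so the plan is a direct evaluation of $H(Q) = -\E_{\u \sim Q}[\ln q(\u)]$, where $q$ denotes the density of $Q = \mathcal{N}(\u_q,\Sigma_q)$. First I would write out the log-density explicitly,
\[
\ln q(\u) = -\frac{d}{2}\ln(2\pi) - \frac{1}{2}\ln\vert\Sigma_q\vert - \frac{1}{2}(\u-\u_q)^\top \Sigma_q^{-1}(\u-\u_q),
\]
which follows from the definition of the Gaussian density together with the identity $\det(2\pi\Sigma_q) = (2\pi)^d\vert\Sigma_q\vert$. Taking the negative expectation under $Q$, the first two (deterministic) terms pass through unchanged, and the problem reduces to evaluating the quadratic-form expectation $\E_{\u\sim Q}[(\u-\u_q)^\top \Sigma_q^{-1}(\u-\u_q)]$.

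For that expectation I would use the cyclic property of the trace: since a scalar equals its own trace, linearity of trace and expectation gives
\[
\E_{\u\sim Q}\big[(\u-\u_q)^\top \Sigma_q^{-1}(\u-\u_q)\big] = \mathrm{tr}\big(\Sigma_q^{-1}\,\E_{\u\sim Q}[(\u-\u_q)(\u-\u_q)^\top]\big) = \mathrm{tr}(\Sigma_q^{-1}\Sigma_q) = d,
\]
where the middle step uses the definition of the covariance matrix of $Q$. Combining the pieces yields $H(Q) = \frac{d}{2}\ln(2\pi) + \frac{1}{2}\ln\vert\Sigma_q\vert + \frac{d}{2} = \frac{d}{2}\ln(2\pi e) + \frac{1}{2}\ln\vert\Sigma_q\vert$, which is the claimed formula.

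An alternative route, which I would mention for completeness, avoids even the trace identity by reducing to the isotropic case: write $\u = \Sigma_q^{1/2}\z + \u_q$ with $\z\sim\mathcal{N}(\mathbf{0},I_d)$, invoke the affine change-of-variables rule $H(A\z+b) = H(\z) + \ln\vert\det A\vert$ for invertible $A$ (here $\ln\vert\det\Sigma_q^{1/2}\vert = \frac{1}{2}\ln\vert\Sigma_q\vert$), and then factor $H(\mathcal{N}(\mathbf{0},I_d))$ into $d$ copies of the one-dimensional standard-normal entropy $\frac{1}{2}\ln(2\pi e)$. There is no genuine obstacle in either approach; the only point requiring a moment of care is the quadratic-form expectation in the first route (equivalently, the entropy transformation law under affine maps in the second), both of which are elementary.
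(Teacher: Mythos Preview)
Your proof is correct and entirely standard. The paper does not actually prove this lemma; it simply cites it as Theorem~1.8.1 of an information-theory textbook and uses it as a black box, so there is no in-paper argument to compare against.
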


\begin{myLemma}[Theorem 1.8.2 of~\citet{Book:information-theory}]
  \label{lem:KL-divergence-gaussian}
  The Kullback-Leibler divergence between two $d$-dimensional Gaussian distributions $P = \N(\u_p,\Sigma_P)$ and $Q = \N(\u_q,\Sigma_q)$ is given by 
  \begin{align*}
    \mathrm{KL}(Q \Vert P) = \frac{1}{2}\left(\ln \left(\frac{\vert\Sigma_p\vert}{\vert\Sigma_q\vert}\right) + \mathrm{Tr}(\Sigma_q \Sigma_p^{-1}) + \norm{\u_p-\u_q}^2_{\Sigma_p^{-1}} - d\right).
  \end{align*}
\end{myLemma}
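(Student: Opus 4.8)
The plan is to evaluate $\mathrm{KL}(Q \Vert P) = \E_{\u \sim Q}[\ln Q(\u) - \ln P(\u)]$ directly from the multivariate Gaussian density formulas and then simplify using standard Gaussian moment identities. First I would write the log-density of $\N(\u_q,\Sigma_q)$ and of $\N(\u_p,\Sigma_p)$ explicitly; the additive $-\tfrac{d}{2}\ln(2\pi)$ normalizers cancel in the difference, leaving
\[
\ln\frac{Q(\u)}{P(\u)} = \tfrac12\ln\frac{\vert\Sigma_p\vert}{\vert\Sigma_q\vert} - \tfrac12(\u-\u_q)^\top\Sigma_q^{-1}(\u-\u_q) + \tfrac12(\u-\u_p)^\top\Sigma_p^{-1}(\u-\u_p).
\]

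Next I would take the expectation of each of the three terms under $\u \sim Q = \N(\u_q,\Sigma_q)$. The first term is deterministic and passes through unchanged. For the second, the trace identity $(\u-\u_q)^\top\Sigma_q^{-1}(\u-\u_q) = \mathrm{Tr}\big(\Sigma_q^{-1}(\u-\u_q)(\u-\u_q)^\top\big)$ combined with $\E_{\u\sim Q}[(\u-\u_q)(\u-\u_q)^\top] = \Sigma_q$ and linearity of trace/expectation yields $\mathrm{Tr}(I_d) = d$. For the third term I would decompose $\u-\u_p = (\u-\u_q) + (\u_q-\u_p)$, expand the quadratic form into three pieces, and note that the cross piece $2(\u_q-\u_p)^\top\Sigma_p^{-1}(\u-\u_q)$ has zero expectation since $\E_{\u\sim Q}[\u-\u_q] = \mathbf{0}$; the remaining pieces evaluate, again by the trace identity, to $\mathrm{Tr}(\Sigma_p^{-1}\Sigma_q)$ and to the constant $\norm{\u_q-\u_p}^2_{\Sigma_p^{-1}}$.

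Collecting the three contributions gives $\mathrm{KL}(Q \Vert P) = \tfrac12\big(\ln(\vert\Sigma_p\vert/\vert\Sigma_q\vert) - d + \mathrm{Tr}(\Sigma_p^{-1}\Sigma_q) + \norm{\u_q-\u_p}^2_{\Sigma_p^{-1}}\big)$, which is exactly the claimed expression after using the cyclicity $\mathrm{Tr}(\Sigma_p^{-1}\Sigma_q) = \mathrm{Tr}(\Sigma_q\Sigma_p^{-1})$ and the symmetry of the Mahalanobis norm in its two arguments. There is no real obstacle: the only point requiring a little care is the bookkeeping in the expectation step — making sure the mean-matched quadratic contributes exactly $d$ (and not something involving $\Sigma_p$) and that the mismatched-mean quadratic splits cleanly into one trace term plus one Mahalanobis term — after which everything is a routine Gaussian integral. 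As a sanity check one can specialize to the scalar case or verify that the right-hand side vanishes when $P = Q$.
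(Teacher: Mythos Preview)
Your proposal is correct and follows the standard textbook derivation. The paper itself does not prove this lemma; it is stated with a citation to an external reference (Theorem~1.8.2 of \citet{Book:information-theory}) and used as a black-box technical tool, so there is no paper-proof to compare against beyond noting that your direct computation is exactly how such a result is typically established.
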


\begin{myLemma}[Lemma 4.2 of~\citet{book'16:Hazan-OCO}]
  \label{lemma:exp-concave-surrogate-loss}
  Let $\W\subseteq \R^d$ be a convex and closed set. Suppose $\norm{\u-\v}_2\leq D$ holds for any $\u,\v\in\W$ and $\norm{\nabla f(\w)}_2\leq G$ holds for any $\w\in\W$. For any $\eta$-exp-concave loss,  the following holds for all $\gamma \leq \min\{1/(8GD), \eta/2\}$ and $\w,\u\in\W$:
  \begin{align*}
     f(\w) - f(\u)\leq \nabla f(\w)^\top(\w-\u) + {\gamma}\big(\nabla f(\w)^\top(\w-\u)\big)^2.
  \end{align*}
  \end{myLemma}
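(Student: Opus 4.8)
The plan is to derive, directly from exp-concavity, the sharper \emph{tangent-paraboloid} bound $f(\w)-f(\u)\le \nabla f(\w)^\top(\w-\u)-\tfrac{\gamma}{2}\big(\nabla f(\w)^\top(\w-\u)\big)^2$; the stated inequality is then immediate because $-\tfrac{\gamma}{2}t^{2}\le \gamma t^{2}$ for every $t\in\R$. The first step is to lower the exp-concavity coefficient: since $f$ is $\eta$-exp-concave on $\W$ and $2\gamma\le\eta$, the map $g(\x):=e^{-2\gamma f(\x)}=\big(e^{-\eta f(\x)}\big)^{2\gamma/\eta}$ is again concave on $\W$, because $t\mapsto t^{2\gamma/\eta}$ is concave and nondecreasing on $[0,\infty)$ and a concave nondecreasing map composed with a concave map is concave. (This is the exp-concavity analogue of the downward monotonicity of mixability noted after Definition~\ref{def:mixability}.)

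Next I would write the first-order (super-gradient) inequality of the concave function $g$ at the point $\w$: for every $\u\in\W$, $g(\u)\le g(\w)+\inner{\nabla g(\w)}{\u-\w}$, where $\nabla g(\w)=-2\gamma\,g(\w)\,\nabla f(\w)$. Dividing through by $g(\w)>0$ and abbreviating $a:=\nabla f(\w)^\top(\w-\u)$ turns this into $e^{2\gamma(f(\w)-f(\u))}\le 1+2\gamma a$; note the right-hand side is automatically positive since the left-hand side is. Taking logarithms gives $f(\w)-f(\u)\le\tfrac{1}{2\gamma}\ln(1+2\gamma a)$.

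The remaining step is an elementary estimate on the logarithm. By Cauchy--Schwarz, $\abs{a}\le\norm{\nabla f(\w)}_2\,\norm{\w-\u}_2\le GD$, so $\abs{2\gamma a}\le 2\gamma GD\le \tfrac14$ by the hypothesis $\gamma\le 1/(8GD)$. On $[-\tfrac14,\tfrac14]$ one has $\ln(1+x)\le x-\tfrac14 x^{2}$, which a short convexity argument confirms (the function $x-\tfrac14 x^2-\ln(1+x)$ vanishes together with its derivative at $x=0$ and has second derivative $(1+x)^{-2}-\tfrac12\ge0$ on that interval). Applying this with $x=2\gamma a$ yields $\ln(1+2\gamma a)\le 2\gamma a-\gamma^{2}a^{2}$, hence $f(\w)-f(\u)\le a-\tfrac{\gamma}{2}a^{2}\le a+\gamma a^{2}$, which is the claimed bound.

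The argument is essentially routine; there is no deep obstacle, but two points need care. First, one must keep the argument of the logarithm inside the range $[\tfrac34,\tfrac54]$ where the quadratic refinement of $\ln(1+x)\le x$ is valid — this is exactly what the condition $\gamma\le 1/(8GD)$ guarantees — and one must correctly justify the reduction to $2\gamma$-exp-concavity, which is why $\gamma\le\eta/2$ is imposed. Second, I would remark that the stronger intermediate bound $f(\w)-f(\u)\le\nabla f(\w)^\top(\w-\u)-\tfrac{\gamma}{2}\big(\nabla f(\w)^\top(\w-\u)\big)^2$ is precisely the form used when verifying $f_t(\w_t)-f_t(\u)\le\tilde f_t(\w_t)-\tilde f_t(\u)$ for the surrogate loss~\eqref{eq:surrogate-loss} in Section~\ref{sec:general-OCO}.
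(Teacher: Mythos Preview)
The paper does not supply its own proof of this lemma; it merely cites \citet[Lemma~4.2]{book'16:Hazan-OCO} in the technical appendix. Your argument is precisely the standard proof from that reference: pass to the smaller exp-concavity parameter $2\gamma\le\eta$ so that $e^{-2\gamma f}$ is concave, write its first-order super-gradient inequality at $\w$, divide by $e^{-2\gamma f(\w)}$, take logarithms, and use the quadratic refinement $\ln(1+x)\le x-\tfrac14 x^2$ valid on $[-\tfrac14,\tfrac14]$ (guaranteed by $\gamma\le 1/(8GD)$). All steps are correct.

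Your observation at the end is worth emphasizing: the intermediate bound $f(\w)-f(\u)\le a-\tfrac{\gamma}{2}a^{2}$ is in fact the form stated in Hazan's book and is exactly what the paper needs in Section~\ref{sec:general-OCO} to justify $f_t(\w_t)-f_t(\u)\le\tilde f_t(\w_t)-\tilde f_t(\u)$, since $\tilde f_t(\w_t)-\tilde f_t(\u)=a-\tfrac{\gamma}{2}a^{2}$ with $a=\g_t^\top(\w_t-\u)$. The version with $+\gamma a^{2}$ recorded in the appendix is a (harmless) weakening.
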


  \begin{myLemma}[Lemma 10 of~\citet{NIPS'16:MetaGrad}]
    \label{lem:Gaussian-exp-concavity}
    Let $0< \eta \leq \frac{1}{5GD}$. Consider a Gaussian distribution with mean $\bm{\mu}\in\W$ and arbitrary symmetric positive-definite covariance $\Sigma$, where $\W\subseteq \R^d$ is a convex and compact set. Let $\ell^\eta_t(\w) = - \eta \inner{\g_t}{\w_t - \w} + \eta^2 (\w_t-\w)^\top \g_t\g_t^\top(\w_t-\w)$. Then, 
    \begin{align*}
      \E_{\w\sim\N(\bm{\mu},\Sigma)}\big[\exp(-\ell_t^{\eta}(\w))\big] \leq \exp(-\ell_t^{\eta}(\bm{\mu})),
    \end{align*}
    where the domain $\W$ is bounded by $D$ such that $\Vert \w-\w'\Vert_2 \leq D$ for all $\w\in\W$ and $\norm{\g_t}_2\leq G$.
  \end{myLemma}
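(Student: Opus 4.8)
The plan is to reduce the claim to a one‑dimensional Gaussian computation. Although $\w$ ranges over $\R^d$, the surrogate $\ell_t^\eta$ depends on $\w$ only through the scalar $S := \g_t^\top(\w_t - \w)$, since $(\w_t - \w)^\top\g_t\g_t^\top(\w_t - \w) = S^2$ and $\inner{\g_t}{\w_t - \w} = S$, so that $-\ell_t^\eta(\w) = \eta S - \eta^2 S^2$. When $\w \sim \N(\bm{\mu},\Sigma)$, the variable $S$ is a one‑dimensional Gaussian with mean $s_0 := \g_t^\top(\w_t - \bm{\mu})$ and variance $v := \g_t^\top\Sigma\g_t \ge 0$, and $s_0$ is precisely the value of $S$ at $\w = \bm{\mu}$, so $e^{-\ell_t^\eta(\bm{\mu})} = e^{\eta s_0 - \eta^2 s_0^2}$. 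Thus it suffices to prove $\E_{S\sim\N(s_0,v)}\big[e^{\eta S - \eta^2 S^2}\big] \le e^{\eta s_0 - \eta^2 s_0^2}$.

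First I would evaluate the Gaussian integral in closed form. The integrand decays like $e^{-\eta^2 S^2}$, so integrability holds for every $v \ge 0$, and completing the square gives
\[
\E_{S\sim\N(s_0,v)}\big[e^{\eta S - \eta^2 S^2}\big] = \frac{1}{\sqrt{1+2\eta^2 v}}\exp\!\left(\frac{\eta s_0 - \eta^2 s_0^2}{1+2\eta^2 v} + \frac{\eta^2 v}{2(1+2\eta^2 v)}\right).
\]
Writing $x := 2\eta^2 v \ge 0$ and $m := \eta s_0 - \eta^2 s_0^2$, taking logarithms and rearranging shows that the target inequality is equivalent to $\frac{x}{1+x}\big(\tfrac14 - m\big) \le \tfrac12\ln(1+x)$.

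Next I would bound $m$ using the boundedness hypotheses. Since $\bm{\mu}$ and $\w_t$ both lie in $\W$, which has diameter at most $D$, Cauchy--Schwarz together with $\norm{\g_t}_2 \le G$ gives $|s_0| \le GD$, and with $\eta \le 1/(5GD)$ this yields $|\eta s_0| \le 1/5$; hence $m = \eta s_0(1 - \eta s_0) \in [-\tfrac{6}{25}, \tfrac{4}{25}]$, so $0 < \tfrac14 - m \le \tfrac{49}{100} < \tfrac12$. It therefore remains only to establish the elementary inequality $\frac{x}{1+x} \le \ln(1+x)$ for all $x \ge 0$, which holds because both sides vanish at $x=0$ and their difference has nonnegative derivative $\frac{x}{(1+x)^2}$. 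Chaining $\frac{x}{1+x}\big(\tfrac14 - m\big) \le \tfrac12\cdot\frac{x}{1+x} \le \tfrac12\ln(1+x)$ finishes the proof.

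I expect the only real work to be the bookkeeping in the closed‑form Gaussian integral and the algebraic reduction to the scalar inequality; once that is in place, the numerical slack in the hypothesis $\eta \le 1/(5GD)$ makes the final step go through comfortably. It is worth noting that $\Sigma$ — hence $v$ and $x$ — is arbitrary, but this causes no trouble, since the scalar inequality holds for every $x \ge 0$ (and the degenerate case $\g_t = 0$ reduces both sides to $1$).
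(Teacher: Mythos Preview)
Your proof is correct. The paper itself does not give a proof of this lemma; it simply cites it as Lemma~10 of \citet{NIPS'16:MetaGrad} and uses it as a black box. Your approach---reducing to the one-dimensional Gaussian $S=\g_t^\top(\w_t-\w)$, computing the moment-generating-type integral in closed form, and then reducing to the elementary scalar inequality $\tfrac{x}{1+x}\le\ln(1+x)$ with the numerical slack from $\eta\le 1/(5GD)$---is exactly the standard route and matches what the original MetaGrad argument does. The bookkeeping (the closed-form expression, the range $m\in[-6/25,4/25]$ so that $\tfrac14-m<\tfrac12$, and the derivative check) is all correct.
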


\end{document}